\algrenewcommand{\algorithmiccomment}[1]{$\vartriangleright$ #1}
\algrenewcommand{\algorithmicreturn}{\textbf{Return: }}
\algnewcommand\algorithmicinput{\textbf{Input: }}
\algnewcommand\Input{\State \algorithmicinput}
\renewcommand{\sectionautorefname}{\S\@gobble}%
\renewcommand{\subsectionautorefname}{\S\@gobble}%
\renewcommand{\subsubsectionautorefname}{\S\@gobble}%
\newlength{\offsetpage}
\newtheorem{theorem}{Theorem}
\newtheorem{definition}{Definition}
\newtheorem{corollary}{Corollary}
\newtheorem{lemma}{Lemma}
\newtheorem{remark}{Remark}
\newtheorem{assumption}{Assumption}
\newenvironment{subassumption}[1]{%
  \subassumptioninner
}{\endsubassumptioninner}
\newcommand{\NN}{\mathbb{N}}
\newcommand{\RR}{\mathbb{R}}
\newcommand{\EE}{\mathbb{E}}
\newcommand{\Aa}{\mathcal{A}}
\newcommand{\Nn}{\mathcal{N}}
\newcommand{\Oo}{\mathcal{O}}
\newcommand{\Pp}{\mathcal{P}}
\renewcommand{\phi}{\varphi}
\newcommand{\Simplex}{\triangle}
\newcommand{\errnash}{\textrm{Err}_N}
\newcommand{\full}{\textnormal{full}}
\newcommand{\cyc}{\textnormal{cyc}}
\newcommand{\rdm}{\textnormal{rand}}
\newcommand{\E}[1]{\mathbb{E}\left[#1\right] }
\newcommand{\foralls}{\forall \,}
\renewcommand{\leq}{\leqslant}
\renewcommand{\geq}{\geqslant}
\renewcommand{\epsilon}{\varepsilon}
\renewcommand{\imath}{\mathrm{i}}
\newcommand{\eqdef}{\triangleq}
\DeclareMathOperator*{\argmin}{argmin}
\DeclareMathOperator*{\interi}{int}
\DeclarePairedDelimiter{\ceil}{\lceil}{\rceil}
\DeclarePairedDelimiter{\floor}{\lfloor}{\rfloor}
\newlength{\restsubwidth}
\newlength{\restsubheight}
\newlength{\restsubmoreheight}
\newcommand{\rest}[2]{%
        \settowidth{\restsubwidth}{\ensuremath{#2}}
        \settoheight{\restsubheight}{\ensuremath{{}_{#2}}}
        \ensuremath{{#1\hskip 0.5pt}_{\vrule\kern2pt\parbox[b][%
        4pt][b]{\the\restsubwidth}{%
                        \ensuremath{{}_{#2}}}}}
        }
\icmltitlerunning{Extra-gradient with player sampling}
\begin{document}

\twocolumn[
\icmltitle{Extragradient with player sampling for faster Nash equilibrium finding}

% It is OKAY to include author information, even for blind
% submissions: the style file will automatically remove it for you
% unless you've provided the [accepted] option to the icml2020
% package.

% List of affiliations: The first argument should be a (short)
% identifier you will use later to specify author affiliations
% Academic affiliations should list Department, University, City, Region, Country
% Industry affiliations should list Company, City, Region, Country

% You can specify symbols, otherwise they are numbered in order.
% Ideally, you should not use this facility. Affiliations will be numbered
% in order of appearance and this is the preferred way.
\icmlsetsymbol{equal}{*}

\begin{icmlauthorlist}
\icmlauthor{Samy Jelassi}{equal,princeton}
\icmlauthor{Carles Domingo-Enrich}{equal,nyu}
\icmlauthor{Damien Scieur}{samsung}
\icmlauthor{Arthur Mensch}{ens,nyu}
\icmlauthor{Joan Bruna}{nyu}
\end{icmlauthorlist}

\icmlaffiliation{nyu}{NYU CIMS, New York, USA}
\icmlaffiliation{ens}{ENS, DMA, Paris, France}
\icmlaffiliation{princeton}{Princeton University, USA}
\icmlaffiliation{samsung}{Samsung SAIT AI Lab, Montreal, Canada}

\icmlcorrespondingauthor{Samy Jelassi}{sjelassi@princeton.edu}
\graphicspath{{./figures/}}

% the "keywords" metadata in the PDF but will not be shown in the document
\icmlkeywords{Convex optimisation, game theory, GANs}

\vskip 0.3in
]

% \printAffiliationsAndNotice{}  % leave blank if no need to mention equal contribution
\printAffiliationsAndNotice{\icmlEqualContribution} % otherwise use the standard text.

\begin{abstract}
  Data-driven modeling increasingly requires to find a Nash equilibrium in multi-player games, e.g. when training GANs. In this paper, we analyse a new extra-gradient method for Nash equilibrium finding, that performs gradient extrapolations and updates on a random subset of players at each iteration. This approach provably exhibits a better rate of convergence than full extra-gradient for non-smooth convex games with noisy gradient oracle. We propose an additional variance reduction mechanism to obtain speed-ups in smooth convex games. Our approach makes extrapolation amenable to massive multiplayer settings, and brings empirical speed-ups, in particular when using a heuristic cyclic sampling scheme. Most importantly, it allows to train faster and better GANs and mixtures of GANs.
\end{abstract}

  %!TEX root = article.tex

%Finding the optimal parameters in machine learning models often involves optimizing a single objective function. \arthur{The first sentence can probably be removed.}
% \section{Introduction}
\stepcounter{section}

A growing number of models in machine learning require to optimize over multiple interacting objectives. This is the case of generative adversarial networks \citep{goodfellow2014generative}, imaginative agents \citep{racaniere2017imagination}, hierarchical reinforcement learning \citep{wayne2014hierarchical} and multi-agent reinforcement learning \citep{bu2008comprehensive}. Solving saddle-point problems~\citep[see e.g.,][]{rockafellar_monotone_1970}, that is key in robust learning~\citep{kim_robust_2006} and image reconstruction~\citep{chambolle_firstorder_2011}, also falls in this category. These examples can be cast as games where players are parametrized modules that compete or cooperate
to minimize their own objective functions.

% Optimizing over several objectives is challenging.
To define a principled solution to a multi-objective optimization problem, we may rely on the notion of Nash equilibrium~\citep{nash1951non}. At a Nash equilibrium, no player can improve its objective by unilaterally changing its strategy.
% 
% In general games, finding a Nash equilibrium is known to be PPAD-complete~\citep{daskalakis2009complexity}.
% 
The theoretical section of this paper considers the class of \textit{convex $n$-player games}, for which Nash equilibria exist~\citep{rosen_existence_1965}.
Finding a Nash equilibrium in this setting is equivalent to solving a variational inequality problem (VI) with a monotone operator~\citep{rosen_existence_1965,harker1990finite}. This VI can be solved using first-order methods, that are prevalent in single-objective optimization for machine learning. Stochastic gradient descent (the simplest first-order method) is indeed known to converge to local minima under mild conditions met by ML problems~\citep{bottou2008tradeoffs}. Yet, while gradient descent can be applied simultaneously to different objectives, it may fail in finding a Nash equilibrium in very simple settings \citep[see e.g.,][]{sos,gidel2018variational}.
Two alternative modifications of gradient descent are necessary to solve the VI (hence Nash) problem: \textit{averaging}~\citep{magnanti1997averaging,nedic2009subgradient} or \textit{extrapolation} with averaging. The later was introduced as the \textit{extra-gradient} (EG) method by \citet{korpelevich1976extragradient}); it is faster~\citep{nemirovski2004prox} and can handle noisy gradients \citep{solving}. Extrapolation corresponds to an \textit{opponent shaping} step: each player anticipates its opponents' next moves to update its strategy.

\begin{figure*}[t]
  \centering
  \includegraphics[width=\textwidth]{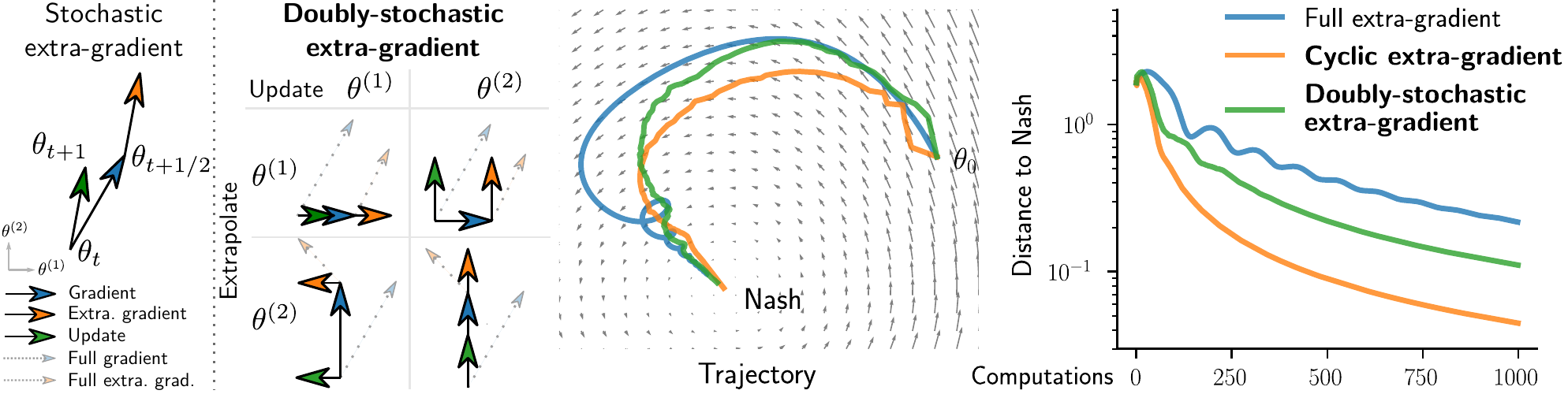}
  \caption{\textit{Left:} We compute masked gradient during the extrapolation and update steps of the extra-gradient algorithm, to perform faster updates. \textit{Right:} Optimization trajectories for doubly stochastic extra-gradient and full-update extra-gradient, on a convex single-parameter two-player convex game. Player sampling improves the expected rate of convergence toward the Nash equilibrium $(0, 0)$.
  % Doubly-stochastic extra-gradient trajectory averaged over 10 runs.
  }\label{fig:abstract}
  \vspace{-.7em}
\end{figure*}

In $n$-player games, extra-gradient computes $2n$ single player gradients before performing a parameter update. Whether in massive or simple two-players games, this may be an inefficient update strategy: early gradient information, computed at the beginning of each iteration, could be used to perform eager updates or extrapolations, similar to how alternated update of each player would behave. Therefore, we introduce and analyse new extra-gradient algorithms that extrapolate and update random or carefully selected subsets of players at each iteration (\autoref{fig:abstract}).
% On a theoretical level, we work with convex games as it forms the only framework where explicit rates can be derived; the low dependance on noise of these rates push us to validate our approach in non-convex settings, with practical applications in GANs.

\begin{itemize}[topsep=0pt,itemsep=0pt,parsep=3pt,leftmargin=15pt]
  \item[--] We review the extra-gradient algorithm for differentiable games and outline its shortcomings (\autoref{subsec:setting}). We propose a doubly-stochastic extra-gradient (DSEG) algorithm (\autoref{subsec:algorithms}) that updates the strategies of a subset of players, thus performing \textit{player sampling}. DSEG performs faster but noisier updates than the original full extra-gradient method (full EG, \cite{solving}), that uses a (once) stochastic gradient oracle. We introduce a variance reduction method to attenuate the noise added by player sampling in smooth games.
  \item[--] We derive convergence rates for DSEG in the convex setting (\autoref{sec:theory}), as summarized in \autoref{tab:results}. Proofs strongly relies on the specific structure of the noise introduced by player sampling. Our rates exhibit a better dependency on gradient noise compared to stochastic extra-gradient, and are thus interesting in the high-noise regime common in machine learning.
  \item[--] Empirically, we first validate that DSEG is faster in massive differentiable convex games with noisy gradient oracles. We further show that non-random player selection improves convergence speed, and provide explanations for this phenomenon. In practical non-convex settings, we find that cyclic player sampling improves the speed and performance of GAN training (CIFAR10, ResNet architecture). The positive effects of extrapolation and alternation combine: DSEG should be used to train GANs, and even more to train \textit{mixtures} of GANs.
\end{itemize}

\begin{table}[bt]
  \caption{
    New and existing \citep{solving} convergence rates for convex games, w.r.t. the number of gradient computations~$k$. Doubly-stochastic extra-gradient (DSEG) \textit{multiplies the noise contribution by a factor $\alpha \triangleq \sqrt{b / n}$}, where $b$ is the number of sampled players among $n$. $G$ bounds the gradient norm. $L$: Lip. constant of losses' gradient. $\sigma^2$ bounds the gradient estimation noise. $\Omega$: diameter of the param. space.}
   \label{tab:results}
  \centering
  \small
  \begin{tabular}{lcc}
    \toprule
    \hspace{-.3cm} $\alpha {\triangleq} \sqrt{b / n}$ & \hspace{-.5cm} Non-smooth  & Smooth \\
    \midrule
    \hspace{-.3cm} \textbf{DSEG} &\hspace{-.7cm}
    $ \Oo\left(n \sqrt{\frac{\Omega}{k}(G^2+\alpha^2\sigma^2)} \right)$ &\hspace{-.3cm} $\Oo\left(\frac{\Omega L n^{3/2}}{\alpha k} + \alpha n \sigma \sqrt{\frac{\Omega}{k}} \right)$
    \\
    \midrule
    \hspace{-.3cm} Full EG &\hspace{-.7cm}
    $ \Oo\left(n \sqrt{\frac{\Omega}{k}(G^2+\sigma^2)} \right)$ & \hspace{-.7cm} $\Oo\left(\frac{\Omega L n^{3/2}}{k} + n \sigma \sqrt{\frac{\Omega}{k}} \right)$ \\
    \bottomrule
  \end{tabular}
\end{table}
 
\section{Related work}\label{subsec:related_work}

\paragraph{Extra-gradient method.}
In this paper, we focus on finding the Nash equilibrium in convex $n$-player games, or equivalently the Variational Inequality problem~\citep{harker1990finite,nemirovski2010accuracy}.
This can be done using extrapolated gradient \citep{korpelevich1976extragradient}, a ``cautious'' gradient descent approach that was promoted by \citet{nemirovski2004prox} and \citet{nesterov2007dual}, under the name \textit{mirror-prox}---we review this work in \autoref{subsec:setting}. \citet{solving} propose a stochastic variant of mirror-prox, that assumes access to a noisy gradient oracle. In the convex setting, their results guarantees the convergence of the algorithm we propose, albeit with very slack rates. Our theoretical analysis refines these rates to show the usefulness of player sampling.
Recently, \citet{universal2019bach} described a smoothness-adaptive variant of this algorithm similar to AdaGrad~\citep{duchi_adaptive_2011}, an approach that can be combined with ours.
\citet{yousefian2018stochastic} consider multi-agent games on networks and analyze a stochastic variant of extra-gradient that consists in randomly extrapolating and updating a single player.
Compared to them, we analyse more general player sampling strategies. Moreover, our analysis holds for non-smooth losses, and provides better rates for smooth losses, through variance reduction. We also analyse precisely the reasons why player sampling is useful (see discussion in \autoref{sec:theory}), an original endeavor.
%  Several papers proposed variance-reduced variants for solving stochastic variational inequalities \citep{palaniappan2016stochastic,iusem2017extragradient,reducing2019chavdarova}. However, the variance reduction is applied to the noise arising from the data while we do it on the noise from the sampling of players. 

%\paragraph{Finding Nash equilibria in non-convex settings.}
%A number of algorithms have been proposed in the non-convex setting under restricted assumptions on the game, for example WoLF in two-player two-action games~\citep{bowling2001rational}, policy prediction in two-player two-action bi-matrix games~\citep{zhanglesser}, AWESOME in repeated games~\citep{conitzer2007awesome}, Optimistic Mirror Descent in two-player bilinear zero-sum games~\citep{daskalakis2018training} and Consensus Optimization in two-player zero-sum games~\citep{mescheder2017numerics}.
\paragraph{Extra-gradient in non-convex settings.}

Extra-gradient has been applied in non-convex settings. \citet{optimistic} proves asymptotic convergence results for extra-gradient without averaging in a slightly non-convex case. \citet{gidel2018variational} demonstrate the effectiveness of extra-gradient for GANs. They argue that it allows to escape the potentially chaotic 
behavior of simultaneous gradient updates (examplified by e.g. \citet{cheung_vortices_2019}). Earlier work on GANs propose to replace simultaneous updates with alternated updates, with a comparable improvement \citep{gulrajani_improved_2017}. In \autoref{sec:apps}, we show that alternating player updates while performing opponent extrapolation improves the training speed and quality of GANs.

\paragraph{Opponent shaping and gradient adjustment.}
Extra-gradient can also be understood as an \textit{opponent shaping} method: in the extrapolation step, the player looks one step in the future and anticipates the next moves of his opponents. 
Several recent works proposed algorithms that make use of the opponents' information to converge to an equilibrium \citep{zhanglesser,lola,sos}. In particular, the ``Learning with opponent-learning awareness'' (LOLA) algorithm is known for encouraging cooperation in cooperative games~\citep{lola}. 
Lastly, some recent works proposed algorithms to modify the dynamics of simultaneous gradient descent by adding an adjustment term in order to converge to the Nash equilibrium~\citep{mazumdar2019finding} and avoid oscillations \citep{mechanics,mescheder2017numerics}. One caveat of these works is that they need to estimate the Jacobian of the simultaneous gradient, which may be expensive in large-scale systems or even impossible when dealing with non-smooth losses as we consider in our setting. This is orthogonal to our approach that finds solutions of the original VI problem~\eqref{eq:vip}.

  %!TEX root = article.tex
\section{Solving convex games with partial first-order information}

We review the framework of Cartesian convex games and the extra-gradient method in \autoref{subsec:setting}. Building on these, we propose to augment extra-gradient with player sampling and variance reduction in \autoref{subsec:algorithms}.

\subsection{Solving convex games with gradients}\label{subsec:setting}

In a game, each player observes a loss that depends on the independent parameters of all other players.
\begin{definition}
    A standard $n$-player game is given by a set of $n$ players with parameters $\theta=(\theta^{1},\dots,\theta^{n})\in \Theta\subset\mathbb{R}^d$ where $\Theta$ decomposes into a Cartesian product $\prod_{i=1}^n \Theta^i$. Each player's parameter $\theta^i$ lives in $\Theta^i \subset \RR^{d_i}$. Each player is given a loss function $\ell_i\colon\Theta\rightarrow\mathbb{R}$. 
\end{definition}
For example, generative adversarial network (GAN) training is a standard game between a generator and discriminator that do not share parameters. We make the following assumption over the geometry of losses and constraints, that is the counterpart of the convexity assumption in single-objective optimization.
\begin{assumption}\label{ass:cvx_game}
    The parameter spaces $\Theta_1,\dots, \Theta_n$ are compact, convex and non-empty.
    Each player's loss $\ell_i(\theta^i,\theta^{-i})$ is convex in its parameter $\theta^i$ and concave in $\theta^{-i}$, where $\theta^{-i}$ contains all other players' parameters. Moreover, $\sum_{i=1}^n\ell_i(\theta)$ is convex in $\theta$.
\end{assumption}
\autoref{ass:cvx_game} implies that $\Theta$ has a diameter $\Omega \triangleq \max_{u,z \in \Theta} {\|u-z\|}_2.$ Note that the losses may be non-differentiable. A simple example of Cartesian convex games satisfying~\autoref{ass:cvx_game}, that we will empirically study in \autoref{sec:apps}, are matrix games (e.g., rock-paper-scissors) defined by a positive payoff matrix $A \in \RR^{d \times d}$, with parameters $\theta$ corresponding to $n$ mixed strategies $\theta_i$ lying in the probability simplex $\Simplex^{d_i}$.

\paragraph{Nash equilibria.} Joint solutions to minimizing losses ${(\ell_i)}_i$ are naturally defined as the set of \textit{Nash equilibria}~\citep{nash1951non} of the game. In this setting, we look for equilibria $\theta_\star \in \Theta$ such that
\begin{equation}\label{def:nash}
    \foralls i \in [n], \quad \ell_i(\theta^i_\star,\theta^{-i}_{\star}) = \min_{\theta^i \in \Theta^i} \ell_i(\theta^i,\theta^{-i}_{\star}).
\end{equation}
A Nash equilibrium is a point where no player can benefit by changing his strategy while the other players keep theirs unchanged.  
\autoref{ass:cvx_game} implies the existence of a Nash equilibrium \citep{rosen_existence_1965}. We quantify the inaccuracy of a solution $\theta$ by the \textit{functional Nash error}, also known as the \citet{nikaido_note_1955} function:
\begin{equation}\label{eq:nash_err}
     \errnash(\theta) \triangleq \sum_{i=1}^n \left[ \ell_i(\theta)-\min_{z\in \Theta_i}\ell_i(z,\theta^{-i})\right].
 \end{equation}
This error, computable through convex optimization, quantifies the gain that each player can obtain when deviating alone from the current strategy. In particular, $\errnash(\theta) = 0$ if and only if $\theta$ is a Nash equilibrium; thus $\errnash(\theta)$ constitutes a propose indication of convergence for sequence of iterates seeking a Nash equilibrium. We bound this value in our convergence analysis (see \autoref{sec:theory}). 

\paragraph{First-order methods and extrapolation.}In convex games, as the losses $\ell_i$ are (sub)differentiable, we may solve~\eqref{def:nash} using first-order methods. We assume access to the \textit{simultaneous gradient} of the game
\begin{equation} \label{eq:sim_grad}
    F \triangleq (\nabla_1 \ell_1,\dots,\nabla_n \ell_n)^{\top}\in \mathbb{R}^d,
\end{equation}
where we write $\nabla_i \ell_i \triangleq \nabla_{\theta^i} \ell_i$.
It corresponds to the concatenation of the gradients of each player's loss with respect to its own parameters, and may be noisy. The losses $\ell_i$ may be non-smooth, in which case the gradients $\nabla_i \ell_i$ can be replaced by any subgradients. Simultaneous gradient descent, that explicitly discretizes the flow of the simultaneous gradient may converge slowly---e.g., in matrix games with skew-symmetric payoff and noiseless gradient oracle, convergence of the average iterate demands \textit{decreasing} step-sizes. The extra-gradient method \citep{korpelevich1976extragradient} provides better guarantees \citep{nemirovski2004prox,solving}---e.g., in the previous example, the step-size can remain constant. We build upon this method.
% \citet{nemirovski2004prox} provides convergence results when gradients are exact, and \citet{solving} when gradients are accessed through a noisy oracle.
 
Extra-gradient consists in two steps: first, take a gradient step to go to an extrapolated point. Then use the gradient at the extrapolated point to perform a gradient step from the original point: at iteration~$\tau$,
\begin{equation}\label{eq:extra_grad}
    \begin{split}
      %  \text{At iteration~$\tau$,}\quad
        \text{(extrapolation)}\quad &\theta_{\tau+1/2}=p_{\Theta}[\theta_{\tau}-\gamma_{\tau}F(\theta_{\tau})],\\ 
        \text{(update) }\quad&\theta_{\tau+1} = p_{\Theta}[\theta_{\tau} -\gamma_{\tau}F(\theta_{\tau+1/2})], 
    \end{split}    
\end{equation}
where $p_{\Theta}[\cdot]$ is the Euclidean projection onto the constraint set $\Theta$, i.e. $p_{\Theta}[z] =\argmin_{\theta \in \Theta} \|\theta-z\|_2^2$. This "cautious" approach allows to escape cycling orbits of the simultaneous gradient flow, that may arise around equilibrium points with skew-symmetric Hessians (see \autoref{fig:abstract}). The generalization of extra-gradient to general Banach spaces equipped by a Bregman divergence was introduced as the \textit{mirror-prox} algorithm~\citep{nemirovski2004prox}. The new convergence results of \autoref{sec:theory} extend to the \textit{mirror} setting (see \autoref{sec:miror_prox}). As recalled in \autoref{tab:results}, \citet{solving} provide rates of convergence for the average iterate $\hat \theta_t = \frac{1}{t} \sum_{\tau = 1}^t \theta_\tau$. Those rates are introduced for the equivalent variational inequality (VI) problem, finding
 \begin{equation}\label{eq:vip}
    \theta_{\star} \in \Theta \;\text{such that}\; F(\theta_{\star})^{\top}(\theta-\theta_{\star})\geq 0\;  \foralls \theta \in \Theta,
\end{equation}
where \autoref{ass:cvx_game} ensures that the simultaneous gradient $F$ is a monotone operator (see~\autoref{sec:link_vi_nash} for a review).

\subsection{DSEG: Partial extrapolation and update for extra-gradient}\label{subsec:algorithms}

The proposed algorithms are theoretically analyzed in the convex setting~\autoref{sec:theory}, and empirically validated in convex and non-convex setting in~\autoref{sec:apps}.

\paragraph{Caveats of extra-gradient.} In systems with large number of players, an extra-gradient step may be computationally expensive due to the high number of backward passes necessary for gradient computations. Namely, at each iteration, we are required to compute $2n$ gradients before performing a first update. This is likely to be inefficient, as we could use the first computed gradients to perform a first extrapolation or update. This remains true for games down to two players. In a different setting, stochastic gradient descent~\citep{robbins_stochastic_1951} updates model parameters before observing the whole data, assuming that partial observation is sufficient for progress in the optimization loop. Similarly, in our setting, partial gradient observation should be sufficient to perform extrapolation and updates toward the Nash equilibrium.
\paragraph{Player sampling.}While standard extra-gradient performs at each iteration two passes of player's gradient computation, we therefore compute \textit{doubly-stochastic simultaneous gradient estimates}, where only the gradients of a random subset of players are evaluated. This corresponds to evaluating a simultaneous gradient that is affected by \textit{two} sources of noise. We sample a mini-batch $\mathcal{P}$ of players of size $b \leq n$, and compute the gradients for this mini-batch only. Furthermore,
we assume that the gradients are noisy estimates, e.g., with noise coming from data sampling. We then compute a doubly-stochastic simultaneous gradient estimate $\tilde{F}$ as $\tilde{F} \triangleq (\tilde{F}^{(1)},\dots,\tilde{F}^{(n)})^{\top}\in \mathbb{R}^d$ where
\begin{equation}\label{eq:noisy_sim_grad}
     \tilde{F}^{(i)}(\theta, \Pp) \triangleq 
    \begin{cases}
        \frac{n}{b} \cdot g_i(\theta) & \text{if } i \in \mathcal{P}\\
        0_{d_i} & \text{otherwise}
    \end{cases},
\end{equation}
and $g_i(\theta)$ is a noisy unbiased estimate of $\nabla_i\ell_i(\theta).$ The factor $n/b$ in~\eqref{eq:noisy_sim_grad} ensures that the doubly-stochastic simultaneous gradient estimate is an unbiased estimator of the simultaneous gradient. Doubly-stochastic extra-gradient (DSEG) replaces the full gradients in the update \eqref{eq:extra_grad} by the oracle \eqref{eq:noisy_sim_grad}, as detailed in~\autoref{alg:doubly_stoch}.

\begin{algorithm}[t]
\caption{Doubly-stochastic extra-gradient.} 
\label{alg:doubly_stoch}
\begin{algorithmic}[1]
%\State Initialization:
\State \textbf{Input}: initial point $\theta_{0}\in \mathbb{R}^{d},$  stepsizes $(\gamma_{\tau})_{\tau \in [t]}$, mini-batch size over the players $b\in [n]$.
\State With variance reduction (VR), $R \gets \tilde{F}(\theta_0,[1,n])$ as in~\eqref{eq:noisy_sim_grad}, i.e. the full simultaneous gradient.

%\State Main routine:
\For {$\tau=0,\dots,t$}
        \State Sample mini-batches of players $\Pp$, $\Pp'$.
        \State Compute $\tilde F_{\tau+\frac{1}{2}} = \tilde{F}(\theta_\tau, \Pp)$ using \eqref{eq:noisy_sim_grad} or VR (\autoref{alg:oracle_grad2}).
        \State Extrapolation step: $\theta_{\tau+\frac{1}{2}} \gets p_{\Theta}[ \theta_{\tau}-\gamma_{\tau}%\tilde{F}(\theta_{\tau})
        \tilde{F}_{\tau+\frac{1}{2}}]$.
        %\carles{Remove $n/k$ if included in Equation~\eqref{eq:noisy_sim_grad}.}
        \State Compute
         $\tilde{F}_{\tau+1} = \tilde{F}(\theta_{\tau + \frac{1}{2}}, \Pp')$ using~\eqref{eq:noisy_sim_grad} or VR 
        \State Gradient step: $\theta_{\tau+1} \gets p_{\Theta}[\theta_{\tau}- \gamma_{\tau}\tilde{F}_{\tau+1}]$.

\EndFor
\State \textbf{Return} $\hat{\theta}_t = [ \sum_{\tau=0}^t \gamma_{\tau}]^{-1}\sum_{\tau=0}^t \gamma_{\tau}\theta_{\tau}.$
%\EndProcedure
\end{algorithmic}
\end{algorithm}

\begin{algorithm}[t]
    \caption{Variance reduced estimate of the simultaneous gradient with doubly-stochastic sampling} 
    \label{alg:oracle_grad2}
    \begin{algorithmic}[1]
    
    \State \textbf{Input}: point $\theta\in \mathbb{R}^d$, 
    mini-batch $\Pp$, table of previous gradient estimates $R \in \mathbb{R}^d$.
    \State Compute $\tilde{F}(\theta, \Pp)$ as specified in equation~\eqref{eq:noisy_sim_grad}.
    \For {$i \in \mathcal{P}$}
            \State Compute
            %$\bar {F}^{(i)} \gets \tilde{F}^{(i)}(\theta) - (1 - \frac{b}{n}) R^{(i)}$
            $\bar {F}^{(i)} \gets \tilde{F}^{(i)}(\theta) + (1 - \frac{n}{b}) R^{(i)}$
            \State Update $R^{(i)} \gets \frac{b}{n}\tilde{F}^{(i)}(\theta) = g_i(\theta)$
    \EndFor
    \State For $i \notin \mathcal{P}$, set $\bar {F}^{(i)} \gets R^{(i)}$.
    \State \textbf{Return} estimate $\bar{F} = (\bar{F}^{(1)},...,\bar{F}^{(n)})$, table $R$.
    
    \end{algorithmic}
\end{algorithm}

\paragraph{Variance reduction for player noise.} To obtain faster rates in convex games with smooth losses, we propose to compute a variance-reduced estimate of the gradient oracle \eqref{eq:noisy_sim_grad}. This mitigates the noise due to player sampling. Variance reduction is a technique known to accelerate convergence under smoothness assumptions in similar settings.
While \citet{palaniappan2016stochastic,iusem2017extragradient,reducing2019chavdarova} apply 
variance reduction on the noise coming from the gradient estimates, we apply it to the noise coming from the sampling over the players. We implement %ed  
this idea in \autoref{alg:oracle_grad2}. We keep an estimate of $\nabla_i \ell_i$ for each player in a table $R$, which we use to compute \textit{unbiased} gradient estimates with lower variance, akin to the approach of SAGA~\citep{defazio2014saga} to reduce the variance of data noise.  %\carles{This needs to be adapted.}

\paragraph{Player sampling strategies.} For convergence guarantees to hold, each player must have an equal probability of being sampled (\textit{equiprobable player sampling condition}). Sampling uniformly over $b$-subsets of $[n]$ is a reasonable way to fulfill this condition as all players have probability $p=b/n$ of being chosen.

As a strategy to accelerate convergence, we propose to cycle over the $n(n-1)$ pairs of different players (with $b=1$). At each iteration, we extrapolate the first player of the pair and update the second one. We shuffle the order of pairs once the block has been entirely seen. This scheme bridges extrapolation and alternated gradient descent: for GANs, it corresponds to extrapolate the generator before updating the discriminator, and vice-versa, cyclically. Although its convergence is not guaranteed, cyclic sampling over players is powerful for convex quadratic games (\autoref{sec:quadratic}) and GANs (\autoref{sec:gans}).

% In \autoref{sec:spectral_conv}, we provide a first explanation for this fact, based on studying the spectral radius of recursion operators (echoing recent work on understanding cyclic coordinate descent \citep{li_faster_2018}).

%One faster alternative is to perform importance sampling. Namely, we sample each player $i$ with a probability $p_i$ proportional to the uniform bound of $\|\nabla_i \ell_i\|$.
%This technique achieve faster convergence (see~\autoref{sec:doub_sto_import_samp}) when the gradient bounds for the different losses differ.

  %!TEX root = article.tex

\section{Convergence for convex games}\label{sec:theory}
%\carles{I have made some direct changes without comment, but the old parts are commented.}

We derive new rates for DSEG with random player sampling, improving the analysis of~\citet{solving}.
Player sampling can be seen as an extra source of noise in the gradient oracle. Hence the results of \citeauthor{solving} on stochastic extra-gradient guarantees the convergence of DSEG, as we detail in~\autoref{cor:juditsky_play_sampl}. Unfortunately, the convergence rates in this corollary do not predict any improvement of DSEG over full extra-gradient. Our main theoretical contribution is therefore a refinement of these rates for player-sampling noise. Improvements are obtained both for non-smooth and smooth losses, the latter using the proposed variance reduction approach. 
Our results predict better performance for DSEG in the high-noise regime. Results are stated here in Euclidean spaces for simplicity; they are proven in the more general mirror setting in~\autoref{sec:proofs}. In the analysis, we separately consider the two following assumptions on the losses.

\begin{subassumption}{2a}[Non-smoothness] \label{ass:non-smooth}
For each $i\in [n],$ the loss $\ell_i$ 
has a bounded  subgradient, namely
$\max_{h \in \partial_i \ell_i(\theta)} {\|h\|}_2 \leq G_i$ for all $\theta \in \Theta$.
In this case, we also define the quantity $G=\sqrt{\sum_{i=1}^n G_i^2/n}.$
\end{subassumption}

\stepcounter{assumption}

\begin{subassumption}{2b}[Smoothness] \label{ass:smooth} For each $i\in [n],$ the loss $\ell_i$ is once-differentiable and $L$-smooth, i.e.\ 
%$\|\nabla_i\ell_i(\theta)-\nabla_i\ell_i(\theta')\|_{\star} \leq L\|\theta-\theta'\|,$ 
${\|\nabla_i\ell_i(\theta)-\nabla_i\ell_i(\theta')\|}_2 \leq L{\|\theta-\theta'\|}_2,$
for $\theta,\theta'\in \Theta.$
\end{subassumption}

Similar to \citet{solving, robbins_stochastic_1951}, we assume unbiasedness of the gradient estimate and boundedness of the variance.
\begin{assumption} \label{ass:unbiased_var_bnd}
    For each player $i$, the noisy gradient $g_i$ is unbiased and has bounded variance:
    \begin{equation}\label{eq:unbiased_var_bnd}
        \begin{split}
            \foralls \theta \in \Theta, \quad &\mathbb{E}[g_i(\theta)] = \nabla_i \ell_i(\theta),\\
            &\mathbb{E}[{\|g_i(\theta)-\nabla_i\ell_i(\theta)\|}_2^2] \leq \sigma^2.
        \end{split}
    \end{equation}
\end{assumption}
To compare DSEG to simple stochastic EG, we must take into account the cost of a single iteration, that we assume proportional to the number $b$ of gradients to estimate at each step. We therefore set $k \triangleq 2\,b\,t$ to be the number of gradients estimates computed up to iteration $t$, and re-index the sequence of iterate $(\hat \theta_t)_{t \in \NN}$ as $(\hat \theta_k))_{k \in 2b\NN}$. We give rates with respect to $k$ in the following propositions. 

\subsection[Slack rates]{Slack rates derived from \citeauthor{solving}}

Let us first recall the rates obtained by \citet{solving} with noisy gradients but no player sampling.
 
 \begin{theorem}[Adapted from \citet{solving}] \label{juditsky_thm_grad_comp}
   We consider a convex $n$-player game where \autoref{ass:non-smooth} and \autoref{ass:unbiased_var_bnd} hold. We run \autoref{alg:doubly_stoch} for $t$ iterations without player sampling, thus performing $k = 2\,n\,t$ gradient evaluations. With optimal constant stepsize, the expected Nash error verifies
 \begin{equation} \label{a1juditsky}
     \E{ \errnash(\hat{\theta}_{k}) } \leq 14 n \sqrt{\frac{\Omega}{3k}\left(G^2 + 2\sigma^2 \right)}. 
     %\text{ when for all $\tau \in [t(\chi)]$, }\gamma_{\tau} = \gamma = \sqrt{\frac{4\Omega}{(\sigma^2+4G^2)nt}}.   
 \end{equation}
 Assuming smoothness~(\autoref{ass:smooth}) and optimal stepsize,
 \begin{equation} \label{a2juditsky}
     \E{\errnash(\hat{\theta}_{k}) } \leq \max \left\{ \frac{7 \Omega L n^{3/2}}{k}, 14 n \sqrt{\frac{2\Omega \sigma^2}{3k}}\right\}.
 \end{equation}

 \end{theorem}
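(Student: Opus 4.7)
The plan is to reduce the Nash error bound to the variational inequality (VI) gap bound from \citet{solving} and then substitute the relevant noise quantities for the full-gradient (no player-sampling) oracle.

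First, I would exploit the link between convex games and monotone VIs (\autoref{eq:vip}, reviewed in \autoref{sec:link_vi_nash}). Under \autoref{ass:cvx_game}, $F$ is monotone, and a classical calculation shows that for any $\theta \in \Theta$, convexity-concavity of each $\ell_i$ in its own block implies
\begin{equation}
\errnash(\theta) \;\leq\; \max_{z \in \Theta} F(z)^\top (\theta - z),
\end{equation}
so it suffices to bound the dual VI gap at the averaged iterate $\hat\theta_k$. That is exactly the quantity controlled by Juditsky--Nemirovski--Tauvel's stochastic mirror-prox analysis in the Euclidean specialization (and more generally \autoref{sec:miror_prox}).

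Second, I would instantiate their general bound with the oracle used here. When no player sampling is performed ($b=n$), the doubly-stochastic oracle \eqref{eq:noisy_sim_grad} collapses to the stacked noisy gradient $\tilde F(\theta) = (g_1(\theta),\dots,g_n(\theta))^\top$, whose noise term inherits \autoref{ass:unbiased_var_bnd} blockwise. The quantities Juditsky's bound needs are:
\begin{itemize}
\item under \autoref{ass:non-smooth}, the uniform second-moment bound $\E{\|\tilde F(\theta)\|_2^2} \leq \sum_i (G_i^2 + \sigma^2) = n(G^2 + \sigma^2)$;
\item under \autoref{ass:smooth}, the Lipschitz constant of $F$, which is at most $L\sqrt{n}$ (since $\|F(\theta)-F(\theta')\|_2^2 = \sum_i \|\nabla_i\ell_i(\theta)-\nabla_i\ell_i(\theta')\|_2^2 \leq n L^2 \|\theta-\theta'\|_2^2$), together with the variance bound $\E{\|\tilde F(\theta)-F(\theta)\|_2^2} \leq n\sigma^2$.
\end{itemize}
Plugging these into the standard mirror-prox analysis and optimizing the constant stepsize $\gamma$ yields, for the non-smooth case, an $\Oo(\sqrt{\Omega/t})$ rate in the iteration count $t$, with the noise combined as $G^2+2\sigma^2$ and a multiplicative factor $n$ coming from the dimension factors above; reparameterizing via $k = 2nt$ gives \eqref{a1juditsky}. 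In the smooth case, the standard two-regime bound produces the maximum between a $\Oo(\Omega L \sqrt{n}/t)$ deterministic term and a $\Oo(\sqrt{n\Omega\sigma^2/t})$ stochastic term, which after the change of variable $k=2nt$ becomes \eqref{a2juditsky}.

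The main obstacle I expect is bookkeeping: matching the constants $14$, $7$, $3$ requires carrying through the exact stepsize choice (e.g., $\gamma = \min\{1/(L\sqrt{3n}),\, \sqrt{\Omega/(n\sigma^2 t)}\}$ in the smooth case) and the exact form of the mirror-prox bound (a $\gamma^{-1}\Omega^2 + \gamma$-type trade-off, balanced optimally) rather than deriving new inequalities. A smaller but real technical point is verifying that the Nash error, not merely the VI gap, is controlled: this requires using \emph{concavity} of $\ell_i$ in $\theta^{-i}$ (included in \autoref{ass:cvx_game}) so that the maximization over $z$ in the VI gap dominates the per-player deviation in the definition \eqref{eq:nash_err} of $\errnash$. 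Once these two ingredients are in place, \eqref{a1juditsky} and \eqref{a2juditsky} follow by direct substitution, and no new convergence argument is required beyond \citet{solving}.
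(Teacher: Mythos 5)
Your plan matches the paper's: the theorem is explicitly an adaptation of \citet{solving}, and the machinery the paper relies on---\autoref{cneplemma}, which reduces $\errnash(\hat{\theta}_t)$ to the ergodic linearized gap $\sup_{u}\bigl(\sum_\tau \gamma_\tau\bigr)^{-1}\sum_\tau \langle \gamma_\tau F(\theta_\tau),\theta_\tau-u\rangle$, together with the stepsize optimization of \autoref{equal_terms}---is exactly the instantiation you describe, with the same bookkeeping ($n G^2=\sum_i G_i^2$ for the gradient bound, total variance $n\sigma^2$, Lipschitz constant $L\sqrt{n}$ for $F$, and the reparametrization $k=2nt$). One precision: the pointwise inequality $\errnash(\theta)\leq \max_{z}F(z)^{\top}(\theta-z)$ you invoke as the reduction is not the correct intermediate step (convexity--concavity gives the \emph{primal} gap $\max_z F(\theta)^{\top}(\theta-z)$, and monotonicity runs in the wrong direction to pass to $F(z)$); what is actually bounded is the ergodic linearized gap above, which dominates $\errnash$ of the averaged iterate via Jensen and the concavity in $\theta^{-i}$ that you correctly flag at the end.
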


% Player randomness can be considered in the framework of \citet{solving} by including it in the noisy unbiased estimate $g_i$ (increasing $\sigma^2$ in \autoref{ass:unbiased_var_bnd} accordingly). The next corollary, proven in \autoref{sec:rand_noise}, presents the rates obtained by this approach.   
Player sampling fits within the framework of noisy gradient oracle \eqref{eq:unbiased_var_bnd}, replacing the gradient estimates $(g_i)_{i \in [n]}$ with the
estimates $(\tilde F^{(i)})_{i \in [n]}$ from \eqref{eq:noisy_sim_grad}, and updating the variance $\sigma^2$ accordingly. We thus derive the following corollary.

 \begin{corollary}\label{cor:juditsky_play_sampl}
   We consider a convex $n$-player game where \autoref{ass:non-smooth} and \autoref{ass:unbiased_var_bnd} hold. We run \autoref{alg:doubly_stoch} for $t$ iterations with equiprobable player sampling, thus performing $k = 2\,b\,t$ gradient evaluations. With optimal constant stepsize, the expected Nash error verifies
 \begin{equation}
     \E{ \mathrm{Err}_N(\hat{\theta}_{k}) } \leq \Oo\left(n\sqrt{\frac{\Omega }{k}\left(\frac{n}{b}G^2 + \sigma^2\right)}\right). 
     %\text{ when for all $\tau \in [t(\chi)]$, }\gamma_{\tau} = \gamma = \sqrt{\frac{4\Omega}{(\sigma^2+4G^2)nt}}.   
 \end{equation}
 Assuming smoothness~(\autoref{ass:smooth}) and optimal stepsize,
 \begin{equation}\label{a2juditsky_naive}
     \E{ \errnash(\hat{\theta}_{k}) } \leq \Oo\left(\frac{\Omega L n^{3/2}}{k} + n\sqrt{\frac{\Omega}{k} (\frac{n}{b} L^2 \Omega^2 + \sigma^2)}\right).
 \end{equation}
 \end{corollary}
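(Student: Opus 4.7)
My plan is to recognize the doubly-stochastic oracle $\tilde F$ of \eqref{eq:noisy_sim_grad} as a noisy unbiased oracle for $F$ satisfying \autoref{ass:unbiased_var_bnd} with an inflated variance $\tilde\sigma^2$, and then apply \autoref{juditsky_thm_grad_comp} directly to this oracle. First I would verify unbiasedness: under equiprobable sampling $\mathbb{P}(i\in\Pp)=b/n$ and by independence of $\Pp$ and the gradient noise, $\mathbb{E}\bigl[\tilde F^{(i)}(\theta,\Pp)\bigr] = \tfrac{b}{n}\cdot\tfrac{n}{b}\,\mathbb{E}[g_i(\theta)] = \nabla_i\ell_i(\theta),$ so $\tilde F$ is an unbiased estimator of the simultaneous gradient.

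\textbf{Variance computation.} The core step is bounding $\mathbb{E}\|\tilde F - F\|^2$. Writing $\mu_i = \nabla_i\ell_i$ and $\xi_i = g_i - \mu_i$, I would decompose $\tilde F^{(i)} - \mu_i = \tfrac{n}{b}\,\xi_i\,\mathbf{1}_{i\in\Pp} + \mu_i\bigl(\tfrac{n}{b}\,\mathbf{1}_{i\in\Pp} - 1\bigr).$ The two terms are uncorrelated because $\xi_i$ has mean zero and is independent of $\mathbf{1}_{i\in\Pp}$, so a direct calculation yields $\mathbb{E}\|\tilde F^{(i)}-\mu_i\|^2 = \tfrac{n}{b}\sigma^2 + \bigl(\tfrac{n}{b}-1\bigr)\|\mu_i\|^2.$ Summing over $i$ and bounding $\sum_i\|\mu_i\|^2\leq nG^2$ via \autoref{ass:non-smooth} gives an effective per-player noise variance $\tilde\sigma^2$ of order $\tfrac{n}{b}(G^2+\sigma^2)$.

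\textbf{Plugging in and smooth case.} Substituting $\tilde\sigma^2$ into \eqref{a1juditsky} of \autoref{juditsky_thm_grad_comp} (the deterministic gradient bound $G^2$ is unchanged since $\|F\|^2\leq nG^2$), collecting terms inside the Big-$\Oo$, and re-indexing $t\mapsto k/(2b)$ to reflect the $2b$ gradient evaluations per iteration yields the first claim. For the smooth case, \autoref{ass:non-smooth} is replaced by \autoref{ass:smooth}; $\|\mu_i\|$ is no longer bounded directly, but the Lipschitz property together with the compactness of $\Theta$ gives $\|\mu_i(\theta)-\mu_i(\theta_0)\|\leq L\Omega$ for any reference $\theta_0\in\Theta$, hence $\|\mu_i\|^2 = \Oo(L^2\Omega^2)$ up to an additive constant absorbed into the Big-$\Oo$. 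Plugging this into the variance computation and then into \eqref{a2juditsky} gives the smooth rate.

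\textbf{Main obstacle.} The delicate point is the variance bound: the player-sampling noise is multiplicative in $\|\mu_i\|$ rather than additive, producing the term $(\tfrac{n}{b}-1)\|\mu_i\|^2$. Controlling it requires a uniform a-priori bound on $\|\nabla_i\ell_i\|$, immediate from \autoref{ass:non-smooth} but only indirect via Lipschitzness plus compactness in the smooth case. This coupling between gradient magnitude and sampling noise is exactly what causes the $n/b$ inflation of the gradient term in both rates, explaining why the naive reduction to \autoref{juditsky_thm_grad_comp} cannot predict any improvement of DSEG over full EG; sharper rates must exploit the particular structure of player-sampling noise rather than treating it as a generic oracle perturbation.
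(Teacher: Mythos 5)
Your proposal follows essentially the same route as the paper's own proof (\autoref{sec:rand_noise}): verify unbiasedness of the player-sampled oracle, view it as a generic noisy oracle satisfying \autoref{ass:unbiased_var_bnd} with an inflated variance of order $\tfrac{n}{b}G^2$ (or $\tfrac{n}{b}L^2\Omega^2$ under smoothness, via Lipschitzness plus compactness), and substitute into \autoref{juditsky_thm_grad_comp} with the re-indexing $t = k/(2b)$. Your variance computation is marginally cleaner than the paper's---an exact orthogonal decomposition into data noise and sampling noise, where the paper instead adds and subtracts $g_i$ and applies Young's inequality, losing only constants---and your closing remark that the multiplicative coupling between $\|\nabla_i\ell_i\|$ and the sampling indicator is precisely what prevents this naive reduction from showing any benefit of player sampling is exactly the conclusion the paper draws before moving to the refined analysis.
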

 The proof is in \autoref{sec:rand_noise}. The notation $\Oo(\cdot)$ hides numerical constants. Whether in the smooth or non-smooth case, the upper-bounds from \autoref{cor:juditsky_play_sampl} does not predict any improvement due to player sampling, as the factor before the gradient size $G$ or $L \Omega$ is increased, and the factor before the noise variance $\sigma$ remains constant.

 \subsection{Tighter rates using noise structure}

 Fortunately, a more cautious analysis allows to improve these bounds, by taking into account the noise structure induced by sampling in \eqref{eq:noisy_sim_grad}. We provide a new result in the non-smooth case, proven in \autoref{sec:doub_sto_mirr_prox}.
\begin{theorem}\label{thm:non_smooth}
    We consider a convex $n$-player game where \autoref{ass:non-smooth} and \autoref{ass:unbiased_var_bnd} hold. We run \autoref{alg:doubly_stoch} for $t$ iterations with equiprobable player sampling, thus performing $k = 2\,b\,t$ gradient evaluations. With optimal constant stepsize, the expected Nash error verifies
 \begin{equation} \label{minconst0}
    \begin{split}
        \E{ \errnash(\hat{\theta}_{k}) } \leq \Oo\left(n \sqrt{\frac{\Omega}{k} \left(G^2 + \frac{b}{n}\sigma^2\right)}\right).
     %\text{setting}\quad\gamma \in \Oo\left(\sqrt{\frac{b\Omega}{n\left(nG^2 + b\sigma^2 \right)t(k)}}\right).
    \end{split}
 \end{equation}
%  setting $\gamma \in \Oo\left(b\sqrt{\frac{\Omega}{n\left(nG^2 + b\sigma^2 \right)k}}\right).$
 \end{theorem}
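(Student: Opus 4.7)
My plan is to adapt the stochastic mirror-prox analysis of \citet{solving} to the doubly-stochastic oracle $\tilde F$ of~\eqref{eq:noisy_sim_grad}, keeping the player-sampling noise and the gradient-estimation noise separated so that the $b/n$ improvement on the variance survives in the final rate. The loose bound of \autoref{cor:juditsky_play_sampl} is obtained by plugging $\sigma_\star^2=\E{\|\tilde F-F\|^2}\leq\tfrac{n-b}{b}nG^2+\tfrac{n^2}{b}\sigma^2$ into the generic stochastic mirror-prox theorem; since this variance bound is already larger than full EG's, no improvement follows. The new analysis must therefore avoid lumping the two noise sources together and instead exploit the block structure of the DSEG oracle.

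I first derive the standard extra-gradient telescoping inequality. With $V(u,v)=\tfrac12\|u-v\|^2$, the optimality of the two projection steps produces, for any $z\in\Theta$,
\[
\gamma\langle\tilde F_{\tau+1},\theta_{\tau+1/2}-z\rangle\leq V(z,\theta_\tau)-V(z,\theta_{\tau+1})+\tfrac{\gamma^2}{2}\|\tilde F_{\tau+1}-\tilde F_{\tau+1/2}\|^2-\tfrac12\|\theta_{\tau+1/2}-\theta_\tau\|^2.
\]
Substituting $\tilde F_{\tau+1}=F(\theta_{\tau+1/2})+\xi_{\tau+1}$, summing over $\tau$, telescoping the Bregman differences and applying the convexity inequality granted by \autoref{ass:cvx_game} to pass from $\sum_\tau\gamma\langle F(\theta_{\tau+1/2}),\theta_{\tau+1/2}-z\rangle$ to $\gamma t\,\errnash(\hat\theta_t)$ after a $\sup_z$, the proof reduces to bounding in expectation (I) the quadratic term $\sum_\tau\gamma^2\,\E{\|\tilde F_{\tau+1}-\tilde F_{\tau+1/2}\|^2}$ against the negative contribution $-\tfrac12\|\theta_{\tau+1/2}-\theta_\tau\|^2$, and (II) the noise cross-term $\E{\sup_{z\in\Theta}\sum_\tau\gamma\langle\xi_{\tau+1},\theta_{\tau+1/2}-z\rangle}$.

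The central step, and the main obstacle, is to sharpen (I) and (II) by exploiting the block structure of the noise. I would decompose $\xi_{\tau+1}=\xi^{\mathrm s}_{\tau+1}+\xi^{\mathrm g}_{\tau+1}$, with the sampling component $\xi^{\mathrm s}=\bigl(\tfrac{n}{b}\mathbf 1_{\Pp'_\tau}-I\bigr)F(\theta_{\tau+1/2})$ and the gradient component $\xi^{\mathrm g}=\tfrac{n}{b}\mathbf 1_{\Pp'_\tau}v_{\tau+1}$. Both are conditionally mean zero, mutually uncorrelated, and independent across iterations since fresh mini-batches and fresh gradient noise are drawn at every step; \autoref{ass:non-smooth} gives $\E{\|\xi^{\mathrm s}\|^2}\leq\tfrac{n-b}{b}nG^2$ and \autoref{ass:unbiased_var_bnd} gives $\E{\|\xi^{\mathrm g}\|^2}\leq\tfrac{n^2}{b}\sigma^2$. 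The key observation is that $\xi^{\mathrm g}_{\tau+1}$ is block-sparse (supported on only the $b$ random blocks $\Pp'_\tau$): when the martingale sum $\sum_\tau\gamma\,\xi^{\mathrm g}_{\tau+1}$ is bounded via a careful argument that uses the independence of the sampling indicators across iterations together with the negative quadratic terms $-\tfrac12\|\theta_{\tau+1/2}-\theta_\tau\|^2$ from the telescoping, the effective variance entering the final bound becomes $\Oo(n\sigma^2)$ rather than the $\tfrac{n^2}{b}\sigma^2$ suggested by a direct maximal inequality. The analogous treatment of $\xi^{\mathrm s}$ reproduces the $G^2$ term of full EG up to absolute constants.

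Finally I collect (I) and (II) with the refined bounds, divide by $\gamma t$, and optimize the constant stepsize $\gamma\asymp\Omega/\sqrt{t(nG^2+b\sigma^2)}$ to equalize the bias and noise contributions. This yields $\E{\errnash(\hat\theta_t)}=\Oo(\Omega\sqrt{(nG^2+b\sigma^2)/t})$; substituting $k=2bt$ produces the announced rate $\Oo\bigl(n\sqrt{(\Omega/k)(G^2+(b/n)\sigma^2)}\bigr)$. The extension to the general mirror setting of \autoref{sec:doub_sto_mirr_prox} is obtained by replacing $V$ with the Bregman divergence of the chosen prox-function, which only affects constants.
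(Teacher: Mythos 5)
Your high-level skeleton (the prox-mapping telescoping inequality, the auxiliary sequence for the noise cross-term, the split into a quadratic variance term and a zero-mean linear term) matches the paper's proof in \autoref{sec:doub_sto_mirr_prox}. But the one step that actually carries the theorem --- showing that the gradient-estimation noise contributes only $\Oo(n\sigma^2)$ per iteration instead of $\tfrac{n^2}{b}\sigma^2$ --- is asserted rather than proved, and the mechanism you invoke would not deliver it. With your decomposition $\xi^{\mathrm g}_{\tau+1}=\tfrac{n}{b}\mathbf 1_{\Pp'_\tau}v_{\tau+1}$, the second moment is exactly $\tfrac{n}{b}\sum_i\E{\|v_i\|^2}$, which equals $\tfrac{n^2}{b}\sigma^2$ in the worst case; this is an identity, not a slack upper bound. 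The quadratic term $\sum_\tau\gamma^2\E{\|\Delta_\tau\|_*^2}$ produced by \autoref{lem:lemma4} and \autoref{lem:corollary2} enters the bound at the full value of that second moment. Independence of the sampling indicators across iterations only kills the \emph{linear} cross-term (the analogue of \eqref{eq:unbiased_dotprod_1stproof}); it does not shrink a sum of conditional variances. And the negative terms $-\tfrac12\|\theta_{\tau+1/2}-\theta_\tau\|^2$ are simply discarded in the non-smooth analysis --- they can absorb $L^2\|\theta_{\tau+1/2}-\theta_\tau\|^2$ terms under smoothness, but they cannot absorb oracle noise that is independent of the iterate increments. So "a careful argument" is doing all the work here, and no such argument exists for the decomposition you chose.

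The paper gets the $b/n$ factor from a \emph{different} algebraic split: $F-\tilde F_{\tau+1}=\bigl(I-\tfrac{M_{\tau+1/2}}{p}\bigr)\hat F+(\hat F-F)$, as in \eqref{initial24}. There the data-noise piece $\hat F-F$ is \emph{not} multiplied by $n/b$, so it contributes $2n\sigma^2$, while the entire $1/p$ amplification lands on $\bigl(I-\tfrac{M}{p}\bigr)\hat F$, whose second moment is $\tfrac{1-p}{p}\sum_i\E{\|\hat F^{(i)}\|_*^2}$ and is controlled by the subgradient bound $G$ of \autoref{ass:non-smooth} (together with the parallel computation \eqref{tildeexp} for $\|\tilde F_{\tau+1}-\tilde F_{\tau+1/2}\|_*^2$, which is bounded by $4nG^2/p$ outright). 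That is the structural insight your write-up is missing. A secondary issue: your final optimization is inconsistent --- a per-iteration variance of $nG^2+b\sigma^2$ plugged into $\sqrt{\Omega B/t}$ with $t=k/(2b)$ gives $\sqrt{\Omega(nbG^2+b^2\sigma^2)/k}$, which is off by a factor $\sqrt{n/b}$ from the announced rate; the quantity you need per iteration is $\tfrac{n^2}{b}G^2+n\sigma^2$, exactly what the paper's \eqref{boundstoch} records.
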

Compared to \autoref{cor:juditsky_play_sampl}, we obtain a factor $\sqrt{\frac{b}{n}}$ in front of the noise term $\frac{\sigma}{\sqrt{k}}$, without changing the constant before the gradient size $G$.
 We can thus expect faster convergence with noisy gradients. \eqref{minconst0} is tightest when sampling a single player, i.e. when $b = 1$.
 
A similar improvement can be obtained with smooth losses thanks to the variance reduction technique proposed in \autoref{alg:oracle_grad2}. This is made clear in the following result, proven in \autoref{sec:doub_sto_vr}.

\begin{theorem}\label{thm:VR}
    We consider a convex $n$-player game where \autoref{ass:non-smooth} and \autoref{ass:unbiased_var_bnd} hold. We run \autoref{alg:doubly_stoch} for $t$ iterations with equiprobable player sampling, thus performing $k = 2\,b\,t$ gradient evaluations. \autoref{alg:oracle_grad2} yields gradient estimates. With optimal constant stepsize, the expected Nash error verifies
\begin{equation} \label{vrresult0}
     \E{ \errnash(\hat{\theta}_{k}) } {\leq} \Oo\Big(\sqrt{\frac{n}{b}} \frac{\Omega L n^{3/2}}{k} + \sqrt{\frac{b}{n}} n \sqrt{\frac{\Omega\sigma^2}{k}} \Big).
\end{equation}
% setting $\gamma \in \min\Big\{\Oo\Big(\frac{b^{3/2}}{L n^2}\Big),\Oo\Big(\sqrt{\frac{\Omega}{n\sigma^2 t(k)}}\Big)\Big\}.$
\end{theorem}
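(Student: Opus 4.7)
The plan is a doubly-stochastic extension of the Juditsky--Nemirovski--Tauvel stochastic mirror-prox argument that already underlies \autoref{juditsky_thm_grad_comp}, refined to exploit the SAGA-style memory of \autoref{alg:oracle_grad2}. The two stochastic sources---mini-batch sampling $\Pp, \Pp'$ and oracle noise $g_i - \nabla_i\ell_i$---have to be carried separately throughout, because the whole point of variance reduction is that the sampling noise is absorbed against the memory table while only the oracle noise survives asymptotically.

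The first step is to establish unbiasedness and a refined variance bound for $\bar F$. Writing $\bar F^{(i)}(\theta) = \mathbf{1}_{\{i\in\Pp\}}\bigl[\tfrac{n}{b}g_i(\theta)+(1-\tfrac{n}{b})R^{(i)}\bigr] + \mathbf{1}_{\{i\notin\Pp\}}R^{(i)}$ and taking expectation over $\Pp$ and the independent oracle noise conditional on the history, the control variate $(1-n/b)R^{(i)}$ exactly cancels the bias produced by the $n/b$ rescaling, so $\E{\bar F(\theta)} = F(\theta)$. A direct second-moment computation, using conditional independence of $g_i(\theta)$ from $R^{(i)}=g_i(\theta_\mathrm{old}^{(i)})$, then yields
$$\E{\norm{\bar F(\theta)-F(\theta)}^2} \;\leq\; \tfrac{n-b}{b}\,L^2 \sum_{i=1}^n \norm{\theta-\theta_\mathrm{old}^{(i)}}^2 \;+\; \Oo(n^2/b)\,\sigma^2,$$
where \autoref{ass:smooth} is invoked to pass from $\nabla_i\ell_i(\theta)-\nabla_i\ell_i(\theta_\mathrm{old}^{(i)})$ to $L\norm{\theta-\theta_\mathrm{old}^{(i)}}$. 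This is the core identity of variance reduction: the $\tfrac{n}{b}\sigma^2$ blow-up that drives the proof of \autoref{thm:non_smooth} is replaced by a quantity that shrinks as the iterates stabilise.

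The second step plugs $\bar F$ into the one-step extra-gradient descent inequality. Conditional independence of $\Pp'$ from $\theta_{\tau+1/2}$ kills the linear noise term in expectation, leaving
$$\gamma_\tau \E{\langle F(\theta_{\tau+1/2}),\,\theta_{\tau+1/2}-u\rangle} \;\leq\; \E{\Delta_\tau(u)-\Delta_{\tau+1}(u)} + \gamma_\tau^2\E{\norm{\bar F_{\tau+1}-\bar F_{\tau+1/2}}^2} - \tfrac12\E{\norm{\theta_{\tau+1/2}-\theta_\tau}^2},$$
with $\Delta_\tau(u)=\tfrac12\norm{\theta_\tau-u}^2$. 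The noise term splits into a deterministic piece $\norm{F(\theta_{\tau+1/2})-F(\theta_\tau)}^2 \leq nL^2\norm{\theta_{\tau+1/2}-\theta_\tau}^2$ (absorbed by the negative quadratic if $\gamma_\tau \lesssim 1/(L\sqrt n)$) plus two VR-variance pieces controlled by the identity of Step~1, reducing the analysis to handling $\Phi_\tau \triangleq \sum_i \norm{\theta_\tau-\theta_\mathrm{old}^{(i,\tau)}}^2$. Since each $R^{(i)}$ is refreshed with probability $b/n$ at each of the two sub-iterations, a routine computation gives $\E{\Phi_{\tau+1}} \leq (1-b/n)\E{\Phi_\tau} + c_1\E{\norm{\theta_{\tau+1}-\theta_\tau}^2 + \norm{\theta_{\tau+1/2}-\theta_\tau}^2}$. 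I then form a Lyapunov function $V_\tau = \E{\Delta_\tau(u)} + c\,\gamma^2 L^2 \tfrac{n}{b}\,\E{\Phi_\tau}$; choosing $c$ so that the $\Phi_\tau$ coefficient in the one-step inequality becomes non-positive pins the stepsize constraint to $\gamma \leq c_2\sqrt{b/n}/L$. Telescoping $V_\tau$ over $\tau=0,\dots,t$, using monotonicity of $F$ and the Nash-gap conversion of \autoref{sec:link_vi_nash}, yields for constant stepsize $\E{\errnash(\hat\theta_t)} \lesssim \Omega^2/(\gamma t) + \gamma\, n^2\sigma^2/b$ within the box $\gamma \leq c_2\sqrt{b/n}/L$. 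Balancing both terms (and re-indexing $k=2bt$) reproduces the announced rate.

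The main obstacle is Step~3, the Lyapunov construction. The memory tables $R^{(i)}$ are refreshed \emph{twice} per outer iteration, at two different anchors $\theta_\tau$ and $\theta_{\tau+1/2}$, through independent subsets $\Pp$ and $\Pp'$; getting a clean SAGA-type geometric contraction for $\E{\Phi_\tau}$ through this staggered schedule, and tuning $c$ so that the decay rate $b/n$ exactly matches the memory-lag amplification $\tfrac{n-b}{b}L^2$ in the variance bound, is the delicate point. It is also what enforces the tight stepsize box $\gamma \lesssim \sqrt{b/n}/L$, and therefore produces the distinctive $\sqrt{n/b}$ prefactor in the smooth term of the rate; the extension to the Bregman/mirror setting of \autoref{sec:miror_prox} is notational only, replacing Euclidean squared distances by the Bregman divergence throughout.
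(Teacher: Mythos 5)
Your overall strategy is the paper's: a SAGA-style control variate on the player-sampling noise, a decomposition of the oracle error into a memory-lag term (absorbed by smoothness and the negative $-\tfrac12\norm{\theta_{\tau+1/2}-\theta_\tau}^2$ term under a stepsize box) plus a surviving $\sigma^2$ term, and a final balancing. Your bookkeeping for the memory error (a Lyapunov function $V_\tau$ with a one-step geometric contraction of $\Phi_\tau$) is a legitimate alternative to the paper's, which instead conditions on the last refresh time $K_j$ (geometric with parameter $p=b/n$, \autoref{rm:kj}) and telescopes $\norm{R_\tau^{(i)}-\hat F^{(i)}(\theta_\tau)}^2$ into consecutive iterate differences via the rearrangement inequality (\autoref{lemm:equalities2}), paying a factor $h(p)=(2-p)/p^2$. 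However, there are two places where your numbers do not close. First and most importantly, your Step~1 leaves a surviving oracle-noise variance of order $n^2\sigma^2/b$ per iteration (the $n/b$ rescaling of a sampled gradient amplifies its $\sigma^2$ by $(n/b)^2$, and this happens with probability $b/n$). Carried into your final bound $\Oo(\Omega/(\gamma t)+\gamma n^2\sigma^2/b)$ and balanced with $t=k/(2b)$, this gives a noise term of order
\begin{equation}
  \sqrt{\frac{\Omega n^2\sigma^2}{b\,t}}=\Theta\Big(n\sqrt{\frac{\Omega\sigma^2}{k}}\Big),
\end{equation}
i.e.\ exactly the full-EG noise rate: the factor $\sqrt{b/n}$, which is the entire content of \eqref{vrresult0}, has disappeared. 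The paper's \autoref{lemma7} needs, and asserts, a surviving coefficient of $104\,n\sigma^2$ \emph{independent of $b/n$}; your bound disagrees with this by a factor $n/b$, and you cannot simply declare that ``balancing reproduces the announced rate'' without reconciling the two.

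Second, the stepsize box. The absorption of the memory-lag term is not governed by the naive condition $\gamma\lesssim\sqrt{b/n}/L$: the memory is on average $n/b$ half-iterations stale, and squaring the telescoped sum of increments costs another factor of order $n/b$, so the condition making the coefficient of $\sum_\tau\E{\norm{\theta_\tau-\theta_{\tau+1/2}}^2}$ non-positive is $\gamma\lesssim (b/n)^{3/2}/(L\sqrt{n})$ --- this is the paper's requirement $\chi(p,\gamma)\geq 3/4$ in \eqref{eq:gamma_constraint1}, and it is this constraint, not yours, that produces the smooth term $\Omega/(\gamma t)=\Theta\big(\sqrt{n/b}\,\Omega Ln^{3/2}/k\big)$ of \eqref{vrresult0}. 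With your box $\gamma\leq c_2\sqrt{b/n}/L$ the smooth term would instead come out as $\Theta(\Omega L\sqrt{nb}/k)$, which is inconsistent with your own claim that this box ``produces the distinctive $\sqrt{n/b}$ prefactor''; more seriously, a stepsize that large would not make your Lyapunov recursion contract. Both gaps are quantitative rather than conceptual, but they sit exactly where the theorem's improvement over \autoref{juditsky_thm_grad_comp} is supposed to come from, so the proposal as written does not establish \eqref{vrresult0}.
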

%
%This explains in part the improvement from \autoref{cor:juditsky_play_sampl} to \eqref{vrresult0}.
% The upper-bound~\eqref{vrresult0} should be compared to the bound of full extra-gradient~\eqref{a2juditsky}, that it recovers for $b=n$.
% Owing to variance reduction, player sampling contributes to the gradient term in $1 / k$, instead of contributing to the noise term in $\frac{1}{\sqrt{k}}$. This explains in part the improvement from \autoref{cor:juditsky_play_sampl} to \eqref{vrresult0}. \samy{Moreover, the constant before the gradient smoothness $L$ in our bound is bigger of a factor $\sqrt{\frac{n}{b}}$ compared to the bound in \autoref{cor:juditsky_play_sampl}.
% On the other hand, regarding the noise component of the rate, the constant before the noise term $\sigma$ in our bound is smaller of a factor $\sqrt{\frac{n}{b}}$.} Player sampling is therefore beneficial when the noise term dominates, which is the case whenever the number of iteration is such that

% Owing to variance reduction, player sampling contributes to the gradient term
%  in $1 / k$, instead of contributing to the noise term in $\frac{1}{\sqrt{k}}$. 
%  This explains in part the improvement from \autoref{cor:juditsky_play_sampl} 
%  to \eqref{vrresult0}. 
 The upper-bound~\eqref{vrresult0} should be compared 
 with the bound of full extra-gradient~\eqref{a2juditsky}---that it recovers for $b=n$. 
With player sampling, the constant before the gradient size $L \Omega$ 
 is bigger of a factor $\sqrt{\frac{n}{b}}$.
On the other hand, the constant before the noise term $\sigma$ 
is smaller of a factor $\sqrt{\frac{n}{b}}$. Player sampling is therefore 
beneficial when the noise term dominates, which is the case whenever
 the number of iterations is such that
$k \geq \frac{\Omega L^2 n}{\sigma^2} \left(\frac{n}{b}\right)^2$.
For $k \to \infty$, the bound \eqref{vrresult0} is once again tightest by sampling a random single player.

To sum up, doubly-stochastic extra-gradient convergence is controlled with a better rate than stochastic extra-gradient (EG) with non-smooth losses; 
with smooth losses, DSEG exhibits the same rate structure in $\frac{1}{k} + \frac{1}{\sqrt{k}}$ as stochastic EG, 
with a better dependency on the noise but worse dependency on the gradient smoothness. 
In the high noise regime, or equivalently when demanding high precision results, 
DSEG brings the same improvement of a factor $\sqrt{\frac{b}{n}}$ before the constant $\frac{\sigma}{\sqrt{k}}$, for both smooth and non-smooth problems.

\paragraph{Step-sizes.} The stepsizes of the previous propositions are assumed to be constant and are optimized knowing the geometry of the problem. They are explicit in \autoref{sec:proofs}. 
As in full extra-gradient, convergence can be guaranteed without such knowledge using decreasing step-sizes. In experiments, we perform a grid-search over stepsizes to obtain the best results given a computational budget~$k$.

\begin{figure*}[t]
    \centering
    \subcaptionbox{5-player games with increasingly hard configurations.\label{fig:quadratic_convergence_5}}{\includegraphics[width=\textwidth]{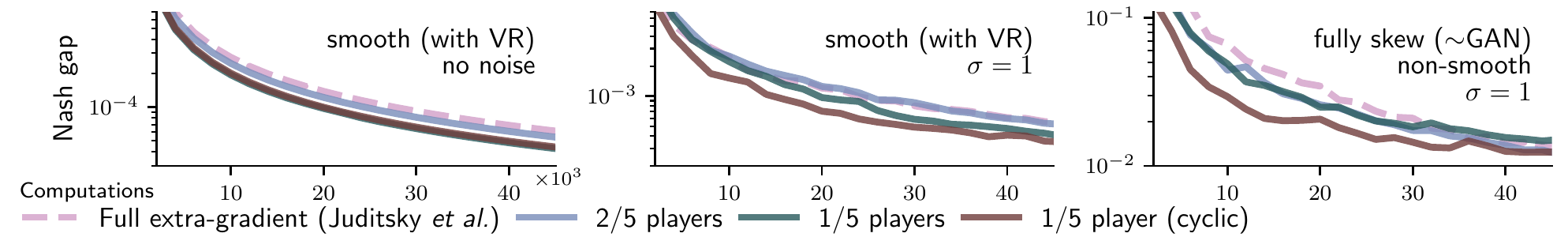}}
    
    \subcaptionbox{50-player games with increasingly hard configurations.\label{fig:quadratic_convergence_50}}{\includegraphics[width=\textwidth]{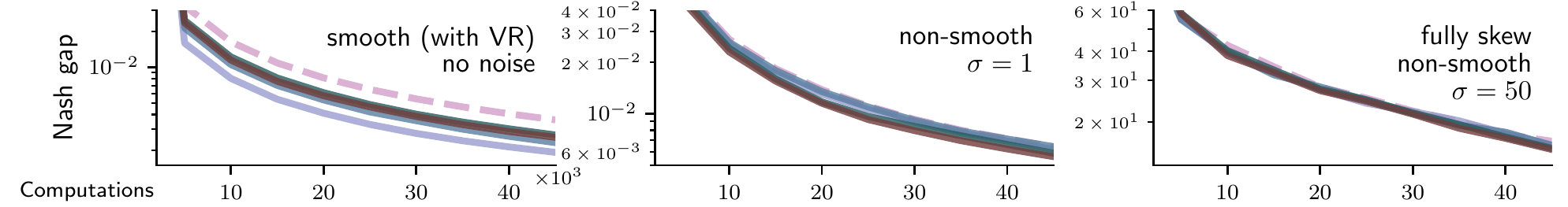}}

    \subcaptionbox{50-player smooth game with increasing noise (sampling with variance reduction).\label{fig:quadratic_convergence_noise}}{\includegraphics[width=\textwidth]{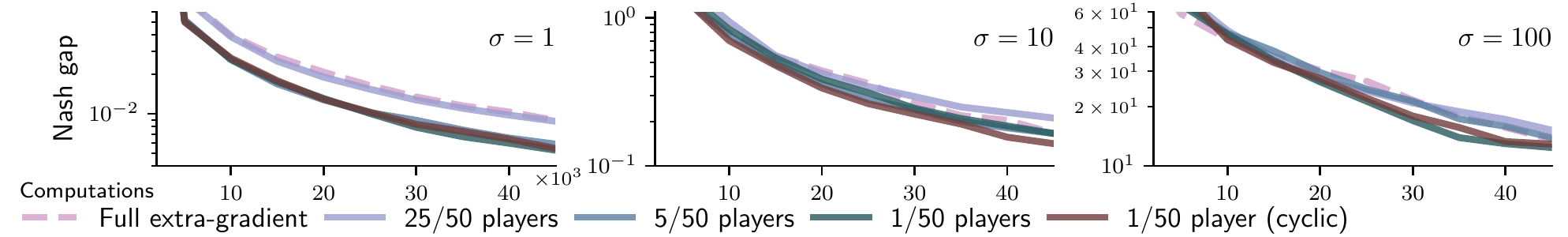}}

    \caption{Player sampled extra-gradient outperform vanilla extra-gradient for small noisy/non-noisy smooth/non-smooth games. Cyclic sampling performs better than random sampling, especially for 5 players (a). Higher sampling ratio is beneficial in high noise regime (c), Curves averaged over 5 games and 5 runs.\vspace{-1em}}\label{fig:quadratic_convergence}
\end{figure*}

\section{Convex and non-convex applications}\label{sec:apps}

We show the performance of doubly-stochastic extra-gradient in the setting of quadratic games, comparing different sampling schemes. We assess the speed and final performance of DSEG in the practical context of GAN training. A \textit{PyTorch}/\textit{Numpy} package is attached.

\subsection{Random convex quadratic games}\label{sec:quadratic}

We consider a game where $n$ players can play $d$ actions, with payoffs provided by a matrix $A \in \RR^{n d \times n d}$, an horizontal stack of matrices $A_i \in \RR^{(d  \times nd)}$ (one for each player). The loss function $\ell_i$ of each player is defined as its expected payoff given the $n$ mixed strategies $(\theta^1,\dots,\theta^n)$, i.e. $ \foralls i \in [n],\; \foralls \theta \in \Theta = \Simplex^{d_1}\times \cdots \times \Simplex^{d_n},$
\begin{equation}
   \ell_i(\theta^{i}, \theta_{-i}) = {\theta^i}^\top A_i \theta + \lambda \Vert \theta^i - \frac{1}{d_i} \Vert_1,
\end{equation}
where $\lambda$ is a regularization parameter that introduces non-smoothness and pushes strategies to snap to the simplex center. 
The positivity of $A$, i.e. $\theta^\top A \theta \geq 0$ for all $\theta \in \Theta$, is equivalent to the convexity of the game.

\begin{figure}[t]
    \includegraphics[width=\linewidth]{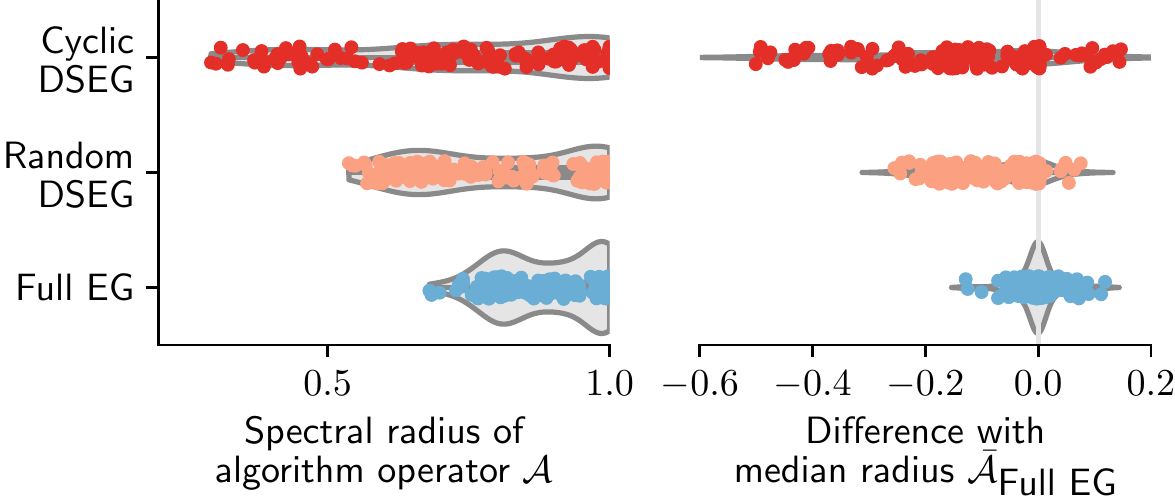}
    \caption{\textit{Left:} Spectral radii of operators for random 2-player matrix games. \textit{Right:} each radius is compared to the median radius obtained for full extra-gradient, within each category of skewness and conditioning of random payoff matrices. Cyclic sampling lowers spectral radii and improve convergence rates.}\label{fig:mean_radius}
\end{figure}

\paragraph{Experiments.}We sample $A$ as the weighted sum of a random symmetric positive definite matrix and a skew matrix. We compare the convergence speeds of extra-gradient algorithms, with or without player sampling. We vary three parameters: the variance $\sigma$ of the noise in the gradient oracle (we add a Gaussian noise on each gradient coordinate), 
the non-smoothness $\lambda$ of the loss, and the skewness of the matrix. We consider small games and large games ($n \in \{5, 50\}$). We use the (simplex-adapted) mirror variant of doubly-stochastic extra-gradient, and a constant stepsize, selected among a grid (see \autoref{app:experiments}). We use variance reduction when $\lambda = 0$ (smooth case). We also consider cyclic sampling in our benchmarks, as described in~\autoref{subsec:algorithms}.

\paragraph{Results.}\autoref{fig:quadratic_convergence} compares the convergence speed of player-sampled extra-gradient for the various settings and sampling schemes. As predicted by \autoref{thm:non_smooth} and \ref{thm:VR}, the regime of convergence in $1 / \sqrt{k}$ in the presence of noise is unchanged with player sampling. DSEG always brings a benefit in the convergence constants (\autoref{fig:quadratic_convergence_5}-\subref{fig:quadratic_convergence_50}), in particular for smooth noisy problems (\autoref{fig:quadratic_convergence_5} center, \autoref{fig:quadratic_convergence_50} left).
Most interestingly, cyclic player selection improves upon random sampling for small number of players (\autoref{fig:quadratic_convergence_5}). 

\autoref{fig:quadratic_convergence_noise} highlights the trade-offs in \autoref{thm:VR}: as the noise increase, the size of player batches should be reduced. Not that for  skew-games with many players (\autoref{fig:quadratic_convergence_50} col. 3), our approach only becomes beneficial in the high-noise regime. As predicted in \autoref{sec:theory}, full EG should be favored with noiseless oracles (see \autoref{app:experiments}).

\begin{figure*}[t]
    \includegraphics[width=\textwidth]{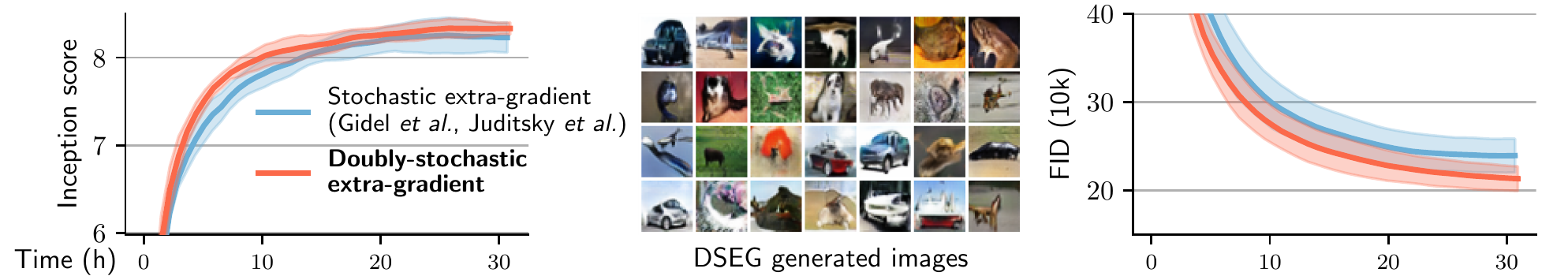}

    \vspace{-.8em}

    \caption{Training curves and samples using doubly-stochastic extragradient on CIFAR10 with WGAN-GP losses, for the best learning rates. Doubly-stochastic extrapolation allows faster and better training, most notably in term of Fréchet Inception Distance (10k). Curves averaged over 5 runs.}\label{fig:gan}
\end{figure*}

\paragraph{Spectral study of sampling schemes.}
The benefit of cyclic sampling can be explained for simple quadratic games.
We consider a two-player quadratic game where $\ell_i(\theta) = {\theta^i}^\top A \theta$ for $i =1, 2$, $\theta = (\theta^1, \theta^2)$ is an unconstrained vector of $\RR^{2 \times d}$, and gradients are noiseless. In this setting, full EG and DSEG expected iterates follows a linear recursion $\EE[\theta_{k+4}] = \Aa(\EE[\theta_{k}])$, where $k$ is the number of gradient evaluation and $\Aa$ is a linear ``algorithm operator'', computable in closed form. A lower spectral radius for $\Aa$ yields a better convergence rate for ${(\EE[\theta_k])}_k$, in light of \citet{gelfand1941normierte} formula---we compare spectral radii across methods.

We sample random payoff matrices $A$ of varying skewness and condition number, and compare the spectral radius $\Aa$ associated to full EG, and DSEG with cyclic and random player selection. As summarized in \autoref{fig:mean_radius}, player sampling reduces the spectral radius of $\Aa$ on average; most interestingly, the reduction is more important using cyclic sampling. Spectral radii are not always in the same order across methods, hinting that sampling can be harmful in the worst cases. Yet cyclic sampling will perform best on average in this (simple) setting. We report details and further figures in~\autoref{sec:spectral_conv}.

\subsection{Generative adversarial networks (GANs)}\label{sec:gans}

We evaluate the performance of the player sampling approach to train a
generative model on \mbox{CIFAR10} \citep{krizhevsky2009learning}. We use the
WGAN-GP loss~\citep{gulrajani_improved_2017}, that defines a non-convex
two-player game. Our theoretical analysis indeed shows a $1 / \sqrt{2}$ speed-up for noisy monotonous 2-player games---the following suggests that speed-up also arises in a non-convex setting. We compare the full stochastic
extra-gradient (SEG) approach advocated by~\citet{gidel2018variational} to the cyclic
sampling scheme proposed in \autoref{subsec:algorithms} (i.e. \textit{extra. D,
upd. G, extra. G, upd. D}). We use the ResNet \citep{he2016deep} architecture
from~\citet{gidel2018variational}, and select the best performing stepsizes
among a grid (see \autoref{app:experiments}). We use the
Adam~\citep{kingma_adam_2014} refinement of
extra-gradient~\citep{gidel2018variational} for both the baseline and proposed
methods. The notion of functional Nash error does not exist in the non-convex
setting. We estimate the convergence speed toward an equilibrium by measuring a
quality criterion for the generator. We therefore evaluate the Inception
Score~\citep{salimans_improved_2016} and Fréchet Inception Distance (FID,
\citet{heusel2017gans} along training, and report their final values.

\paragraph{Results.} We report training curves versus wall-clock time in \autoref{fig:gan}. 
Cyclic sampling allows faster and better training, especially with respect to FID, which is more correlated to human appreciation \citep{heusel2017gans}. 
\autoref{fig:multigan} (right) compares our result to full extra-gradient with uniform averaging. It shows substantial improvements in FID, with results less sensitive to randomness. SEG itself slightly outperforms optimistic mirror descent \citep{gidel2018variational,optimistic}.

\paragraph{Interpretation.}Without extrapolation, alternated training is known to perform better than simultaneous updates in WGAN-GP \citep{gulrajani_improved_2017}. Full extrapolation has been shown to perform similarly to alternated updates \citep{gidel2018variational}. Our approach combine extrapolation with an alternated schedule. It thus performs better than extrapolating with simultaneous updates. It remains true across every learning rate we tested. Echoing our findings of \autoref{sec:quadratic}, deterministic sampling is crucial for performance, as random player selection performs poorly (score 6.2 IS).

\subsection{Mixtures of GANs}

Finally, we consider a simple multi-player GAN setting, akin to
\citet{ghosh_multi-agent_2018}, where $n$ different generators
${(g_{\theta_i})}_i$ seeks to fool $m$ different discriminators ${(f_{\phi_j})}_j$. We
minimize $\sum_j \ell(g_{\theta_i},f_{\phi_j})$ for all $i$, and maximize
$\sum_i \ell(g_{\theta_i},f_{\phi_j})$ for all $j$. Fake data is then sampled
from mixture $\sum_{i=1}^n \delta_{i = J} g_{\theta_i}(\varepsilon)$, where $J$ is
sampled uniformly in $[n]$ and $\varepsilon \sim \Nn(0, I)$. We compare two
methods: (i) SEG extrapolates and updates all
$(g_{\theta_i})_i$, $(f_{\phi_j})_j$ at the same time; (ii) DSEG extrapolates and
updates successive pairs $(g_{\theta_j}, f_{\phi_j})$ alternating the 4-step
updates from \autoref{sec:gans}.

\paragraph{Results.} We compare the training curves of both SEG and DSEG in
\autoref{fig:multigan}, for a range of learning rates. DSEG outperform SEG for
all learning rates; more importantly, higher learning rates can be used for
DSEG, allowing for faster training. DSEG is thus appealing for mixtures of GANs,
that are useful to mitigate mode collapse in generative modeling. We report generated images in Appendix~\ref{app:experiments}.
\begin{figure}[t]
    \begin{minipage}{.5\linewidth}
        \includegraphics[width=\linewidth]{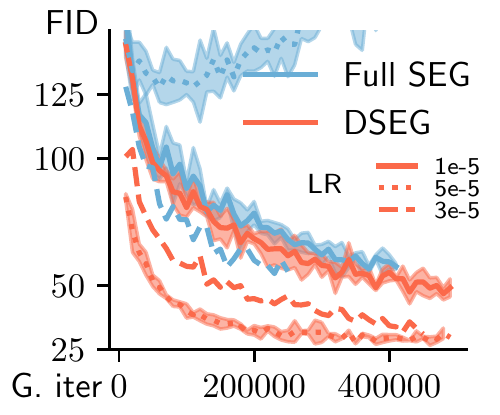}
    \end{minipage}
    \begin{minipage}{.49\linewidth}
        \footnotesize
        \begin{tabular}{lc}
            \toprule
            Method & FID (50k) \\
            \midrule
            SEG & $19.69 \pm 1.53$ \\
            \textbf{DSEG} & $\mathbf{17.10 \pm 1.07}$ \\
            \midrule
             & IS \\
            \midrule
            SEG & $8.26 \pm .16$ \\
            \textbf{DSEG} & $\mathbf{8.38\pm.06}$ \\
            \bottomrule
        \end{tabular}
    \end{minipage}
    \caption{\textit{Left}: Player sampling allows faster training of mixtures of GANs. \textit{Right}: Player sampling trains better ResNet WGAN-GP. FID and IS computed on 50k samples, averaged over 5 runs.}

    \vspace{-1em}

    \label{fig:multigan}
\end{figure}

\section{Conclusion}

  %!TEX root = article.tex
We propose and analyse a doubly-stochastic extra-gradient approach for finding Nash equilibria. According to our convergence results, updating and extrapolating random sets of players in extra-gradient brings speed-up in noisy and non-smooth convex problems. Numerically, doubly-stochastic extra-gradient indeed brings speed-ups in convex settings, especially with noisy gradients. It brings speed-ups and \textit{improve solutions} when training non-convex GANs and mixtures of GANs, thus combining the benefits of alternation and extrapolation in adversarial training. Numerical experiments show the importance of \textit{sampling schemes}. We take a first step towards understanding the good behavior of \textit{cyclic} player sampling through spectral analysis.
We foresee interesting developments using player sampling in reinforcement learning: the policy gradients obtained using multi-agent actor critic methods~\citep{lowe2017multi} are noisy estimates, a setting in which it is beneficial.

% \section{Acknowledgements}
% This work was partially supported by NSF grant RI-IIS 1816753, NSF CAREER CIF
% 1845360, the Alfred P. Sloan Fellowship and Samsung Electronics. The job of C.
% Domingo Enrich was partially supported by CFIS (UPC). The work of A. Mensch
% was supported by the European Research Council (ERC project NORIA).

% A. Mensch thanks Guillaume Garrigos and Timothée Lacroix for helpful comments.
\clearpage

\bibliography{biblio}
\bibliographystyle{icml2020}

\onecolumn

\appendix
\startcontents[sections]

%!TEX root = article.tex
%!TEX root = article.tex

The appendices are structured as follows: \autoref{sec:existing_res} presents the setting and the existing results. In particular, we start by introducing the setting of the mirror-prox algorithm in \autoref{sec:miror_prox} and detail the relation between solving this problem and finding Nash equilibria in convex $n$-player games \autoref{sec:link_vi_nash}. We then present the proofs of our theorems in \autoref{sec:proofs}. We analyze the DSEG algorithm (\autoref{alg:doubly_stoch}) and study its variance-reduction version. \autoref{app:experiments} presents further experimental results and details.
 %\autoref{sec:spectral_conv} investigates the difference betwen random and cyclic player sampling. 
 
\printcontents[sections]{l}{1}{\setcounter{tocdepth}{2}}

\section{Existing results}\label{sec:existing_res}

\subsection{Mirror-prox} \label{sec:miror_prox}

Mirror-prox and mirror descent are the formulation of the extra-gradient method and gradient descent for non-Euclidean (Banach) spaces. \citet{bubeck_monograph} (which is a good reference for this subsection) and \citet{solving} study extra-gradient/mirror-prox in this setting. We provide an introduction to the topic for completeness.

\paragraph{Setting and notations.}
We consider a Banach space $E$ and a %convex
compact set $\Theta \subset E.$ We define an open convex set $\mathcal{D}$ such that $\Theta$ is included in its closure, that is $\Theta\subseteq \bar{\mathcal{D}}$ and $\mathcal{D}\cap \Theta\neq \emptyset.$ 
The Banach space $E$ is characterized by a norm $\|\cdot\|$. Its conjugate norm $\|\cdot\|_*$ is defined as $\|\xi \|_* = \max_{z:\|z\|\leq 1}\langle \xi,z\rangle.$
For simplicity, we assume $E=\mathbb{R}^n$.

We assume the existence of a mirror map for $\Theta$, which is defined as a function $\Phi\colon \mathcal{D}\rightarrow\mathbb{R}$ that is differentiable and $\mu$-strongly convex i.e. 
\begin{equation}
    \forall x,y\in \mathcal{D}, \; \langle \nabla \Phi(x)-\nabla \Phi(y),x-y\rangle \geq \mu \|x-y\|^2.
\end{equation}
We can define the \textit{Bregman divergence} in terms of the mirror map.

\begin{definition}
Given a mirror map $\Phi\colon\mathcal{D}\rightarrow \mathbb{R}$, the Bregman divergence $D: %\text{int } %\interi \mathcal{D} 
\mathcal{D} \times \mathcal{D}
%\text{int } 
\rightarrow \mathbb{R}$ is defined as
\begin{equation}
    D%_{\Phi}
    (x,y) \eqdef \Phi(x) - \Phi(y) - \langle \nabla \Phi(y), x-y \rangle.
\end{equation}
\end{definition}
%\carles{Change $D_{\Phi}$ by $D$, because that is the notation in the proofs.}

Note that $D%_{\Phi}
(\cdot,\cdot)$ is always non-negative. For more properties, see e.g. \citet{nemirovsky1983problem} and references therein. Given that $ \Theta$ is compact convex space, we define $\Omega = \max_{x\in\mathcal{D}\cap \Theta}\Phi(x)-\Phi(x_1).$ Lastly, for $z\in \mathcal{D}$ and $\xi \in E^*,$ we define the prox-mapping as
\begin{equation}\label{eq:prox_mapping}
     P_z(\xi) \eqdef \argmin_{u \in \mathcal{D} \cap \Theta} \{\Phi(u) + \langle \xi - \nabla \Phi(z), u \rangle\} = \argmin_{u \in \mathcal{D} \cap \Theta} \{D%_{\Phi}
     (z,u) + \langle \xi, u \rangle\}.
\end{equation}

The mirror-prox algorithm is the most well-known algorithm to solve convex $n$-player games in the mirror setting (and variational inequalities, see~\autoref{sec:link_vi_nash}). 
An iteration of mirror-prox consists of:
\begin{equation}\label{eq:mirror_prox}
    \begin{split}
     \text{Compute the extrapolated point: }&\begin{cases}
     \nabla \Phi (y_{\tau +1/2}) = \nabla \Phi (\theta_\tau) - \gamma F(\theta_\tau),\\
      \theta_{\tau + 1/2} = \argmin_{x \in \mathcal{D} \cap \Theta} \ D(x, y_{\tau+1/2}),
    \end{cases}\\
    \text{Compute a gradient step: }&\begin{cases}
    \nabla \Phi (y_{\tau +1}) = \nabla \Phi (\theta_\tau) - \gamma F(\theta_{\tau+1/2}),\\
    \theta_{\tau + 1} = \argmin_{x \in \mathcal{D} \cap \Theta} \ D(x, y_{\tau+1}).
    \end{cases}.
    \end{split}
\end{equation}
Remark that the extra-gradient algorithm defined in equation~\eqref{eq:extra_grad} corresponds to the mirror-prox~\eqref{eq:mirror_prox} when choosing $\Phi(x) = \frac{1}{2}\|x\|_2^2.$ 

\begin{lemma}
By using the proximal mapping notation~\eqref{eq:prox_mapping}, the mirror-prox updates are equivalent to: 
\begin{equation}\label{eq:mirror_prox2}
    \begin{split}
     \text{Compute the extrapolated point: }&  \theta_{\tau + 1/2} = P_{\theta_\tau}(\gamma F(\theta_\tau)),\\
    \text{Compute a gradient step: }& \theta_{\tau + 1} = P_{\theta_\tau}(\gamma F(\theta_{\tau+1/2})).
    \end{split}
\end{equation}
\end{lemma}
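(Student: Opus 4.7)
The equivalence is a direct algebraic manipulation based on the definition of the Bregman divergence and the prox-mapping; I will unroll each of the two steps of \eqref{eq:mirror_prox} and match the minimization problem to the second characterization of $P_z(\xi)$ given in \eqref{eq:prox_mapping}. It suffices to treat the extrapolation step, since the gradient step has exactly the same structure with $F(\theta_\tau)$ replaced by $F(\theta_{\tau+1/2})$.

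\textbf{Step 1: Rewrite the inner minimization.} Starting from the extrapolation update in \eqref{eq:mirror_prox}, I will expand
\begin{equation}
D(x, y_{\tau+1/2}) = \Phi(x) - \Phi(y_{\tau+1/2}) - \langle \nabla \Phi(y_{\tau+1/2}), x - y_{\tau+1/2} \rangle,
\end{equation}
and substitute $\nabla \Phi(y_{\tau+1/2}) = \nabla \Phi(\theta_\tau) - \gamma F(\theta_\tau)$. Dropping the terms that do not depend on $x$ (namely $-\Phi(y_{\tau+1/2})$ and $\langle \nabla \Phi(y_{\tau+1/2}), y_{\tau+1/2}\rangle$), the $\argmin$ reduces to
\begin{equation}
\theta_{\tau+1/2} = \argmin_{x \in \mathcal{D} \cap \Theta} \{\Phi(x) + \langle \gamma F(\theta_\tau) - \nabla \Phi(\theta_\tau), x \rangle\}.
\end{equation}

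\textbf{Step 2: Match the prox-mapping definition.} The right-hand side is exactly $P_{\theta_\tau}(\gamma F(\theta_\tau))$ by the first equality in \eqref{eq:prox_mapping}, so the extrapolation step of \eqref{eq:mirror_prox2} follows. Applying the same argument verbatim, with $\theta_{\tau+1/2}$ in place of $\theta_\tau$ inside $F$ but still $\theta_\tau$ as the base point for $\nabla \Phi$, yields the gradient step. Well-definedness of both $\argmin$s is inherited from the strong convexity of $\Phi$ together with compactness of $\mathcal{D} \cap \Theta$.

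\textbf{Expected difficulty.} There is essentially no hard step here: the entire argument is a rearrangement of terms once the definition of $y_{\tau+1/2}$ is plugged in. The only point requiring some care is to make sure that the constant additive terms (those not depending on $x$) are correctly identified so that the two minimization problems share the same minimizer, and that the sign conventions in the definition of $P_z(\xi)$ match those produced by expanding the Bregman divergence.
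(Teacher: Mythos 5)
Your proposal is correct and follows essentially the same route as the paper's own proof: expand the Bregman divergence, substitute the defining relation $\nabla\Phi(y_{\tau+1/2}) = \nabla\Phi(\theta_\tau) - \gamma F(\theta_\tau)$, discard the $x$-independent terms, and recognize the first characterization of $P_{\theta_\tau}(\gamma F(\theta_\tau))$ in \eqref{eq:prox_mapping}. Your handling of the gradient step (same base point $\theta_\tau$, gradient evaluated at $\theta_{\tau+1/2}$) matches the paper's ``the second part is analogous.''
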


\begin{proof}
We just show that $\theta_{\tau + 1/2} = P_{\theta_\tau}(\gamma F(\theta_\tau))$, as the second part is analogous.
\begin{equation*}\label{eq:proof_mp}
    \begin{split}
    \theta_{\tau+1/2} &= \argmin_{x \in \mathcal{D} \cap \Theta} \ D(x, y_{\tau+1/2})\\
    &= \argmin_{x \in \mathcal{D} \cap \Theta} \  \Phi(x) - \langle \nabla \Phi(y_{\tau+1/2}), x \rangle \\ 
    &= \argmin_{x \in \mathcal{D} \cap \Theta} \  \Phi(x) - \langle \nabla \Phi(\theta_{\tau}) - \alpha F(\theta_{\tau}), x \rangle\\
    &= \argmin_{x \in \mathcal{D} \cap \Theta} \ \langle \alpha F(\theta_\tau), x \rangle + D(x,\theta_\tau).\qedhere
    \end{split}
\end{equation*}
\end{proof}

The mirror framework is particularly well-suited for simplex constraints i.e. when the parameter of each player is a probability vector. Such constraints usually arise in matrix games. If $\Theta_i$ is the $d_i$-simplex, we express the negative entropy for player $i$ as
\begin{equation}
    \Phi_i(\theta^{i}) = \sum_{j=1}^{d_i} \theta^{i}(j) \log \theta^{i}(j). 
\end{equation}
We can then define $\mathcal{D}\eqdef \interi \Theta = \interi \Theta_1 \times \cdots \times \interi \Theta_n$ and the mirror map as 
\begin{equation}
    \Phi(\theta) = \sum_{i=1}^n \Phi_i(\theta^{i}).
\end{equation}
We use this mirror map in the experiments for random monotone quadratic games (\autoref{sec:quadratic}).

\subsection{Link between convex games and variational inequalities}\label{sec:link_vi_nash}

As first noted by \citet{rosen_existence_1965}, finding a Nash equilibrium in a convex $n$-player game is related to solving a variational inequality (VI) problem. We consider a space of parameters $\Theta \subseteq \mathbb{R}^d$ that is compact and convex, equipped with the standard scalar product $\langle \cdot, \cdot \rangle$ in $\mathbb{R}^d$.

For convex $n$-player games (\autoref{ass:cvx_game}), the simultaneous (sub)gradient $F$ (Eq.~\ref{eq:sim_grad}) is a monotone operator.

\begin{definition}%[From \cite{nemirovski2010accuracy}]
    An operator $F\colon \Theta \rightarrow \mathbb{R}^d$ is monotone if $\forall \theta,\theta'\in \Theta,\; \langle F(\theta)-F(\theta'),\theta-\theta'\rangle \geq 0.$
\end{definition}

Assuming continuity of the losses $\ell_i$, we then consider the set of solutions to the following vairational inequality problem:
\begin{equation}\label{eq:weakvip}
    \text{Find }\; \theta_{*} \in \Theta \;\text{ such that }\; \langle F(\theta),\theta-\theta_{*} \rangle \geq 0\quad  \forall \theta \in \Theta.
\end{equation}
Under \autoref{ass:cvx_game}, this set coincides with the set of Nash equilibria, and we may solve \eqref{eq:weakvip} instead of \eqref{def:nash}~\citep{rosen_existence_1965,harker1990finite,nemirovski2010accuracy}. \eqref{eq:weakvip} indeed corresponds to the first-order necessary optimality condition applied to the loss of each player.  

The quantity used to quantify the inaccuracy of a solution $\theta$ to \eqref{eq:weakvip}
is the dual VI gap defined as $\mathrm{Err}_{\mathrm{VI}}(\theta) = \max_{u\in \Theta} \langle F(u),\theta-u \rangle.$ However, the \textit{functional Nash error} \eqref{eq:nash_err}, also known as the \citep{nikaido_note_1955} function, is the usual performance measure for convex games. We provide the convergence rates in term of functional Nash error but they also apply to the dual VI gap.

 \clearpage
 \section{Proofs and mirror-setting algorithms} \label{sec:proofs}
 
 We start by proving \autoref{cor:juditsky_play_sampl}, that derives from \citet{solving} (\autoref{sec:rand_noise}). As this result is not instructive, we use the structure of the player sampling noise in~\eqref{eq:noisy_sim_grad}
  to obtain a stronger result in the non-smooth case (\autoref{sec:doub_sto_mirr_prox}). For this, we directly modify the proof of \autoref{juditsky_thm_grad_comp} from \citet{solving}, using a few useful lemmas 
  (\autoref{sec:useful_lem}). We then turn to the smooth case, for which a variance reduction mechanism proves necessary (\autoref{sec:doub_sto_vr}). The proof is original, and builds upon techniques from the variance reduction literature \citep{defazio2014saga}.

 \subsection{Proof of \autoref{cor:juditsky_play_sampl}} \label{sec:rand_noise}
Player sampling noise modifies the variance of the unbiased gradient estimate. Indeed, in equation~\eqref{eq:noisy_sim_grad} $\tilde{F}_i(\theta,\mathcal{P})$ is an unbiased estimate of $\nabla_i \ell_i(\theta)$, and for all $i \in [n]$
 \begin{equation}
    \E{\tilde{F}_i(\theta,\mathcal{P})} = \text{Prob}(i \in \mathcal{P}) \frac{n}{b} \E{g_i(\theta)} = \E{g_i(\theta)} = \nabla_i \ell_i(\theta).
 \end{equation}
 If $g_i$ has variance bounded by $\sigma^2$, we can bound the variance of $\tilde{F}_i(\theta,\mathcal{P})$:
 \begin{align} \label{eq:var_tilde}
    \E{\|\tilde{F}_i(\theta,\mathcal{P})-\nabla_i \ell_i(\theta)\|^2} & = \E{\|\tilde{F}_i(\theta,\mathcal{P})-g_i(\theta) + g_i(\theta) - \nabla_i \ell_i(\theta)\|^2} \\ &\leq 2\E{\|\tilde{F}_i(\theta,\mathcal{P})-g_i(\theta)\|^2} + 2\E{\|g_i(\theta) - \nabla_i \ell_i(\theta)\|^2}\\ &\leq 2\E{\|\tilde{F}_i(\theta,\mathcal{P})-g_i(\theta)\|^2} + 2\sigma^2 \\ &= 2\E{\frac{b}{n}\left\|\left(\frac{n}{b}-1 \right)g_i(\theta)\right\|^2 + \left(1-\frac{b}{n}\right)\left\|g_i(\theta)\right\|^2} + 2\sigma^2 \\ &\leq 2\frac{n-b}{b} \E{\|g_i(\theta)\|^2} + 2\sigma^2\\
    &\leq 2\frac{n-b}{b} G^2 +2 \sigma^2.
 \end{align}
 Substituting $\sigma^2$ by $2\frac{n-b}{b} G^2 +2 \sigma^2$ in equations~\eqref{a1juditsky} and~\eqref{a2juditsky} yields:
 \begin{equation} \label{a1juditskypl}
     \E{ \mathrm{Err}_N(\hat{\theta}_{t(k)}) } \leq 14 n \sqrt{\frac{\Omega}{3k}\left(\frac{4n-3b}{b}G^2 + 2\sigma^2 \right)} = \Oo\left(n\sqrt{\frac{\Omega }{k}\left(\frac{n}{b}G^2 + \sigma^2\right)}\right). 
 \end{equation}
 \begin{align} \label{a2juditskypl}
     \E{ \mathrm{Err}_N(\hat{\theta}_{t(k)}) } &\leq \max \left\{ \frac{7 \Omega L n^{3/2}}{k}, 28 n \sqrt{\frac{\Omega ((\frac{n}{b}-1) G^2 + \sigma^2)}{3k}}\right\}
 \end{align}
 These bounds are worse than the ones in \autoref{juditsky_thm_grad_comp} when $b \ll n$. This motivates the following derivations, that yields~\autoref{thm:non_smooth} and \ref{thm:VR}.

 \subsection{Useful lemmas}\label{sec:useful_lem}
The following two technical lemmas are proven and used in the proof of Theorem 2 of \citet{solving}. %They are stated in the same notation as~\autoref{sec:miror_prox}.
 \begin{lemma}\label{lem:lemma4}
  Let $z$ be a point in $\mathcal{X}$, let $\chi,\eta$ be two points in the dual $E^{*}$, let $w = P_z(\chi)$ and $r_+=P_z(\eta).$ Then,
   \begin{equation}
      \|w-r_+\|\leq \|\chi-\eta\|_*~.
  \end{equation}
  Moreover, for all $u\in E$, one has
  \begin{equation}
      D(u,r_+) - D(u,z) \leq \langle \eta,u-w\rangle + \frac{1}{2}\|\chi-\eta\|_*^2-\frac{1}{2}\|w-z\|^2~.
  \end{equation}
 \end{lemma}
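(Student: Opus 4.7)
The plan is to exploit the first-order optimality conditions that characterize the prox-mappings $w=P_z(\chi)$ and $r_+=P_z(\eta)$ defined through~\eqref{eq:prox_mapping}. Since each is the minimizer of a differentiable convex objective over the convex set $\mathcal{D}\cap\Theta$, the variational inequalities
\[
\langle \nabla\Phi(w)+\chi-\nabla\Phi(z),\,u-w\rangle\geq 0, \qquad \langle \nabla\Phi(r_+)+\eta-\nabla\Phi(z),\,u-r_+\rangle\geq 0
\]
hold for every $u\in \mathcal{D}\cap\Theta$. These two inequalities, together with the strong convexity of $\Phi$ (taken to be $1$-strongly convex, as implicit throughout the paper), are the entire engine of the argument.

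For the first bound, I would plug $u=r_+$ into the optimality condition for $w$, plug $u=w$ into the one for $r_+$, and add the two. The $\nabla\Phi(z)$ terms cancel, and the strong convexity of $\Phi$ upgrades $\langle \nabla\Phi(w)-\nabla\Phi(r_+),w-r_+\rangle$ to $\|w-r_+\|^{2}$. The result is $\langle \chi-\eta, r_+-w\rangle\geq \|w-r_+\|^{2}$, at which point a single application of $\langle a,b\rangle\leq \|a\|_*\|b\|$ yields $\|w-r_+\|\leq \|\chi-\eta\|_*$.

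For the second bound, the key identity is the three-point relation
\[
\langle \nabla\Phi(a)-\nabla\Phi(b),\,c-a\rangle \;=\; D(c,b)-D(c,a)-D(a,b),
\]
which I would first establish directly from the definition of the Bregman divergence. Using it once, I can rewrite $D(u,r_+)-D(u,z) = -D(r_+,z)+\langle \nabla\Phi(z)-\nabla\Phi(r_+),\,u-r_+\rangle$, and then the optimality condition at $r_+$ replaces the inner product by $\langle \eta, u-r_+\rangle$, which I split as $\langle \eta,u-w\rangle+\langle \eta,w-r_+\rangle$. I further decompose the last piece as $\langle \eta-\chi,w-r_+\rangle+\langle \chi,w-r_+\rangle$, and use the optimality condition at $w$ (evaluated at $u=r_+$) together with a second application of the three-point identity to bound $\langle \chi,w-r_+\rangle\leq D(r_+,z)-D(r_+,w)-D(w,z)$. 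The stray $D(r_+,z)$ now cancels the leading $-D(r_+,z)$ exactly, which is the critical cancellation that makes the rest work.

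The main obstacle, and the place where bookkeeping errors are easiest, is the final absorption: after the cancellations I am left with $\langle \eta-\chi,w-r_+\rangle - D(r_+,w) - D(w,z)$, plus the desired $\langle \eta,u-w\rangle$. Applying Hölder to the inner product and the strong-convexity lower bound $D(r_+,w)\geq \tfrac{1}{2}\|r_+-w\|^{2}$ reduces the task to the elementary Young inequality $\|\eta-\chi\|_*\|w-r_+\|-\tfrac{1}{2}\|w-r_+\|^{2}\leq \tfrac{1}{2}\|\eta-\chi\|_*^{2}$. Together with $D(w,z)\geq \tfrac{1}{2}\|w-z\|^{2}$, this produces exactly the claimed $\tfrac{1}{2}\|\chi-\eta\|_*^{2}-\tfrac{1}{2}\|w-z\|^{2}$ residual, completing the lemma.
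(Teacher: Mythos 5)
Your proof is correct and complete. Note that the paper itself does not prove this lemma---it only states that it is ``proven and used in the proof of Theorem 2'' of the cited stochastic mirror-prox reference---and your argument is exactly the standard one from that literature: first-order optimality of the two prox-mappings, the three-point Bregman identity, $1$-strong convexity of the mirror map, then H\"older and Young for the final absorption; all the cancellations you describe (in particular the $D(r_+,z)$ cancellation) go through as claimed. Two cosmetic points worth flagging: the $\tfrac{1}{2}\|\cdot\|^2$ terms in the statement implicitly normalize $\Phi$ to be $1$-strongly convex (the paper only assumes $\mu$-strong convexity), which you correctly note; and the variational inequalities you invoke hold only for $u$ in the feasible set $\mathcal{D}\cap\Theta$, so the lemma's ``for all $u\in E$'' should really be read as ``for all $u$ in the constraint set,'' which is how the lemma is applied downstream.
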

 
 \begin{lemma}\label{lem:corollary2}
  Let $\xi_1,\xi_2,\dots$ be a sequence of elements of $E^{*}$. Define the sequence $\{ y_{\tau}\}_{\tau=0}^{\infty}$ in $\mathcal{X}$ as follows: 
  \[ y_{\tau}=P_{y_{\tau-1}}(\xi_{\tau}).\]
  
  Then $y_{\tau}$ is a measurable function of $y_0$ and $\xi_1,\dots,\xi_{\tau}$ such that: 
  \begin{equation}
      \forall u \in Z, \qquad \bigg\langle \sum_{\tau=1}^t \xi_t, y_{\tau-1}-u \bigg\rangle \leq D(u,y_0) + \frac{1}{2}\sum_{\tau=1}^t \|\xi_{\tau}\|_*^2.
  \end{equation}
 \end{lemma}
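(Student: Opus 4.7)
My plan is to apply Lemma~\ref{lem:lemma4} telescopically with a carefully chosen $\chi$, so that the auxiliary point $w$ collapses to $z = y_{\tau-1}$ and the desired inner product $\langle \xi_\tau, y_{\tau-1} - u \rangle$ appears directly on the right-hand side.

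Concretely, for each step $\tau \geq 1$ I would invoke Lemma~\ref{lem:lemma4} with $z = y_{\tau-1}$, $\eta = \xi_\tau$ (so that $r_+ = P_{y_{\tau-1}}(\xi_\tau) = y_\tau$), and the free variable $\chi = 0$. From the definition of the prox-mapping~\eqref{eq:prox_mapping}, $P_z(0) = \argmin_{u \in \mathcal{D}\cap \Theta} \{\Phi(u) - \langle \nabla \Phi(z), u\rangle\}$, and by strong convexity of $\Phi$ this argmin is exactly $z$ whenever $z \in \mathcal{D}\cap \Theta$. Hence $w = y_{\tau-1}$, and Lemma~\ref{lem:lemma4} yields
\begin{equation}
D(u, y_\tau) - D(u, y_{\tau-1}) \leq \langle \xi_\tau, u - y_{\tau-1}\rangle + \tfrac{1}{2}\|\xi_\tau\|_*^2 - \tfrac{1}{2}\|y_{\tau-1} - y_{\tau-1}\|^2,
\end{equation}
i.e.\ $\langle \xi_\tau, y_{\tau-1} - u\rangle \leq D(u, y_{\tau-1}) - D(u, y_\tau) + \tfrac{1}{2}\|\xi_\tau\|_*^2$.

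Summing this inequality over $\tau = 1,\dots,t$ telescopes the $D$ terms, giving
\begin{equation}
\sum_{\tau=1}^t \langle \xi_\tau, y_{\tau-1} - u\rangle \leq D(u, y_0) - D(u, y_t) + \tfrac{1}{2}\sum_{\tau=1}^t \|\xi_\tau\|_*^2 \leq D(u, y_0) + \tfrac{1}{2}\sum_{\tau=1}^t \|\xi_\tau\|_*^2,
\end{equation}
where the last step uses non-negativity of $D$. The measurability assertion is routine: $y_\tau = P_{y_{\tau-1}}(\xi_\tau)$ is a measurable function of $(y_{\tau-1}, \xi_\tau)$ by continuity of the prox-mapping, so an induction on $\tau$ shows $y_\tau$ is measurable in $(y_0, \xi_1,\dots,\xi_\tau)$.

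The only delicate point I foresee is justifying $P_z(0) = z$ cleanly (which requires $z \in \mathcal{D}\cap\Theta$, itself a consequence of the recursive definition of $(y_\tau)$ as outputs of prox-mappings that land in $\mathcal{D}\cap\Theta$). Once this ``zero-shift'' trick is in place, the whole argument is a one-line telescoping, which is why the two preceding lemmas were stated in this exact form.
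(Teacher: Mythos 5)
Your argument is correct: instantiating \autoref{lem:lemma4} with $z=y_{\tau-1}$, $\chi=0$, $\eta=\xi_\tau$ (so $w=P_{y_{\tau-1}}(0)=y_{\tau-1}$ by strong convexity of $\Phi$, and $r_+=y_\tau$) gives exactly the one-step inequality $\langle \xi_\tau, y_{\tau-1}-u\rangle \leq D(u,y_{\tau-1})-D(u,y_\tau)+\tfrac12\|\xi_\tau\|_*^2$, and telescoping with $D\geq 0$ finishes the proof. The paper does not spell out a proof (it defers to the proof of Theorem 2 of the cited stochastic mirror-prox reference), but your derivation is the standard one and is precisely the intended use of \autoref{lem:lemma4}; your caveat that $y_{\tau-1}\in\mathcal{D}\cap\Theta$ (guaranteed inductively since prox-mappings land in $\mathcal{D}\cap\Theta$) is the right point to flag.
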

 
 %The following lemma provides an upper bound on the Nash functional error $\mathrm{Err}_N.$ The following lemma provides an upper bound on the that the Nash functional error $\mathrm{Err}_N$ have the same upper bounds. 
 The following lemma stems from convexity assumptions on the losses (\autoref{ass:cvx_game}) and is proven as an intermediate development of the proof of Theorem 2 of \citet{solving}.
 
 \begin{lemma} \label{cneplemma}
 We consider a convex $n$-player game with players losses $\ell_i$ where $i\in [n]$. Let a sequence of points $(z_\tau)_{\tau \in [t]} \in \Theta$, the stepsizes $(\gamma_\tau)_{\tau \in [t]} \in (0,\infty)$. We define the average iterate $\hat{z}_\tau = \left[ \sum_{\tau=0}^t \gamma_{\tau}\right]^{-1}\sum_{\tau=0}^t \gamma_{\tau}z_{\tau}$. The functional Nash error evaluated in $\hat{z}_t$ is upper bounded by
 \begin{equation}
     \mathrm{Err}_N(\hat{z}_t) \eqdef \sup_{u \in Z} \sum_{i=1}^n \ell_i(\hat{z}_t)-\ell_i(u^{i},\hat{z}_t^{-i}) \leq \sup_{u \in Z} \left(\sum_{\tau = 0}^t \gamma_\tau\right)^{-1} \sum_{\tau=0}^t \langle \gamma_\tau F(z_\tau), z_\tau - u \rangle.
 \end{equation}
 \end{lemma}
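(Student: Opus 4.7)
\textbf{Proof proposal for \autoref{cneplemma}.}

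My plan is to fix an arbitrary $u \in Z$ and split the difference $\sum_{i=1}^n [\ell_i(\hat z_t) - \ell_i(u^i, \hat z_t^{-i})]$ into two convexity/concavity steps that exploit all three parts of \autoref{ass:cvx_game}. Writing $\lambda_\tau \triangleq \gamma_\tau / \sum_{s=0}^t \gamma_s$, the average iterate decomposes as $\hat z_t = \sum_\tau \lambda_\tau z_\tau$, and since each player's parameter space is Cartesian we also have $\hat z_t^{-i} = \sum_\tau \lambda_\tau z_\tau^{-i}$.

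For the first term I would use the fact that $\sum_{i=1}^n \ell_i$ is convex in $\theta$ (\autoref{ass:cvx_game}) to get $\sum_i \ell_i(\hat z_t) \leq \sum_\tau \lambda_\tau \sum_i \ell_i(z_\tau)$. For the second, since each $\ell_i(u^i, \cdot)$ is concave in its opponent variables $\theta^{-i}$, Jensen's inequality yields $\ell_i(u^i, \hat z_t^{-i}) \geq \sum_\tau \lambda_\tau \ell_i(u^i, z_\tau^{-i})$. Subtracting the second from the first gives
\begin{equation}
\sum_{i=1}^n \bigl[\ell_i(\hat z_t) - \ell_i(u^i, \hat z_t^{-i})\bigr] \leq \sum_{\tau=0}^t \lambda_\tau \sum_{i=1}^n \bigl[\ell_i(z_\tau) - \ell_i(u^i, z_\tau^{-i})\bigr].
\end{equation}

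Next, for each fixed $\tau$ and each $i$, I would apply convexity of $\ell_i(\cdot, z_\tau^{-i})$ in its first argument (again from \autoref{ass:cvx_game}), which gives $\ell_i(z_\tau) - \ell_i(u^i, z_\tau^{-i}) \leq \langle \nabla_i \ell_i(z_\tau), z_\tau^i - u^i \rangle$ (reading $\nabla_i$ as a subgradient in the non-smooth case, using Assumption 2a when needed). Summing over $i$ and recalling that $F = (\nabla_1 \ell_1, \dots, \nabla_n \ell_n)^\top$ and that $\theta - u$ concatenates over coordinates in the Cartesian product, this is exactly $\langle F(z_\tau), z_\tau - u \rangle$. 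Plugging back in and unfolding $\lambda_\tau$ gives
\begin{equation}
\sum_{i=1}^n \bigl[\ell_i(\hat z_t) - \ell_i(u^i, \hat z_t^{-i})\bigr] \leq \Bigl(\sum_{\tau=0}^t \gamma_\tau\Bigr)^{-1} \sum_{\tau=0}^t \langle \gamma_\tau F(z_\tau), z_\tau - u \rangle.
\end{equation}
Taking $\sup_{u \in Z}$ on both sides yields the claim.

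There is no real obstacle here: the argument is a clean three-layer convexity sandwich and the only care needed is to keep track of the Cartesian structure so that concavity in $\theta^{-i}$ and convexity in $\theta^i$ can be invoked independently, and to replace $\nabla_i \ell_i$ by an arbitrary subgradient when losses are merely convex. No projection, stepsize, or noise structure is used, so this lemma is deterministic and serves purely as the bridge between the functional Nash error and the gap $\langle F(z_\tau), z_\tau - u\rangle$ that downstream arguments (e.g.\ \autoref{lem:lemma4}--\autoref{lem:corollary2}) control.
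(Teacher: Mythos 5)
Your proof is correct and follows exactly the argument the paper relies on: the paper states \autoref{cneplemma} without an explicit proof, deferring to the intermediate steps of Theorem~2 of \citet{solving}, and that argument is precisely your three-layer convexity sandwich (Jensen on the convex sum $\sum_i \ell_i$, Jensen on each concave $\ell_i(u^i,\cdot)$, then the subgradient inequality in $\theta^i$ summed into $\langle F(z_\tau), z_\tau - u\rangle$). The only cosmetic remark is that Assumption~2a is not actually needed here --- existence of subgradients follows from convexity alone --- so the lemma is, as you note, purely a deterministic consequence of \autoref{ass:cvx_game}.
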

 
 %\carles{I have removed the proof of this lemma because it appeared in Juditsky.}
The following lemma is a consequence of first-order optimality conditions.
 
 \begin{lemma} \label{equal_terms}
 Let $(\gamma_t)_{t\in \mathbb{N}}$ be a sequence in $(0,\infty)$ and $A, B > 0$. For any $t\in \mathbb{N}$, we define the function~$f_t$ to be
 \begin{equation}
     f_t(\alpha) \eqdef \frac{A}{\sum_{\tau = 0}^t \alpha \gamma_\tau} + \frac{B \sum_{\tau = 0}^t (\alpha \gamma_\tau)^2}{\sum_{\tau = 0}^t \alpha \gamma_\tau}.
 \end{equation}
 Then, it attains its minimum for $\alpha > 0$ when both terms are equal. Let us call $\alpha_{*}$ the point at which the minimum is reached. The value of $f_t$ evaluated at $\alpha_{*}$ is
 \begin{equation}
     f_t(\alpha_{*})=f\left(\sqrt{\frac{A}{B\sum_{\tau=0}^t\gamma_\tau^2}}\right)=\frac{2\sqrt{AB\sum_{\tau = 0}^t \gamma_\tau^2}}{\sum_{\tau = 0}^t \gamma_\tau}.
 \end{equation}
 \end{lemma}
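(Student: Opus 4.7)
The plan is to reduce $f_t$ to the standard one-variable form $\alpha \mapsto \frac{C_1}{\alpha} + C_2 \alpha$ and invoke AM-GM (or equivalently set the derivative to zero). Setting $S_1 \triangleq \sum_{\tau=0}^t \gamma_\tau$ and $S_2 \triangleq \sum_{\tau=0}^t \gamma_\tau^2$, I would first factor $\alpha$ out of both sums to rewrite
\begin{equation}
f_t(\alpha) = \frac{A}{\alpha S_1} + \frac{B \alpha^2 S_2}{\alpha S_1} = \frac{A/S_1}{\alpha} + \frac{B S_2}{S_1}\,\alpha.
\end{equation}

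Next, with $C_1 \triangleq A/S_1$ and $C_2 \triangleq B S_2/S_1$, both positive by the assumption $A,B > 0$ and $\gamma_\tau > 0$, the AM-GM inequality gives $\frac{C_1}{\alpha} + C_2\alpha \geq 2\sqrt{C_1 C_2}$, with equality precisely when $C_1/\alpha = C_2 \alpha$, i.e.\ when $\alpha = \alpha_* \triangleq \sqrt{C_1/C_2}$. This simultaneously establishes the claimed equality of the two terms at the optimum and the location of the minimum. Substituting back,
\begin{equation}
\alpha_* = \sqrt{\frac{A/S_1}{B S_2/S_1}} = \sqrt{\frac{A}{B S_2}} = \sqrt{\frac{A}{B \sum_{\tau=0}^t \gamma_\tau^2}},
\end{equation}
which matches the stated expression. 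Evaluating the bound gives
\begin{equation}
f_t(\alpha_*) = 2\sqrt{C_1 C_2} = 2\sqrt{\frac{A B S_2}{S_1^2}} = \frac{2\sqrt{A B \sum_{\tau=0}^t \gamma_\tau^2}}{\sum_{\tau=0}^t \gamma_\tau},
\end{equation}
as required.

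There is essentially no obstacle here: the only care needed is to observe that $\alpha$ factors cleanly out of the numerator and denominator so that the problem collapses to a one-dimensional convex minimization of a sum of two reciprocally-paired positive terms. An equivalent derivation computes $f_t'(\alpha) = -C_1/\alpha^2 + C_2$ and solves $f_t'(\alpha) = 0$, with $f_t''(\alpha) = 2 C_1/\alpha^3 > 0$ confirming that the critical point is a global minimum on $(0,\infty)$. Either route is purely routine.
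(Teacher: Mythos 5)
Your proof is correct, and it matches the paper's (unwritten) intent: the paper simply remarks that the lemma ``is a consequence of first-order optimality conditions'' and gives no further argument, which is exactly your derivative computation $f_t'(\alpha) = -C_1/\alpha^2 + C_2 = 0$ after factoring $\alpha$ out of both sums. The AM-GM variant you give is an equivalent routine route, and your observation that equality of $C_1/\alpha$ and $C_2\alpha$ is precisely equality of the two original terms correctly justifies the ``both terms are equal'' clause of the statement.
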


 The next lemma describes the dual norm of the natural Pythagorean norm on a Cartesian product of Banach spaces.
 
 %\carles{Should we remove Lemma 6? Should we leave it and just say that the proof is straightforward?}
 \begin{lemma} \label{dual}
 Let $(X_1,\|\cdot\|_{X_1}),\dots,(X_n,\|\cdot\|_{X_n})$ be Banach spaces where for each $i$, $\|\cdot\|_{X_i}$ is the norm associated to $X_i$. The Cartesian product is $X = X_1 \times X_2 \times \dots \times X_n$ and has a norm $\|\cdot\|_X$ defined for $y=(y_1,\dots,y_n)\in X$ as
 %If $y_1,...,y_n \in X_1,...X_n$,
 \begin{equation}
    \|y\|_X \eqdef \sqrt{\sum_{i=1}^n \|y_i\|_{X_i}^2}.
 \end{equation}
 It is known that $(X, \|\cdot\|_X)$ is a Banach space. Moreover, we define the dual spaces $(X_1^*,\|\cdot\|_{X_1^{*}},\dots,(X_n^*,\|\cdot\|_{X_n^{*}}).$ The dual space of $X$ is  $X^{*} = X_1^{*} \times X_2^{*} \times ... \times X_n^{*}$ and has a norm $\|\cdot\|_{X^{*}}$. Then, for any $a=(a_1,...,a_n) \in X^{*}$, the following inequality holds
 \begin{equation}
     \|a\|_{X^{*}}^2 = \sum_{i=1}^n \|a_i\|_{X_i^{*}}^2.
 \end{equation}
 \end{lemma}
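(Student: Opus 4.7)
The plan is to prove this standard duality identity by establishing the two inequalities separately, starting from the variational characterization of the dual norm $\|a\|_{X^*} = \sup_{\|y\|_X \leq 1} \langle a, y\rangle$, where the pairing on the product space decomposes as $\langle a, y\rangle = \sum_{i=1}^n \langle a_i, y_i\rangle$ because duals of a direct sum identify canonically with the direct sum of duals.

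For the upper bound $\|a\|_{X^*}^2 \leq \sum_{i=1}^n \|a_i\|_{X_i^*}^2$, I would fix any $y = (y_1,\dots,y_n)$ with $\|y\|_X \leq 1$ and apply the componentwise dual-norm inequality $\langle a_i, y_i\rangle \leq \|a_i\|_{X_i^*}\|y_i\|_{X_i}$, then use Cauchy--Schwarz on the resulting $\RR^n$ vectors to get
\begin{equation}
\sum_{i=1}^n \langle a_i, y_i\rangle \leq \sqrt{\sum_{i=1}^n \|a_i\|_{X_i^*}^2}\cdot \sqrt{\sum_{i=1}^n \|y_i\|_{X_i}^2} \leq \sqrt{\sum_{i=1}^n \|a_i\|_{X_i^*}^2}.
\end{equation}
Taking the supremum over $y$ delivers the inequality.

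For the reverse inequality, I would construct a near-maximizer. Fix $\epsilon > 0$ and, for each $i$, pick $\tilde y_i \in X_i$ with $\|\tilde y_i\|_{X_i} = 1$ and $\langle a_i, \tilde y_i\rangle \geq \|a_i\|_{X_i^*} - \epsilon$ (possible by the definition of $\|a_i\|_{X_i^*}$ as a supremum on the unit ball). Then set $y_i = t_i \tilde y_i$ with scalars $t_i \geq 0$ chosen so that $\sum_i t_i^2 = 1$, which guarantees $\|y\|_X = 1$. This gives $\langle a, y\rangle \geq \sum_i t_i(\|a_i\|_{X_i^*} - \epsilon)$, and optimizing over the simplex $\{t : \sum t_i^2 = 1\}$ is a finite-dimensional Cauchy--Schwarz equality case, achieved at $t_i = \|a_i\|_{X_i^*}/\sqrt{\sum_j \|a_j\|_{X_j^*}^2}$, yielding $\langle a, y\rangle \geq \sqrt{\sum_i \|a_i\|_{X_i^*}^2} - \epsilon\sqrt{n}$. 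Letting $\epsilon \to 0$ gives $\|a\|_{X^*} \geq \sqrt{\sum_i \|a_i\|_{X_i^*}^2}$, and squaring both sides closes the proof.

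There is no real obstacle here; the only mild point to watch is the $\epsilon$-approximation step (suprema defining $\|a_i\|_{X_i^*}$ are not necessarily attained on a general Banach space), which is handled routinely by the limiting argument above. The whole argument is essentially the standard identification $(X_1 \oplus_2 \cdots \oplus_2 X_n)^* \cong X_1^* \oplus_2 \cdots \oplus_2 X_n^*$ of $\ell^2$-direct sums.
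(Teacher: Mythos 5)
Your proof is correct and follows essentially the same route as the paper's: the upper bound via the componentwise dual-norm inequality followed by Cauchy--Schwarz, and the lower bound by testing against a $y$ whose components are (near-)maximizers for each $a_i$ scaled proportionally to $\|a_i\|_{X_i^*}$. The only difference is cosmetic: the paper takes a supremum over the product of spheres of radius $\|a_i\|_{X_i^*}$, which sidesteps the attainment issue you handle explicitly with the $\epsilon$-argument.
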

 
 \begin{proof}
 
 On the one hand,
 \begin{equation}
     \|a\|_{X^{*}}^2 = \sup_{y \in X} \frac{|ay|^2}{\|y\|_{X}^2} =  \sup_{y \in X} \frac{\left(\sum_{i=1}^n a_i y_i \right)^2}{\|y\|_{X}^2} \leq \sup_{y \in X} \frac{\bigg(\sum_{i=1}^n \|a_i\|_{X_i^{*}} \|y_i\|_{X_i} \bigg)^2}{\|y\|_{X}^2},\label{eq:1st_ineq_lem6}
 \end{equation}
 and by Cauchy-Schwarz inequality
 \begin{equation}
     \|a\|_{X^{*}}^2 \leq \sup_{y \in X} \frac{\left(\sum_{i=1}^n \|a_i\|_{X_i^{*}}^2\right) \left(\sum_{i=1}^n \|y_i\|_{X_i}^2\right)}{\|y\|_{X}^2}= \sum_{i=1}^n \|a_i\|_{X_i^{*}}^2.
 \end{equation}
 To prove the other inequality we define $Z_i = \left\{y_i \in X_i | \|y_i\|_{X} = \|a_i\|_{X_i^{*}}\right\}$.
 \begin{equation}
     \|a\|_{X^{*}}^2 \geq \sup_{y \in Z_1\times \cdots \times Z_n} \frac{|ay|^2}{\|y\|_{X}^2}
     = \frac{\left(\sum_{i=1}^n \sup_{y_i \in Z_i} a_i y_i \right)^2}{\sum_{i=1}^n \|a_i\|_{X_i^{*}}^2}
     = \frac{\left(\sum_{i=1}^n \|a_i\|_{X_i^{*}}^2 \right)^2}{\sum_{i=1}^n \|a_i\|_{X_i^{*}}^2} = \sum_{i=1}^n \|a_i\|_{X_i^{*}}^2 .
 \end{equation}%
 \end{proof}
 
The following two numerical lemmas will be used in \autoref{lemm:equalities2}.
\begin{lemma} \label{longineq}
The following inequality holds for any $j \in \mathbb{N}, p \in \mathbb{R}$ such that $p>0$: 
\begin{equation} %\label{hdef}
    \frac{(2\ceil{(j+1)/2}-j) (1-p)^{2\ceil{(j+1)/2}-j-1} p+2(1-p)^{2\ceil{(j+1)/2}-j}}{p^2} \leq \frac{2-p}{p^2}.
\end{equation}
\end{lemma}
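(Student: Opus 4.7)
The plan is to reduce the inequality to a finite case analysis by observing that the exponent $m \triangleq 2\ceil{(j+1)/2} - j$ takes only two possible values depending on the parity of $j$. Concretely, I would first check that if $j = 2k$ is even then $\ceil{(j+1)/2} = k+1$, giving $m = 2$, while if $j = 2k+1$ is odd then $\ceil{(j+1)/2} = k+1$, giving $m = 1$. Hence, regardless of $j$, the left-hand side is one of two explicit rational functions of $p$.

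Next I would substitute each value of $m$ into the left-hand side of the inequality. For $m = 1$, the numerator becomes $p \cdot (1-p)^0 + 2(1-p)^1 = p + 2 - 2p = 2 - p$, so the left-hand side equals $(2-p)/p^2$ and the inequality holds as an equality. For $m = 2$, the numerator becomes $2 p (1-p) + 2(1-p)^2 = 2(1-p)\bigl(p + (1-p)\bigr) = 2(1-p)$, so the left-hand side equals $2(1-p)/p^2$.

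Finally, in the $m=2$ case, the claimed inequality reduces to $2(1-p) \leq 2 - p$, i.e., $p \geq 0$, which holds under the assumption $p > 0$. Combining both cases yields the stated bound, with equality exactly when $j$ is odd.

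I do not anticipate any genuine obstacle: the only subtle point is verifying the parity computation of $\ceil{(j+1)/2}$, which is routine. No smoothness, monotonicity, or induction argument is required, since the problem collapses to two algebraic identities once the ceiling is resolved.
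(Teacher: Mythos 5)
Your proposal is correct and follows essentially the same route as the paper's proof: a parity case analysis on $j$ showing the exponent $2\ceil{(j+1)/2}-j$ equals $2$ for even $j$ (numerator $2(1-p)$) and $1$ for odd $j$ (numerator $2-p$), then concluding via $2(1-p)\leq 2-p$ for $p>0$. Your write-up is slightly more explicit about the ceiling computation, but the argument is identical.
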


\begin{proof}
For $j$ even, we can write
\begin{equation} %\label{hdef}
    (2\ceil{(j+1)/2}-j) (1-p)^{2\ceil{(j+1)/2}-j-1} p+2(1-p)^{2\ceil{(j+1)/2}-j} = 2 (1-p) p+2(1-p)^{2} = 2(1-p).
\end{equation}
For $j$ odd,
\begin{equation} %\label{hdef}
    (2\ceil{(j+1)/2}-j) (1-p)^{2\ceil{(j+1)/2}-j-1} p+2(1-p)^{2\ceil{(j+1)/2}-j} = p+1-p+1-p = 2-p.
\end{equation}
Since $p > 0$, $2-p \geq 2(1-p)$.
\end{proof}

\begin{lemma} \label{lemm:sum_deriv}
For all $|\alpha|<1$,
\begin{equation}
    \sum_{s = q}^{\infty} \alpha^{s-1} s = \frac{q \alpha^{q-1} (1-\alpha)+\alpha^q}{(1-\alpha)^2}.
\end{equation}
\end{lemma}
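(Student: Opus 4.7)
The plan is to recognize the left-hand side as the derivative of a closed-form power series, then apply the quotient rule. Since $|\alpha|<1$, the series $\sum_{s=q}^{\infty} \alpha^s$ converges absolutely to $\alpha^q/(1-\alpha)$, and its termwise derivative $\sum_{s=q}^{\infty} s\,\alpha^{s-1}$ converges uniformly on compact subsets of $(-1,1)$. Standard results on power series within their radius of convergence then justify swapping the sum and the derivative, so I may write
\begin{equation}
\sum_{s=q}^{\infty} s\,\alpha^{s-1} \;=\; \frac{d}{d\alpha}\sum_{s=q}^{\infty}\alpha^{s} \;=\; \frac{d}{d\alpha}\!\left[\frac{\alpha^q}{1-\alpha}\right].
\end{equation}

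The second step is a routine application of the quotient rule. Differentiating $\alpha^q/(1-\alpha)$ yields $[q\alpha^{q-1}(1-\alpha) - \alpha^q \cdot (-1)]/(1-\alpha)^2$, which collapses to exactly $[q\alpha^{q-1}(1-\alpha) + \alpha^q]/(1-\alpha)^2$, matching the claimed right-hand side.

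As an alternative that avoids invoking differentiation theorems, I could instead substitute $k=s-q$ to split the sum as $\alpha^{q-1}\sum_{k=0}^{\infty}(k+q)\alpha^{k} = \alpha^{q-1}\bigl[\alpha/(1-\alpha)^2 + q/(1-\alpha)\bigr]$, using the two elementary identities $\sum_{k\geq 0}\alpha^k = 1/(1-\alpha)$ and $\sum_{k\geq 0}k\alpha^k = \alpha/(1-\alpha)^2$; combining these over the common denominator $(1-\alpha)^2$ gives the same expression. Either route is short, and there is no substantive obstacle: the statement is a direct identity and the only care needed is justifying termwise manipulation of the power series for $|\alpha|<1$, which is standard.
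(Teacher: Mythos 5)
Your proposal is correct and follows essentially the same route as the paper, which likewise writes the sum as $\bigl(\sum_{s=q}^{\infty}\alpha^s\bigr)' = \bigl(\alpha^q/(1-\alpha)\bigr)'$ and differentiates. Your added justification of termwise differentiation and the alternative elementary decomposition are fine but not needed beyond what the paper already does.
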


\begin{proof}
\begin{equation*}
\sum_{s = q}^{\infty} \alpha^{s-1} s = \left(\sum_{s = q}^{\infty} \alpha^s\right)' = \left(\frac{\alpha^q}{1-\alpha}\right)' = \frac{q \alpha^{q-1} (1-\alpha)+\alpha^q}{(1-\alpha)^2}.\qedhere
\end{equation*}
\end{proof}
%!TEX root = article.tex

%\subsection{Mirror-prox with player randomness}
\subsection{Doubly-stochastic mirror-prox---Proof of \autoref{thm:non_smooth}}
\label{sec:doub_sto_mirr_prox}

%In this section, we present and analyze \autoref{alg:doubly_stoch_mirror}, a more general version  of~\autoref{alg:doubly_stoch}. 

%~\autoref{non_smooth_proof},~\autoref{sec:doub_sto_smooth}

\subsubsection{Algorithm}

While~\autoref{alg:doubly_stoch} presents the doubly-stochastic algorithm in the Euclidean setting, we consider here its mirror version.

\begin{algorithm}
\caption{Doubly-stochastic mirror-prox} 
\label{alg:doubly_stoch_mirror}
\begin{algorithmic}[1]
%\State Initialization:
\State \textbf{Input}: initial point $\theta_{0}\in \mathbb{R}^{d},$  stepsizes $(\gamma_{\tau})_{\tau \in [t]}$, mini-batch size over the players $b\in [n]$.

%\State Main routine:
\For {$\tau=0,\dots,t$}
        \State Sample the random matrices $M_{\tau},M_{\tau+1/2}\in \RR^{d\times d}$.% Each diagonal block is the identity with probability $p$ and 0 with probability $1-p$.  
        \State Compute $\tilde{F}_{\tau+1/2}= \frac{n}{b}\cdot M_{\tau}\hat{F}(\theta_{\tau}).$
        \State Extrapolation step:  $\theta_{\tau+1/2} = 
         P_{\theta_{\tau}}(\gamma_{\tau}\tilde{F}_{\tau+1/2})$.
         \State Compute $\tilde{F}_{\tau+1}= \frac{n}{b}\cdot M_{\tau+1/2}\hat{F}(\theta_{\tau+1/2}).$
        \State Gradient step: $\theta_{\tau+1} = 
         P_{\theta_{\tau}}(\gamma_{\tau}\tilde{F}_{\tau+1}).$
\EndFor
\State \textbf{Return} $\hat{\theta}_t = \left[ \sum_{\tau=0}^t \gamma_{\tau}\right]^{-1}\sum_{\tau=0}^t \gamma_{\tau}\theta_{\tau}.$
%\EndProcedure
\end{algorithmic}
\end{algorithm}

\paragraph{Notation.}

We introduce the noisy simultaneous gradient $\hat{F}(\theta)$ defined as
\begin{equation}\label{eq:noisy_sim_grad_mirror}
    \hat{F}(\theta) = (\hat{F}^{(1)}(\theta),\dots,\hat{F}^{(n)}(\theta))^{\top} \eqdef (g_1,\dots,g_n)^{\top} \in \mathbb{R}^d, %\text{ where }\hat{F}^{(i)} :=
    %\begin{cases}
    %    \cdot g_i & \text{if } i \in \mathcal{P}\\
    %    0_{d_i} & \text{otherwise}
    %\end{cases},
\end{equation}
where $g_i$ is a noisy unbiased estimate of $\nabla_i l_i(\theta)$ with variance bounded by $\sigma^2$. We are abusing the notation because $\hat{F}(\theta)$ is a random variable indexed by $\Theta$ and not a function, but we do so for the sake of clarity.

For our convenience, we also define the ratio $p=b/n.$

\paragraph{Differences with~\autoref{alg:doubly_stoch}}

The notation in \autoref{alg:doubly_stoch_mirror} differs in a few aspects. First, we model the sampling over the players by using the random block-diagonal matrices $M_{\tau}$ and $M_{\tau+1/2}$ in $\mathbb{R}^{d\times d}.$ More precisely, at each iteration, we select according to a uniform distribution $b$ diagonal blocks and assign them to the identity matrix. Remark that we add a factor $n/b$ in front of the random matrices to ensure the unbiasedness of the gradient estimates $ \tilde{F}_{\tau}$ and $ \tilde{F}_{\tau+1/2}.$ Note that the matrices $M_{\tau}$ and $M_{\tau+1/2}$ are just used for the convenience of the analysis. In practice, sampling over players is not performed in this way. 

Moreover, while the update in \autoref{alg:doubly_stoch} involve Euclidean projections, we use the proximal mapping~\eqref{eq:prox_mapping} in~\autoref{alg:doubly_stoch_mirror}. The new notation will be used throughout the appendix.

We first proceed to the analysis of~\autoref{alg:doubly_stoch_mirror} in the case of non-smooth losses. 

\subsubsection{Convergence rate under Assumption~\ref{ass:non-smooth} (non-smoothness)---proof of \autoref{thm:non_smooth}} \label{non_smooth_proof}
The following \autoref{thmstoch} generalizes \autoref{thm:non_smooth} to the mirror setting.
\begin{theorem} \label{thmstoch}
We consider a convex $n$-player game where~\autoref{ass:non-smooth} holds. 
Assume that~\autoref{alg:doubly_stoch_mirror} is run with  constant stepsizes $\gamma_{\tau} = \gamma$. Let $t(k) = k/(2b)$ be the number of iterations corresponding to $k$ gradient computations. Setting 
\begin{equation}
 \gamma = \sqrt{\frac{2\Omega}{n\left(\frac{(3n-b)G^2}{b}+\sigma^2 \right)t(k)}},
 \end{equation}
 the rate of convergence in expectation at iteration $t(k)$ is 
\begin{equation} \label{minconstc}
     %B_{\text{constant}}(c)
     \E{ \mathrm{Err}_N(\hat{\theta}_{t(k)}) }
      = 4\sqrt{\frac{\Omega n \left(3G^2n + b(\sigma^2-G^2)\right)}{k}}.
 \end{equation}
\end{theorem}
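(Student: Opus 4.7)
The plan is to adapt the Juditsky--Nemirovski--Tauvel stochastic mirror-prox analysis (\autoref{juditsky_thm_grad_comp}) to the doubly-stochastic oracle, carefully tracking the two distinct sources of noise in \eqref{eq:noisy_sim_grad_mirror}. I would first apply \autoref{lem:lemma4} at every iteration $\tau$ with $z=\theta_\tau$, $w=\theta_{\tau+1/2}$, $r_+=\theta_{\tau+1}$, $\chi=\gamma_\tau\tilde F_{\tau+1/2}$ and $\eta=\gamma_\tau\tilde F_{\tau+1}$, then sum over $\tau$ and telescope the Bregman divergence to obtain
\begin{equation*}
\textstyle\sum_\tau \gamma_\tau\langle \tilde F_{\tau+1},\theta_{\tau+1/2}-u\rangle \;\leq\; \Omega + \tfrac{1}{2}\sum_\tau\gamma_\tau^2\|\tilde F_{\tau+1/2}-\tilde F_{\tau+1}\|_*^2 - \tfrac{1}{2}\sum_\tau\|\theta_{\tau+1/2}-\theta_\tau\|^2.
\end{equation*}

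The central step is a two-level noise decomposition. Writing $\tilde F_{\tau+1}=F(\theta_{\tau+1/2})+\chi_\tau+\nu_\tau$ with $\chi_\tau=\tilde F_{\tau+1}-\hat F(\theta_{\tau+1/2})$ (pure player-sampling noise) and $\nu_\tau=\hat F(\theta_{\tau+1/2})-F(\theta_{\tau+1/2})$ (pure gradient-oracle noise), I would introduce two auxiliary prox-sequences $(v^{(s)}_\tau)$ and $(v^{(g)}_\tau)$ driven respectively by $\chi_\tau$ and $\nu_\tau$, and invoke \autoref{lem:corollary2} on each. Because $\E{\chi_\tau\mid\theta_{\tau+1/2},\hat F(\theta_{\tau+1/2}),v^{(s)}_\tau}=0$ and $\E{\nu_\tau\mid\theta_{\tau+1/2},v^{(g)}_\tau}=0$, the cross terms $\langle\chi_\tau,\theta_{\tau+1/2}-v^{(s)}_\tau\rangle$ and $\langle\nu_\tau,\theta_{\tau+1/2}-v^{(g)}_\tau\rangle$ vanish in expectation, so only the individual second moments survive.

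I would then bound the variances coordinate-wise using \autoref{dual}: a direct computation on the random diagonal sampler $M_\tau$ (each diagonal entry is $1$ with probability $p=b/n$) gives $\E{\|\chi_\tau\|_*^2}\leq \tfrac{n-b}{b}\sum_i\E{\|g_i\|_*^2}\leq \tfrac{n-b}{b}\,n(G^2+\sigma^2)$ and, independently across players, $\E{\|\nu_\tau\|_*^2}\leq n\sigma^2$. For the Lemma-4 residual $\|\tilde F_{\tau+1/2}-\tilde F_{\tau+1}\|_*^2$, I would decompose through $F(\theta_\tau)$, $F(\theta_{\tau+1/2})$, and the two sampled oracles, and use the non-smooth bound $\|F(\theta_\tau)-F(\theta_{\tau+1/2})\|_*\leq 2\sqrt{n}G$ from \autoref{ass:non-smooth}. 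Combining these with \autoref{cneplemma} produces an inequality $\E{\errnash(\hat\theta_t)}\leq A/\sum_\tau\gamma_\tau + B\sum_\tau\gamma_\tau^2/\sum_\tau\gamma_\tau$ with $A=\mathcal{O}(\Omega)$ and $B=\mathcal{O}(n[(3n-b)G^2/b+\sigma^2])$. Finally, \autoref{equal_terms} with constant step-size yields the claimed $\gamma_* = \sqrt{2\Omega/(nB_0 t)}$ where $B_0=(3n-b)G^2/b+\sigma^2$, and substituting $t=k/(2b)$ gives the stated rate.

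The main obstacle is avoiding the naive amplification of $\sigma^2$ by a factor $n/b$ that would appear if one crudely bounded $\E{\|\tilde F-F\|_*^2}\leq n^2(G^2+\sigma^2)/b$ and fed it wholesale into Juditsky's theorem---this is precisely what gives \autoref{cor:juditsky_play_sampl}. The tight rate requires keeping $\chi_\tau$ and $\nu_\tau$ separate throughout, so that the $n\sigma^2$ variance of the gradient-oracle noise never gets multiplied by the inverse sampling probability $n/b$. A second subtle point is handling the $\|\tilde F_{\tau+1/2}-\tilde F_{\tau+1}\|_*^2$ term in the same refined spirit, since a crude triangle inequality there would also re-introduce the $n^2\sigma^2/b$ amplification and spoil the improvement.
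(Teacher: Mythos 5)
Your plan follows the paper's proof almost step for step: \autoref{lem:lemma4} applied with the same identifications, summed and telescoped; an auxiliary prox-sequence fed into \autoref{lem:corollary2} so that the inner product with the noise vanishes in expectation; a refined variance bound exploiting the structure of the sampling matrix; then \autoref{cneplemma} and \autoref{equal_terms} to optimize the constant stepsize. The only structural difference is that you split $\tilde F_{\tau+1}-F(\theta_{\tau+1/2})$ into $\chi_\tau+\nu_\tau$ and run \emph{two} auxiliary sequences, whereas the paper keeps a single sequence driven by the total noise $\Delta_\tau=F(\theta_{\tau+1/2})-\tilde F_{\tau+1}$ and performs the sampling/oracle split inside the bound on $\E{\|\Delta_\tau\|_*^2}$; your variant costs one extra additive $\Omega$ in the constant $A$ but is otherwise equivalent.

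There is, however, one bookkeeping step where your stated numbers do not deliver the rate you claim. You bound $\E{\|\chi_\tau\|_*^2}\leq\frac{n-b}{b}\,n(G^2+\sigma^2)$; adding $\E{\|\nu_\tau\|_*^2}\leq n\sigma^2$ gives a total coefficient of $\frac{n-b}{b}n+n=\frac{n^2}{b}$ on $\sigma^2$ --- exactly the $n/b$ amplification of the gradient noise that you correctly identify as the thing to avoid, and which is what produces the slack rate of \autoref{cor:juditsky_play_sampl}. The identity $\E{\|(\tfrac{M}{p}-I)\hat F\|_*^2}=\frac{1-p}{p}\E{\|\hat F\|_*^2}$ is unavoidable; the improvement in \autoref{thmstoch} comes from bounding $\E{\|\hat F^{(i)}\|_*^2}$ by $G_i^2$ \emph{alone}, as the paper does in \eqref{tildeexp} and \eqref{initial24} (implicitly treating the stochastic subgradients themselves as bounded by $G_i$, rather than using $G_i^2+\sigma^2$), so that $\sigma^2$ enters only through $\nu_\tau$ and is never multiplied by $1/p$. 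The same remark applies to your handling of $\|\tilde F_{\tau+1/2}-\tilde F_{\tau+1}\|_*^2$: the paper's crude bound $2\E{\|\tilde F_{\tau+1}\|_*^2}+2\E{\|\tilde F_{\tau+1/2}\|_*^2}\leq 4nG^2/p$ already suffices precisely because no $\sigma^2$ is carried in that term under the same convention; your more elaborate decomposition through $F(\theta_\tau)$ and $F(\theta_{\tau+1/2})$ is not what saves the rate. With that single correction your derivation lands on $B=n\left(\frac{(3n-b)G^2}{b}+\sigma^2\right)$, and the stated stepsize and the constant in \eqref{minconstc} follow from \autoref{equal_terms} and $t=k/(2b)$.
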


\begin{proof}
The strategy of the proof is similar to the proof of Theorem 2 and part of Theorem 1 from \citet{solving}. It consists in bounding $\sum_{\tau=0}^t  \langle \gamma_{\tau}F(\theta_{\tau+1/2}),\theta_{\tau+1/2}-u\rangle$, which by \autoref{cneplemma} is itself a bound of the functional Nash error.

By using \autoref{lem:lemma4} with $z=\theta_{\tau}$, $\chi = \gamma_{\tau}\tilde{F}_{\tau+1/2}$, $\eta = \gamma_{\tau}\tilde{F}_{\tau+1}$ (so that $w=\theta_{\tau+1/2}$ and $r_+=\theta_{\tau+1}$), we have for any $u\in \Theta$ 
 \begin{align} \label{eq:lemma1_application_1stproof}
     \langle \gamma_{\tau}\tilde{F}_{\tau+1},\theta_{\tau+1/2}-u\rangle + D(u,\theta_{\tau+1})-D(u,\theta_{\tau})&\leq \frac{\gamma_{\tau}^2}{2}\|\tilde{F}_{\tau+1}-\tilde{F}_{\tau+1/2}\|_{*}^2-\frac{1}{2}\|\theta_{\tau+1/2}-\theta_{\tau}\|_*^2\nonumber\\
    &\leq \frac{\gamma_{\tau}^2}{2}\|\tilde{F}_{\tau+1}-\tilde{F}_{\tau+1/2}\|_{*}^2.
 \end{align}
 When summing up from $\tau=0$ to $\tau=t$ in equation~\eqref{eq:lemma1_application_1stproof}, we get \begin{equation} \label{2ndeq}
     \sum_{\tau=0}^t  \langle \gamma_{\tau}\tilde{F}_{\tau+1},\theta_{\tau+1/2}-u\rangle \leq D(u,\theta_{0})-D(u,\theta_{t+1}) + \sum_{\tau=0}^t \frac{\gamma_{\tau}^2}{2}\|\tilde{F}_{\tau+1}-\tilde{F}_{\tau+1/2}\|_{*}^2.
 \end{equation}
By decomposing the right-hand side \eqref{2ndeq}, we obtain
 \begin{equation} \label{basiceq}
 \begin{split}
     \sum_{\tau=0}^t  \langle \gamma_{\tau}F(\theta_{\tau+1/2}),\theta_{\tau+1/2}-u\rangle&\leq  D(u,\theta_{0})-D(u,\theta_{t+1}) + \sum_{\tau=0}^t\frac{\gamma_{\tau}^2}{2}\|\tilde{F}_{\tau+1}-\tilde{F}_{\tau+1/2}\|_{*}^2\\
          &+\sum_{\tau=0}^t  \bigg\langle\gamma_{\tau}(F(\theta_{\tau+1/2})-\tilde{F}_{\tau+1}),\theta_{\tau+1/2}-u\bigg\rangle\\
          &\leq \Omega +\sum_{\tau=0}^t\frac{\gamma_{\tau}^2}{2}\|\tilde{F}_{\tau+1}-\tilde{F}_{\tau+1/2}\|_{*}^2\\
          &+\sum_{\tau=0}^t \gamma_{\tau} \bigg\langle F(\theta_{\tau+1/2})-\tilde{F}_{\tau+1},\theta_{\tau+1/2}-y_{\tau}\bigg\rangle\\
          &+  \sum_{\tau=0}^t \gamma_{\tau} \bigg\langle F(\theta_{\tau+1/2})-\tilde{F}_{\tau+1},y_{\tau}-u\bigg\rangle,
 \end{split}
 \end{equation}
 where we used $D(u,\theta_0)\leq \Omega$ and defined  $y_{\tau+1}=P_{y_{\tau}}(\gamma_{\tau}\Delta_{\tau})$ with $y_{0} = \theta_0$ and  $\Delta_{\tau}=F(\theta_{\tau+1/2})-\tilde{F}_{\tau+1}.$ So far, we followed the same steps as \citet{solving}. We aim at bounding the left-hand side of equation~\eqref{basiceq} in expectation. To this end, we will now bound the expectation of each of the right-hand side terms. These steps represent the main difference with the analysis by \citet{solving}.

We first define the filtrations %$\mathcal{F}_{\tau} = \left\{\theta_{0},\theta_{1/2},\dots,\theta_{\tau},\theta_{\tau+1/2} \right\}$
$\mathcal{F}_{\tau} = \sigma(\theta_{\tau'} : \tau' \leq \tau + 1/2)$
and %$\mathcal{F}_{\tau}' = \left\{\theta_{0},\theta_{1/2},\dots,\theta_{\tau} \right\}$. 
$\mathcal{F}_{\tau} = \sigma(\theta_{\tau'} : \tau' \leq \tau)$.
We now bound the third term on the right-hand side of \eqref{basiceq} in expectation. 
\begin{align} \label{tildeexp}
%\begin{align*}
    \E{ \|\tilde{F}_{\tau+1}-\tilde{F}_{\tau+1/2}\|_{*}^2} &\leq 
    2\left(\E{\|\tilde{F}_{\tau+1}\|_{*}^2}+\E{\|\tilde{F}_{\tau+1/2}\|_{*}^2} \right)\nonumber\\
    &= \frac{2}{p^2}\left(\E{\E{\|M_{\tau+1/2}\hat{F}(\theta_{\tau+1/2})\|_{*}^2|\mathcal{F}_{\tau}}}+\E{\E{\|M_{\tau}\hat{F}(\theta_{\tau})\|_{*}^2|\mathcal{F}_{\tau}'}}\right) \nonumber\\
    &= \frac{2}{p^2}\sum_{i=1}^n\left(\E{\E{\|M_{\tau+1/2}^{(i)}\hat{F}^{(i)}(\theta_{\tau+1/2})\|_{*}^2|\mathcal{F}_{\tau}}}\right.\\
    &\left.\qquad +\E{\E{\|M_{\tau}^{(i)}\hat{F}^{(i)}(\theta_{\tau})\|_{*}^2|\mathcal{F}_{\tau}'}}\right)\nonumber\\
    &\leq \frac{2}{p}\sum_{i=1}^n\E{\|\hat{F}^{(i)}(\theta_{\tau+1/2})\|_{*}^2}+\E{\|\hat{F}^{(i)}(\theta_{\tau})\|_{*}^2}\\\nonumber %\\&
    &\leq \frac{4 n G^2}{p}, 
%\end{align*}
\end{align}

where we used $\|a+b\|_*^2\leq 2\|a\|_*^2+2\|b\|_*^2$ in the first inequality and applied \autoref{dual} in the second equality. %Similarly, by proceeding to the same reasoning, 
Now, we compute the expectation of the fourth term of equation~\eqref{basiceq}.
 \begin{align} \label{eq:unbiased_dotprod_1stproof}
 %\begin{equation}
     %\begin{split}
        &\E{\gamma_{\tau} \sum_{\tau=0}^t  \bigg\langle F(\theta_{\tau+1/2})-\tilde{F}_{\tau+1},y_{\tau}-u\bigg\rangle}\\
        &= \E{\sum_{\tau=0}^t  \E{\bigg\langle\gamma_{\tau}\left(I-\frac{M_{\tau+1/2}}{p}\right)\hat{F}(\theta_{\tau+1/2}),\theta_{\tau+1/2}-y_{\tau}\bigg\rangle\bigg|\mathcal{F}_{\tau}}}\nonumber \\
         &= \E{\sum_{\tau=0}^t\bigg\langle \gamma_{\tau} \E{\left(I-\frac{M_{\tau+1/2}}{p}\right)\bigg|\mathcal{F}_{\tau}}\E{\hat{F}(\theta_{\tau+1/2})\bigg|\mathcal{F}_{\tau}},\theta_{\tau+1/2}-y_{\tau}\bigg\rangle}\nonumber\\
         &= 0,
     %\end{split}
 %\end{equation}
 \end{align}
where we used the independence property of the random variables in the second equality and $\mathbb{E}[\frac{k}{n}\cdot M_{\tau+1/2}]=I_d$ in the third equality. Regarding the fifth term of~\eqref{basiceq}, by using the sequences $\{y_{\tau}\}$ and $\{\xi_{\tau} = \gamma_{\tau}\Delta_{\tau}\}$ in \autoref{lem:corollary2} (as done in \citet{solving}), we obtain: 
\begin{equation} \label{eq:result_lemma23}
    \sum_{\tau=0}^t \langle \gamma_{\tau}\Delta_{\tau},y_{\tau}-u\rangle \leq D(u,\theta_{0}) + \sum_{\tau=0}^t \frac{\gamma_{\tau}^2}{2}\| \Delta_{\tau}\|_{*}^2
    \leq 
    \Omega
    +  \sum_{\tau=0}^t \frac{\gamma_{\tau}^2}{2}\| F(\theta_{\tau+1/2})-\tilde{F}_{\tau+1}\|_{*}^2. 
\end{equation}
We now bound the expectation of $\|F(\theta_{\tau+1/2})-\tilde{F}_{\tau+1}\|_{*}^2$ using the filtration $\mathcal{F}_{\tau}$. By using \autoref{dual} in the first equality, $\|a+b\|_*^2 \leq 2\|a\|_*^2+2\|b\|_*^2$ in the second inequality and the bound on the variance (\autoref{ass:unbiased_var_bnd}) in the third inequality, we obtain
\begin{align} 
&\E{\|F(\theta_{\tau+1/2})-\tilde{F}_{\tau+1}\|_{*}^2} \\ &=  \sum_{i=1}^n \E{\|F^{(i)}(\theta_{\tau+1/2})-\tilde{F}^{(i)}_{\tau+1}\|_{*}^2}\\
&= \sum_{i=1}^n \E{\bigg\|F^{(i)}(\theta_{\tau+1/2}) - \frac{M_{\tau+1}^{(i)}}{p}\hat{F}^{(i)}(\theta_{\tau+1/2})\bigg\|_{*}^2}\\
&\leq \sum_{i=1}^n 2 \E{\bigg\|\left(I-\frac{M^{(i)}_{\tau+1}}{p}\right)\hat{F}^{(i)}(\theta_{\tau+1/2})\bigg\|_{*}^2} +\sum_{i=1}^n 2 \E{
\bigg\|F^{(i)}(\theta_{\tau+1/2})-\hat{F}^{(i)}(\theta_{\tau+1/2})\bigg\|_{*}^2}
\\ &\leq \sum_{i=1}^n 2\E{p \bigg\|\frac{p-1}{p}\hat{F}^{(i)}(\theta_{\tau+1/2}) \bigg\|_{*}^2
+ (1-p)\|\hat{F}^{(i)}(\theta_{\tau+1/2}) \|_{*}^2}%\\ &
+ 2n\sigma^2\\
&= \sum_{i=1}^n 2\left(1-p+\frac{(1-p)^2}{p}\right)\E{\|\hat{F}^{(i)}(\theta_{\tau+1/2}) \|_{*}^2} + 2n\sigma^2 \\
&= \sum_{i=1}^n 2\left(\frac{1}{p}-1\right)\E{\|\hat{F}^{(i)}(\theta_{\tau+1/2}) \|_{*}^2} + 2n\sigma^2\\
&\leq \frac{2nG^2(1-p)}{p} + 2n\sigma^2. %\\
%&= \frac{2nG^2}{p} + 2n(\sigma^2-G^2).
\label{initial24}
\end{align}
Therefore, by taking the expectation in equation~\eqref{basiceq} and plugging~\eqref{tildeexp},~\eqref{eq:unbiased_dotprod_1stproof},~\eqref{eq:result_lemma23} and~\eqref{initial24}, we finally get: 
\iffalse
%\begin{multline}
 \begin{equation} 
 %\begin{split}
    \E{ \sup_{u\in Z}\sum_{\tau=0}^t  \langle \gamma_{\tau}F(\theta_{\tau+1/2}),\theta_{\tau+1/2}-u\rangle } \leq 2\Omega +   \sum_{\tau=0}^t \frac{5\gamma_{\tau}^2nG^2}{2p}
 %\end{split}
 \end{equation}
 %\end{multline}
 \fi
 \begin{equation}\label{eq:exp_vi_dual_gap_1stproof} 
 %\begin{split}
    \E{ \sup_{u\in Z}\sum_{\tau=0}^t  \langle \gamma_{\tau}F(\theta_{\tau+1/2}),\theta_{\tau+1/2}-u\rangle } \leq 2\Omega +   \sum_{\tau=0}^t \gamma_{\tau}^2 n \left(\frac{(3-p)G^2}{p} + \sigma^2\right)
 %\end{split}
 \end{equation}
 Applying \autoref{cneplemma} to equation~\eqref{eq:exp_vi_dual_gap_1stproof} yields an upper bound on the functional Nash error shown in equation~\eqref{boundstoch}.
 \begin{equation} \label{boundstoch}
    \E{ \mathrm{Err}_N(\hat{\theta}_t) } \leq \left(\sum_{\tau = 0}^t \gamma_\tau\right)^{-1}
    \left(2\Omega +   \sum_{\tau=0}^t \gamma_{\tau}^2 n \left(\frac{(3n-b)G^2}{b} + \sigma^2 \right) \right).
 \end{equation}
 
 Now, let us set $\gamma_t$ constant and optimize the bound~\eqref{boundstoch}. Namely, we apply \autoref{equal_terms} setting $\gamma_\tau = 1$ for all $\tau \in [t]$, $A = 2\Omega$ and 
 \begin{equation} B = n\left(\frac{(3n-b)G^2}{b} + \sigma^2 \right). \end{equation}
 
 The optimal value for $\gamma_\tau$ is 
 \begin{equation} \label{eq:gamma_non_smooth}
 \gamma_\tau = \gamma = \sqrt{\frac{2\Omega}{n\left(\frac{(3n-b)G^2}{b}+\sigma^2 \right)t}}.
 \end{equation}
 and the optimal value of the bound is 
 \begin{equation} \label{minconst}
     \E{ \mathrm{Err}_N(\hat{\theta}_t) } \leq \sqrt{\frac{8\Omega n \left(\frac{(3n-b)G^2}{b} + \sigma^2 \right)}{t}}.
 \end{equation}
 %Setting the constant stepsize that minimizes the rate in~\eqref{boundstoch} implies the following corollary.
 
 The number of iterations $t$ can be expressed in terms of the number of gradient computations $k$ as $t(k) = k/(2b)$. Plugging this expression into~\eqref{minconst}, we get
 \begin{equation}
     \E{ \mathrm{Err}_N(\hat{\theta}_{t(k)}) }
     = \sqrt{\frac{8\Omega n \left(\frac{3G^2n}{b} + \sigma^2 - G^2\right)}{\frac{k}{2b}}}, 
 \end{equation}
 which yields equation~\eqref{minconstc} after simplification.
  \end{proof}
 
 \begin{remark}
 %The optimized bound with constant stepsizes decreases faster than the optimized bound with decreasing stepsizes. 
 For constant stepsizes, equation~\eqref{minconst} implies that with an appropriate choice of $t$ and $\gamma$ we can achieve a value of the Nash error arbitrarily close to zero at time $t$. However, from Equation~\ref{boundstoch} we see that constant stepsizes do not ensure convergence; the bound has a strictly positive limit. Stepsizes decreasing as $1/\sqrt{\tau}$ do ensure convergence, although we do not make a detailed analysis of this case. 
 \end{remark}
 
 \begin{remark}
 Without using any variance reduction technique, the smooth losses assumption \autoref{ass:smooth} does not yield a significant improvement over the bound from \autoref{thmstoch}. We do not include the analysis of this case.
 \end{remark}
 
%!TEX root = article.tex

%\subsection{Mirror prox with variance reduced player randomness}
\subsection{Doubly-stochastic mirror-prox with variance reduction---Proof of \autoref{thm:VR}}
\label{sec:doub_sto_vr}
\subsubsection{Algorithm}

With the same notations as above, we present a version of~\autoref{alg:doubly_stoch} with variance reduction in the mirror framework.

\begin{algorithm}
\caption{Mirror prox with variance reduced player randomness}
\label{variance_reduction}
\begin{algorithmic}[1] 
\State \textbf{Input}: initial point $\theta_{0}\in \RR^{d}$, stepsizes $(\gamma_{\tau})_{\tau \in [t]}$, mini-batch size over the players $b \in [n]$.
\State Set $R_0 = \hat{F}(\theta_0) \in \mathbb{R}^{d}$
\For {$\tau=0,\dots,t$}
        \State Sample the random matrices $M_{\tau},M_{\tau+1/2} \in \RR^{d\times d}$.
        \State Compute $\tilde{F}_{\tau+1/2} = R_{\tau} + \frac{n}{b}M_{\tau}(\hat{F}(\theta_{\tau})-R_{\tau})$
        \State Set $R_{\tau+1/2} = R_{\tau} + M_{\tau}(\hat{F}(\theta_{\tau})-R_{\tau})$
        \State Extrapolation step:  $\theta_{\tau+1/2} =
         P_{\theta_{\tau}}(\gamma_{\tau}\tilde{F}_{\tau+1/2})$.
        \State Compute $\tilde{F}_{\tau+1} = R_{\tau+1/2} + \frac{n}{b}M_{\tau+1/2}(\hat{F}(\theta_{\tau+1/2})-R_{\tau+1/2})$
        \State Set $R_{\tau+1} = R_{\tau+1/2} + M_{\tau+1/2}(\hat{F}(\theta_{\tau+1/2})-R_{\tau+1/2})$
        \State Extra-gradient step: $\theta_{\tau+1} = 
         P_{\theta_{\tau}}(\gamma_{\tau}\tilde{F}_{\tau+1}).$
\EndFor
\State \textbf{Return} $\hat{\theta}_t = \left[ \sum_{\tau=0}^t \gamma_{\tau}\right]^{-1}\sum_{\tau=0}^t \gamma_{\tau}\theta_{\tau}.$
%\EndProcedure
\end{algorithmic}
\end{algorithm}

$\hat{F}(\theta)$ is defined as in~\autoref{alg:doubly_stoch_mirror}.
The random matrices $M_{\tau},M_{\tau+1/2}$ are also sampled the same way.

In~\autoref{variance_reduction}, we leverage information from a table $(R_\tau)_{\tau \in [t]}$ to produce doubly-stochastic simultaneous gradient estimates with lower variance than in~\autoref{alg:doubly_stoch_mirror}. The table $R_\tau$ is updated when possible. 

The following \autoref{thm:VRmp} generalizes \autoref{thm:VR} in the mirror setting.

\begin{theorem} \label{thm:VRmp}
Assume that for all $i$ between 1 and $n$, the gradients $\nabla_i \ell_i$ are $L$-Lipschitz (\autoref{ass:smooth}). 
%Let $(\hat{\theta}_t)_{t\in\mathbb{N}}$ be defined as in~\autoref{variance_reduction}. 
Assume \autoref{variance_reduction} is run with constant stepsizes $\gamma_\tau = \gamma$, with $\gamma$ defined as 
\begin{equation} \label{gammadef}
    \gamma \eqdef \min\bigg\{\frac{p^{3/2}}{\sqrt{(1-p)(2-p)}}\frac{1}{12L \sqrt{ n}},\frac{1}{L}\sqrt{\frac{5}{27n+12}}, \frac{1}{2}\sqrt{\frac{\Omega}{13 n\sigma^2 t(k)}}\bigg\},
\end{equation}
where $p \eqdef b/n$, $k$ is the number of gradient computations and $t(k) = k/(2b)$ is the corresponding number of iterations. Then, the convergence rate in expectation at iteration $t(k)$ is 
%\begin{equation}
%\mathrm{Err}_N(\hat{\theta}_t) \leq
%\left(\sum_{\tau = 0}^t \gamma_\tau\right)^{-1} \left(
%    2 \Omega + 104 n \sigma^2 \sum_{\tau=0}^t \gamma_\tau^2
%    \right).
%\end{equation}
 \begin{align} 
     \E{ \mathrm{Err}_N(\hat{\theta}_{t(k)}) } 
     &\leq \max \left\{
     \frac{96 \sqrt{2} \Omega L n^2}{\sqrt{b}k}, 8\Omega b L\sqrt{\frac{27n+12}{5}} \frac{1}{k}, 8\sqrt{\frac{26\Omega n b \sigma^2}{k}} \right\}.
 \end{align}
\end{theorem}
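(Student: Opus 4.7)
\textbf{Proof plan for Theorem~\ref{thm:VRmp}.}

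The plan is to follow the same overall template as the proof of Theorem~\ref{thmstoch}, but to exploit variance reduction to sharpen the two noise terms and to exploit smoothness through the negative quadratic $-\tfrac{1}{2}\|\theta_{\tau+1/2}-\theta_\tau\|^2$ that Lemma~\ref{lem:lemma4} provides. First I apply Lemma~\ref{lem:lemma4} to the pair $(\chi,\eta)=(\gamma_\tau\tilde F_{\tau+1/2},\gamma_\tau\tilde F_{\tau+1})$, sum from $\tau=0$ to $t$, and decompose the left-hand side exactly as in~\eqref{basiceq}, so that after using Lemma~\ref{lem:corollary2} on the auxiliary sequence $y_\tau$ with noise $\Delta_\tau=F(\theta_{\tau+1/2})-\tilde F_{\tau+1}$ I obtain
\begin{align}
\sum_{\tau=0}^t\langle\gamma_\tau F(\theta_{\tau+1/2}),\theta_{\tau+1/2}-u\rangle
&\le 2\Omega+\sum_{\tau=0}^t\Bigl(\tfrac{\gamma_\tau^2}{2}\|\tilde F_{\tau+1}-\tilde F_{\tau+1/2}\|_*^2-\tfrac{1}{2}\|\theta_{\tau+1/2}-\theta_\tau\|^2\Bigr)\notag\\
&\quad+\sum_{\tau=0}^t\gamma_\tau\langle F(\theta_{\tau+1/2})-\tilde F_{\tau+1},\theta_{\tau+1/2}-y_\tau\rangle+\tfrac{1}{2}\sum_{\tau=0}^t\gamma_\tau^2\|\Delta_\tau\|_*^2.
\end{align}
The key conditional-unbiasedness check is that $\mathbb{E}[\tilde F_{\tau+1}\mid\mathcal{F}_\tau]=R_{\tau+1/2}+F(\theta_{\tau+1/2})-R_{\tau+1/2}=F(\theta_{\tau+1/2})$ because in Algorithm~\ref{variance_reduction} the table $R_{\tau+1/2}$ is $\mathcal{F}_\tau$-measurable and the mask $M_{\tau+1/2}$ and the gradient noise are independent of $\mathcal{F}_\tau$; this kills the cross term in expectation, exactly as in~\eqref{eq:unbiased_dotprod_1stproof}.

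The main work is to bound in expectation the two ``residual-plus-noise'' quantities $\mathbb{E}\|\tilde F_{\tau+1}-\tilde F_{\tau+1/2}\|_*^2$ and $\mathbb{E}\|\Delta_\tau\|_*^2$. A direct computation using $\mathbb{E}[\tfrac{n}{b}M]=I$ and the block-diagonal structure gives, for any deterministic vector $v$, $\mathbb{E}\|R+\tfrac{n}{b}M(v-R)-v\|_*^2=\tfrac{1-p}{p}\|v-R\|_*^2$; applied componentwise this yields
\begin{equation}
\mathbb{E}\|\tilde F_{\tau+1}-F(\theta_{\tau+1/2})\|_*^2\le \tfrac{1-p}{p}\mathbb{E}\|\hat F(\theta_{\tau+1/2})-R_{\tau+1/2}\|_*^2+n\sigma^2,
\end{equation}
and a similar bound for $\tilde F_{\tau+1/2}$. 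Combined with the smoothness assumption $\|F(\theta)-F(\theta')\|_*^2\le n L^2\|\theta-\theta'\|^2$ (via Lemma~\ref{dual}), this lets me dominate $\mathbb{E}\|\tilde F_{\tau+1}-\tilde F_{\tau+1/2}\|_*^2$ by a constant times $n L^2\mathbb{E}\|\theta_{\tau+1/2}-\theta_\tau\|^2$ plus table-gap plus noise terms. I then choose $\gamma$ so that $\gamma^2\cdot(nL^2)\le\tfrac{1}{2}$ to absorb the movement term into the negative $-\tfrac{1}{2}\|\theta_{\tau+1/2}-\theta_\tau\|^2$ produced by Lemma~\ref{lem:lemma4}.

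The hard part will be controlling the accumulated table-gap $\sum_\tau\mathbb{E}\|\hat F(\theta_\tau)-R_\tau\|_*^2$. Each coordinate $R_\tau^{(i)}$ equals $\hat F^{(i)}(\theta_{s(i,\tau)})$ where $s(i,\tau)$ is the most recent iteration at which player $i$ was sampled; because sampling is equiprobable, the age $\tau-s(i,\tau)$ is geometric with parameter $p$. Using $\|F^{(i)}(\theta_\tau)-\hat F^{(i)}(\theta_{s(i,\tau)})\|^2\le 2L^2\|\theta_\tau-\theta_{s(i,\tau)}\|^2+2\sigma^2$, Cauchy--Schwarz on the telescoping $\theta_\tau-\theta_{s(i,\tau)}=\sum_{r=s(i,\tau)}^{\tau-1}(\theta_{r+1}-\theta_r)$, and then taking expectation over the geometric age, Lemmas~\ref{longineq} and~\ref{lemm:sum_deriv} yield a bound of the form
\begin{equation}
\sum_{\tau=0}^t\mathbb{E}\|F(\theta_\tau)-R_\tau\|_*^2\le C\,\tfrac{(1-p)(2-p)}{p^2}\,n L^2\sum_{\tau=0}^t\mathbb{E}\|\theta_{\tau+1}-\theta_\tau\|^2+O\bigl(n\sigma^2 t/p\bigr),
\end{equation}
and an analogous bound for the half-step points. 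This is precisely why $\gamma$ in~\eqref{gammadef} carries the factor $p^{3/2}/\sqrt{(1-p)(2-p)}$: after multiplying by $\gamma^2(1-p)/p$ the table-gap contribution is again absorbed into $-\tfrac{1}{2}\|\theta_{\tau+1/2}-\theta_\tau\|^2$ and $-\tfrac{1}{2}\|\theta_{\tau+1}-\theta_{\tau+1/2}\|^2$ (using $\|\theta_{\tau+1}-\theta_\tau\|^2\le 2\|\theta_{\tau+1/2}-\theta_\tau\|^2+2\|\theta_{\tau+1}-\theta_{\tau+1/2}\|^2$ and the prox-mapping stability from Lemma~\ref{lem:lemma4}).

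Once every movement term is absorbed, the surviving bound reads $\mathbb{E}[\mathrm{Err}_N(\hat\theta_t)]\le \frac{2\Omega}{\gamma(t+1)}+C'\gamma n\sigma^2$, so applying Lemma~\ref{equal_terms} and substituting $t(k)=k/(2b)$ gives the three-term maximum in the statement, the first two terms arising from the three ``smoothness-driven'' upper bounds on $\gamma$ in~\eqref{gammadef} and the third from the noise-optimal choice $\gamma\asymp\sqrt{\Omega/(n\sigma^2 t)}$. I expect the bookkeeping on the geometric-age argument to be the delicate step and the single place where the specific numerical constants in~\eqref{gammadef} are forced.
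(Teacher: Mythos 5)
Your plan follows essentially the same route as the paper's proof: the same basic inequality from Lemma~\ref{lem:lemma4} and Lemma~\ref{lem:corollary2}, the same conditional-unbiasedness cancellation, the same variance computation for the masked estimator (the paper's Lemma~\ref{lemm:equalities}), the same geometric-age/telescoping/rearrangement argument for the accumulated table gap (the paper's Lemma~\ref{lemm:equalities2}, via Lemmas~\ref{rm:kj}, \ref{longineq} and~\ref{lemm:sum_deriv}), and the same absorption of movement terms followed by Lemma~\ref{equal_terms}. The one imprecision is that the summed inequality only supplies the negative term $-\tfrac{1}{2}\|\theta_{\tau+1/2}-\theta_\tau\|^2$ and not $-\tfrac{1}{2}\|\theta_{\tau+1}-\theta_{\tau+1/2}\|^2$; the paper instead converts the odd-step increment into $\gamma_\tau^2\|\tilde F_{\tau+1/2}-\tilde F_{\tau+1}\|_*^2$ via prox non-expansiveness and resolves the resulting self-referential bound through the condition $\chi(p,\gamma)>0$, which is where the first constraint in~\eqref{gammadef} comes from.
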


\paragraph{Outline of the proof of \autoref{thm:VRmp}.}
\begin{itemize}
    %\item \autoref{lemma3} follows the structure of the proof in~\autoref{non_smooth_proof} and in \citep{solving}. The rest of the proof is particular to variance reduction and not related to the arguments in \citep{solving}. \carles{Change this: no longer lemma3.}
    \item \autoref{lemma7} provides a bound for
    $\E{\sum_{\tau=0}^t \gamma_\tau^2 \|\tilde{F}_{\tau+1}-F(\theta_{\tau+1/2})\|_{\star}^2 + \gamma_\tau^2 \|F(\theta_{\tau})-\tilde{F}_{\tau+1/2}\|_{\star}^2}$ and it is the keystone of the proof. It specifically uses the structure of player sampling and the introduced variance reduction mechanism.
    \item \autoref{lemm:equalities} and~\ref{lemm:equalities2} are intermediate steps in the proof of \autoref{lemma7}. \autoref{rm:kj} and \autoref{lemm:sum_deriv} are used in the proof of \autoref{lemm:equalities2}.
    \item We prove \autoref{thm:VRmp} by refining base inequalities established by~\citet{solving}, using the results from \autoref{lemma7}.
    %It provides the convergence rate for generic stepsize sequences and it is analogous to \autoref{thmstoch} from the non-smooth case. 
    %\item \autoref{vrcor} provides the convergence rate for constant stepsizes and \autoref{vrcor2} gives the convergence rate for constant stepsizes in terms of $\chi$. They are analogous to \autoref{cornonsmooth} and~\ref{cor:chi}.
\end{itemize}

\begin{definition} \label{def:kj}
For a given $j$ and $i$ (which we omit), let us define $K_{j}$ as the random variable indicating the highest $q \in \mathbb{N}$ strictly lower than $j$ such that $M_{q/2}^{(i)}$ is the identity (and $K_{j} = 0$ if there exists no such $q$).
\end{definition}

In other words, $K_{j}$ is the last step $q$ before $j$ at which the sequence $(R_{q/2}^{(i)})_{q \in \mathbb{N}}$ was updated with a new value $\hat{F}^{(i)}(\theta_{q/2})$. That is, $R_{j/2,i} =\hat{F}^{(i)}(\theta_{K_{j}/2})$. 

\begin{lemma} \label{rm:kj}
For a given $j$, $j - K_{j}$ is a random variable that has a geometric distribution with parameter $p$ and support between 1 and $j$, i.e., for all $q$ such that $j-1 \geq q \geq 1$, 
\begin{equation}
    P(K_{j} = q) = p (1-p)^{j-1-q},
\end{equation}
and $P(K_{j} = 0) = 1 - \sum_{q=1}^{j-1} P(K_j = q) = (1-p)^{j-1}$.
\end{lemma}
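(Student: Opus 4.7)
The plan is to reduce the statement to a standard truncated-geometric distribution computation by exploiting the independence structure of the sampling matrices $M_0, M_{1/2}, M_1, \ldots$ across time steps.

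First I would fix the player index $i$ and, for each half-step $q \in \{1, 2, \ldots, j-1\}$, introduce the indicator $X_q \triangleq \mathbb{1}\{M_{q/2}^{(i)} = I\}$. The key structural fact is that $X_q = 1$ precisely when player $i$ belongs to the mini-batch sampled at half-step $q/2$. Since Algorithm~\ref{variance_reduction} samples the mini-batches (equivalently, the matrices $M_\tau$ and $M_{\tau+1/2}$) independently across $\tau$, the variables $(X_q)_{q=1}^{j-1}$ are mutually independent. The uniform sampling of $b$-subsets of $[n]$ moreover gives the marginal $\mathbb{P}(X_q = 1) = \mathbb{P}(i \in \mathcal{P}) = b/n = p$, so the $X_q$ are i.i.d. Bernoulli$(p)$.

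Second, I would rewrite the event $\{K_j = q\}$ in terms of these indicators. By Definition~\ref{def:kj}, $K_j$ is the largest $q \in \{1, \ldots, j-1\}$ with $X_q = 1$ (and $0$ if no such $q$ exists). Hence, for $1 \leq q \leq j-1$,
\begin{equation}
    \{K_j = q\} = \{X_q = 1\} \cap \bigcap_{q' = q+1}^{j-1} \{X_{q'} = 0\},
\end{equation}
and by independence, $\mathbb{P}(K_j = q) = p \cdot (1-p)^{j-1-q}$, which is the claimed formula. The boundary case $q = 0$ then follows either by complementarity (summing the geometric series $\sum_{q=1}^{j-1} p(1-p)^{j-1-q} = 1 - (1-p)^{j-1}$) or, more directly, by observing $\{K_j = 0\} = \bigcap_{q=1}^{j-1}\{X_q = 0\}$, whose probability is $(1-p)^{j-1}$ by independence. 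Translating to the variable $j - K_j$ supported on $\{1, \ldots, j\}$ yields exactly the (truncated) geometric law stated.

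There is no real obstacle here: the only subtle point is justifying that the events $\{M_{q/2}^{(i)} = I\}$ are independent across $q$ — this is immediate from the independent sampling of mini-batches per half-iteration, but it is worth stating explicitly to distinguish from the (non-)independence across different players $i$ at the same $q$, which plays no role in this lemma since we fix $i$ throughout. The rest is a textbook computation for the last success time in a finite sequence of independent Bernoulli trials.
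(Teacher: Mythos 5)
Your proof is correct and follows essentially the same route as the paper, whose entire proof is the one-line observation that each $M_{q/2}^{(i)}$ is Bernoulli distributed with parameter $p$; you simply make explicit the independence across half-steps and the standard last-success-time decomposition that the paper leaves implicit.
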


\begin{proof}
$M_{q/2}^{(i)}$ is Bernoulli distributed with parameter $p$ among zero and the identity, for all $q$.
\end{proof}

\begin{lemma} \label{lemm:equalities}
  The following equalities hold:
  \begin{align}
    \E{\|F^{(i)}(\theta_{\tau})-\tilde{F}_{\tau+1/2}^{(i)}\|_{\star}^2} &= \frac{2(1-p)}{p}\E{\|R_{\tau}^{(i)}-\hat{F}^{(i)}(\theta_{\tau}) \|_{\star}^2} +2\sigma^2, \label{initial2}\\
      \E{\|\tilde{F}_{\tau+1}^{(i)}-F^{(i)}(\theta_{\tau+1/2})\|_{\star}^2} &=  \frac{2(1-p)}{p}\E{\|R_{\tau+1/2}^{(i)}-\hat{F}^{(i)}(\theta_{\tau+1/2}) \|_{\star}^2} +2\sigma^2.\label{initial3}
 \end{align}
 \end{lemma}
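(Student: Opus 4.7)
The plan is to start from the definition of $\tilde F_{\tau+1/2}^{(i)}$ in \autoref{variance_reduction}, which for block $i$ reads
$\tilde F_{\tau+1/2}^{(i)} = R_\tau^{(i)} + \tfrac{1}{p}\,M_\tau^{(i)}\,(\hat F^{(i)}(\theta_\tau) - R_\tau^{(i)})$,
where $M_\tau^{(i)}$ is the $i$-th diagonal block of $M_\tau$, i.e.\ a Bernoulli selector equal to the identity with probability $p$ and to zero otherwise. I would then split the residual into a pure gradient-noise piece and a player-sampling piece:
$F^{(i)}(\theta_\tau) - \tilde F_{\tau+1/2}^{(i)} = \underbrace{\bigl(F^{(i)}(\theta_\tau) - \hat F^{(i)}(\theta_\tau)\bigr)}_{A} + \underbrace{\bigl(I - \tfrac{1}{p}M_\tau^{(i)}\bigr)\bigl(\hat F^{(i)}(\theta_\tau) - R_\tau^{(i)}\bigr)}_{B}$.

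Applying the elementary bound $\|A+B\|_\star^2 \leq 2\|A\|_\star^2 + 2\|B\|_\star^2$ and taking expectation reduces the statement to controlling $\E{\|A\|_\star^2}$ and $\E{\|B\|_\star^2}$ separately. (The paper writes the conclusion as an equality; this should be read as the tight upper bound obtained from the above splitting, since \autoref{ass:unbiased_var_bnd} only assumes a variance bound and exact equality is not available.) For the noise term, \autoref{ass:unbiased_var_bnd} gives directly $\E{\|A\|_\star^2} \leq \sigma^2$, contributing the $2\sigma^2$ summand.

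For the sampling term, the plan is to condition on the $\sigma$-algebra $\mathcal{F}_\tau$ generated by the iterates and oracle values up to and including $\hat F(\theta_\tau)$ (so $R_\tau^{(i)}$ and $\hat F^{(i)}(\theta_\tau)$ become deterministic) and use that $M_\tau$ is drawn independently of $\mathcal{F}_\tau$. A one-line Bernoulli calculation yields, for any fixed vector $v$,
\begin{equation}
\E{\bigl\|(I - \tfrac{1}{p}M_\tau^{(i)})v\bigr\|_\star^2} = (1-p)\cdot\|v\|_\star^2 + p\cdot\tfrac{(1-p)^2}{p^2}\|v\|_\star^2 = \tfrac{1-p}{p}\|v\|_\star^2,
\end{equation}
so that $\E{\|B\|_\star^2} = \tfrac{1-p}{p}\,\E{\|\hat F^{(i)}(\theta_\tau) - R_\tau^{(i)}\|_\star^2}$ after taking the outer expectation. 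Combining these two ingredients with the factor $2$ coming from the splitting gives exactly the right-hand side of \eqref{initial2}. Equation \eqref{initial3} is proven identically by substituting $\tau \mapsto \tau+1/2$: one uses instead that $M_{\tau+1/2}$ is independent of $\sigma\bigl(\theta_{\tau+1/2}, R_{\tau+1/2}, \hat F(\theta_{\tau+1/2})\bigr)$.

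The main obstacle — and the only nontrivial bookkeeping — is the conditional-independence structure: the player-sampling noise $M_\tau$ must be independent not only of the current iterate and table $(\theta_\tau, R_\tau)$ but also of the oracle noise $\hat F(\theta_\tau) - F(\theta_\tau)$ used at the same step. This is immediate under the standard assumption that gradient-oracle randomness and player-sampling randomness are drawn from separate sources, but it is worth flagging explicitly because it is what allows the cross-term $\E{\langle A, B\rangle}$ to drop out (or equivalently what legitimises treating $M_\tau^{(i)}$ as a scalar multiplier with mean $1$ in the $B$-term computation). Everything else reduces to routine conditional-expectation manipulations.
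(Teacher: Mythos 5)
Your proposal is correct and follows essentially the same route as the paper: the same split into an oracle-noise term and a player-sampling term, the same $\|a+b\|_\star^2 \leq 2\|a\|_\star^2 + 2\|b\|_\star^2$ step, and the same conditional Bernoulli computation giving the factor $\tfrac{1-p}{p}$. Your remark that the lemma's ``equalities'' are really the tight upper bounds produced by this splitting is accurate --- the paper's own proof writes ``$=$'' where it in fact invokes the factor-of-two inequality and the variance bound $\leq \sigma^2$.
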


\begin{proof}
Using the conditional expectation with respect to the filtration up to $w_\tau$,
\begin{align}
&\E{\|\tilde{F}_{\tau+1}^{(i)}-F^{(i)}(\theta_{\tau+1/2})\|_{\star}^2}\\
&= 2\E{\bigg\|R_{\tau+1/2}^{(i)} + \frac{M_{\tau+1/2}^{(i)}}{p}(\hat{F}^{(i)}(\theta_{\tau+1/2})-R^{(i)}_{\tau+1/2})-\hat{F}^{(i)}(\theta_{\tau+1/2})\bigg\|_{\star}^2} \\
&+ 2\E{\|\hat{F}^{(i)}(\theta_{\tau+1/2})-F^{(i)}(\theta_{\tau+1/2})\|_{\star}^2}\\
&=
2\E{\bigg\|\left(I-\frac{M_{\tau+1/2}^{(i)}}{p}\right)(R_{\tau+1/2}^{(i)}-\hat{F}^{(i)}(\theta_{\tau+1/2})) \bigg\|_{\star}^2} + 2\sigma^2\\
&= 2\mathbb{E}\left[p \bigg\|\frac{p-1}{p}(R_{\tau+1/2}^{(i)}-\hat{F}^{(i)}(\theta_{\tau+1/2})) \bigg\|_{\star}^2%\right.\\
%&\left.\qquad
+ (1-p)\|R_{\tau+1/2}^{(i)}-\hat{F}^{(i)}(\theta_{\tau+1/2}) \|_{\star}^2\right] +2\sigma^2\\ 
&= 2\left(1-p+\frac{(1-p)^2}{p}\right)\E{\|R_{\tau+1/2}^{(i)}-\hat{F}^{(i)}(\theta_{\tau+1/2}) \|_{\star}^2} +2\sigma^2\\
&= \frac{2(1-p)}{p}\E{\|R_{\tau+1/2}^{(i)}-\hat{F}^{(i)}(\theta_{\tau+1/2}) \|_{\star}^2} +2\sigma^2.
\end{align}
The second equality is derived analogously.
\end{proof}

Let us define the change of variables $j = 2\tau$. Parametrized by $j$, the sequences that we are dealing with are $(M_{j/2}^{(i)})_{j \in \mathbb{N}}$, $(R_{j/2}^{(i)})_{j \in \mathbb{N}}$ and $(\theta_{j/2})_{j \in \mathbb{N}}$. In this scope $i$ is a fixed integer between 1 and $n$.

%\begin{definition}
%For a given $j$, let us define $K_{j}$ as the random variable indicating the highest $k \in \mathbb{N}$ strictly lower than $j$ such that $M_{k/2}^{(i)}$ is the identity (and $K_{j} = 0$ if there exists no such $k$).
%\end{definition}

%In other words, $K_{j}$ is the last step $k$ before $j$ at which the sequence $(G_{k/2}^{(i)})_{k \in \mathbb{N}}$ was updated with a new value $\hat{F}^{(i)}(\theta_{k/2})$. That is, $G_{j/2,i} =\hat{F}^{(i)}(\theta_{k_{j}/2})$. 

%\begin{remark}
%For a given $j$, $j - K_{j}$ is a random variable that has a geometric distribution with parameter $p$ and support between 1 and $j$, i.e., for all $k$ such that $j-1 \geq k \geq 1$, 
%\begin{equation}
%    P(K_{j} = k) = p (1-p)^{j-1-k} 
%\end{equation}
%and $P(K_{j} = 0) = 1 - \sum_{k=1}^{j-1} P(K_j = k) = (1-p)^{j-1}$.
%\end{remark}
%\begin{lemma} \label{longineq}
%The following inequality holds for any $j \in \mathbb{N}, p \in \mathbb{R}$ such that $p>0$: 
%\begin{equation} %\label{hdef}
%    \frac{(2\ceil{(j+1)/2}-j) (1-p)^{2\ceil{(j+1)/2}-j-1} p+2(1-p)^{2\ceil{(j+1)/2}-j}}{p^2} \leq \frac{2-p}{p^2}
%\end{equation}
%\end{lemma}

%\begin{proof}
%For $j$ even, we can write
%\begin{equation} %\label{hdef}
%    (2\ceil{(j+1)/2}-j) (1-p)^{2\ceil{(j+1)/2}-j-1} p+2(1-p)^{2\ceil{(j+1)/2}-j} = 2 (1-p) p+2(1-p)^{2} = 2(1-p)
%\end{equation}
%For $j$ odd,
%\begin{equation} %\label{hdef}
%    (2\ceil{(j+1)/2}-j) (1-p)^{2\ceil{(j+1)/2}-j-1} p+2(1-p)^{2\ceil{(j+1)/2}-j} = p+1-p+1-p = 2-p
%\end{equation}
%Since $p > 0$, $2-p \geq 2(1-p)$.
%\end{proof}

\begin{lemma} \label{lemm:equalities2}
Let us define $h : \mathbb{R} \to \mathbb{R}$ as
\begin{equation} \label{hofp}
    h(p) \eqdef\frac{2-p}{p^2}.
\end{equation}
Assume that $(\gamma_\tau)_{\tau \in \mathbb{N}}$ is non-increasing. Then, the following holds:
\begin{align}  
    &\sum_{\tau = 0}^t \gamma_\tau^2 \E{\|R_{\tau}^{(i)}-\hat{F}^{(i)}(\theta_{\tau})\|_{\star}^2} \leq 
    \sum_{j = 0}^{2t-1} h(p) \gamma_{\floor{j/2}}^2 \E{ \|\hat{F}^{(i)}(\theta_{j/2})-\hat{F}^{(i)}(\theta_{(j+1)/2}) \|_{\star}^2},\label{firstb} \\
    &\sum_{\tau = 0}^t \gamma_\tau^2 \E{\|R_{\tau+1/2}^{(i)}-\hat{F}^{(i)}(\theta_{\tau+1/2}) \|_{\star}^2} \leq 
    %\frac{1-p}{p}G^2 +
    \sum_{j = 0}^{2t-1} h(p) \gamma_{\floor{j/2}}^2 \E{ \|\hat{F}^{(i)}(\theta_{j/2})-\hat{F}^{(i)}(\theta_{(j+1)/2}) \|_{\star}^2}.  \label{firstb2}
\end{align}
\end{lemma}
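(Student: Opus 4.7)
My plan rests on the characterization, implicit in \autoref{def:kj}, that under the SAGA-style update in \autoref{variance_reduction} we have $R_{j/2}^{(i)} = \hat F^{(i)}(\theta_{K_j/2})$, with the convention $\hat F^{(i)}(\theta_{0})$ when $K_j = 0$. Thus the ``stale gradient'' error $R_{j/2}^{(i)} - \hat F^{(i)}(\theta_{j/2})$ is a telescoping increment of the sequence $\hat F^{(i)}(\theta_{\cdot/2})$ from the last sampling time $K_j$ up to $j$. First, I would write
\begin{equation}
    R_{j/2}^{(i)}-\hat F^{(i)}(\theta_{j/2}) = -\sum_{s=K_j}^{j-1}\bigl[\hat F^{(i)}(\theta_{(s+1)/2})-\hat F^{(i)}(\theta_{s/2})\bigr]
\end{equation}
and apply Cauchy--Schwarz to get $\|R_{j/2}^{(i)}-\hat F^{(i)}(\theta_{j/2})\|_\star^2 \leq (j-K_j)\sum_{s=K_j}^{j-1} N_s$, where $N_s \eqdef \|\hat F^{(i)}(\theta_{s/2})-\hat F^{(i)}(\theta_{(s+1)/2})\|_\star^2$. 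Doing this for $j=2\tau$ handles \eqref{firstb}, and for $j=2\tau+1$ handles \eqref{firstb2}; the two proofs are parallel.

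Second, I would multiply by $\gamma_\tau^2$, sum over $\tau\in\{0,\ldots,t\}$, and swap summation order to bring the outer sum over $s\in\{0,\ldots,2t-1\}$. Using that $(\gamma_\tau)$ is non-increasing, for every $\tau$ contributing to the coefficient of $N_s$ we have $s<2\tau$, hence $\lfloor s/2\rfloor\leq \tau$ and $\gamma_\tau^2\leq \gamma_{\lfloor s/2\rfloor}^2$. This pulls $\gamma_{\lfloor s/2\rfloor}^2$ out and reduces the question to bounding, for each fixed $s$,
\begin{equation}
    \mathbb{E}\bigl[N_s \cdot C_s\bigr], \qquad C_s \eqdef \sum_{\tau:\,K_{2\tau}\leq s<2\tau}(2\tau-K_{2\tau}),
\end{equation}
by $h(p)\,\mathbb{E}[N_s]$ (and similarly for the odd-$j$ variant).

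Third, I would compute the conditional expectation of $C_s$ given $\mathcal F_s\eqdef\sigma(\theta_{\cdot/2}:\tau'\leq (s+1)/2)$. The event $\{K_{2\tau}\leq s\}$ is exactly $\{M_{r/2}^{(i)}=0$ for $s<r<2\tau\}$, which is independent of $\mathcal F_s$ with probability $(1-p)^{2\tau-s-1}$, and on this event $K_{2\tau}$ collapses to the $\mathcal F_s$-measurable random variable $K_{s+1}$. Decomposing $2\tau - K_{s+1}=(2\tau-s-1)+(s+1-K_{s+1})$ and applying \autoref{lemm:sum_deriv} (with $\alpha=1-p$) to sums of the form $\sum_{m\geq m_0} m\alpha^{m-1}$ and $\sum_{m\geq m_0}\alpha^{m}$ gives an explicit closed form for $\mathbb{E}[C_s\mid \mathcal F_s]$ in terms of $K_{s+1}$; by construction the starting index $m_0$ is precisely $2\lceil(j+1)/2\rceil-j-1$ with $j=s$, which is why the bound in \autoref{longineq} is the exact algebraic content needed at the end to obtain $h(p)=(2-p)/p^2$.

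The main obstacle is the coupling between the combinatorial coefficient $C_s$ and the norm $N_s$: both depend on the past sampling matrices $(M_{r/2}^{(i)})_{r\leq s}$, so I cannot just factor the expectation after conditioning on $\mathcal F_s$. The resolution is the decomposition above: the ``deterministic'' part $(2\tau-s-1)$ times $(1-p)^{2\tau-s-1}$ already accounts for the dominant contribution in \autoref{longineq}, while the ``residual'' factor $(s+1-K_{s+1})$ is itself a truncated geometric variable whose contribution, summed against $(1-p)^{2\tau-s-1}$ over $\tau$, reassembles into the remaining term of \autoref{longineq}. In other words, the $h(p)$ factor arises morally as the second moment of a $\mathrm{Geom}(p)$ variable, which is the natural output of combining Cauchy--Schwarz (which contributes one factor of $j-K_j$) with the geometric distribution of $j-K_j$ itself (which contributes the second). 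Once this bookkeeping is done the argument for \eqref{firstb2} is identical with $j=2\tau+1$, the two cases being unified by the ceiling expression $2\lceil(j+1)/2\rceil-j$ that appears in \autoref{longineq}.
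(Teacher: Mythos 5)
Your proposal follows the same route as the paper's proof: the identity $R_{j/2}^{(i)}=\hat{F}^{(i)}(\theta_{K_j/2})$ from \autoref{def:kj} and \autoref{rm:kj}, the telescoping decomposition followed by Cauchy--Schwarz (the paper invokes this as the ``rearrangement inequality'' in \eqref{theeq2}), the exchange of the sums over $\tau$ and $j$ with monotonicity of $(\gamma_\tau)$ used to extract $\gamma_{\floor{j/2}}^2$, and the evaluation of the resulting geometric coefficient via \autoref{lemm:sum_deriv} and \autoref{longineq}. Your reading of $h(p)$ as essentially a second moment of a $\mathrm{Geom}(p)$ staleness is the right intuition, and your bookkeeping of the starting index $2\ceil{(j+1)/2}-j$ matches the paper's.

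The one place you genuinely depart from the paper is your third step, and it is worth being precise about what it does and does not achieve. The paper replaces $\E{\|\hat{F}^{(i)}(\theta_{q/2})-\hat{F}^{(i)}(\theta_{2\tau/2})\|_{\star}^2 \mid K_{2\tau}=q}$ by the unconditional expectation in the last equality of \eqref{theeq}; you correctly observe that the increments $N_s$ and the staleness $K_{2\tau}$ are both functions of the past sampling matrices, so this is not an identity, and you attempt a repair by conditioning on $\mathcal{F}_s$. The repair is only partial. Your own computation gives $\E{C_s\mid\mathcal{F}_s}=\sum_{\tau\geq\ceil{(s+1)/2}}(1-p)^{2\tau-1-s}\bigl((2\tau-s-1)+(s+1-K_{s+1})\bigr)$, and the residual factor $s+1-K_{s+1}$ is $\mathcal{F}_s$-measurable, unbounded (it equals $s+1$ on the event $K_{s+1}=0$, which has probability $(1-p)^{s}$), and correlated with $N_s$. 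Consequently $\E{N_s C_s}\leq h(p)\,\E{N_s}$ does not follow from your decomposition: you can neither bound $\E{C_s\mid\mathcal{F}_s}$ by $h(p)$ almost surely, nor replace $s+1-K_{s+1}$ by its mean without invoking exactly the independence between the staleness and $N_s$ that you set out to avoid. In short, your skeleton is the paper's and your diagnosis of the coupling is sharper than the paper's own treatment, but the proposed decomposition relocates the decoupling problem rather than solving it; to land on \eqref{firstb}--\eqref{firstb2} with the constant $h(p)$ one still ends up treating $j-K_j$ as independent of the increments, which is what the paper does implicitly.
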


\begin{proof}
We can write
\begin{align} \label{theeq}
    \E{\|R_{\tau}^{(i)}-\hat{F}^{(i)}(\theta_{\tau}) \|_{\star}^2} &= \E{\|R_{2\tau/2}^{(i)}-\hat{F}^{(i)}(\theta_{2\tau/2}) \|_{\star}^2}\\
    &=
    \E{\E{\|R_{2\tau/2}^{(i)}-\hat{F}^{(i)}(\theta_{2\tau/2}) \|_{\star}^2 \bigg|K_{2\tau}}} \\
    &= \sum_{q=0}^{2\tau-1} P(K_{2\tau} = q) \E{ \|R_{2\tau/2}^{(i)}-\hat{F}^{(i)}(\theta_{2\tau/2}) \|_{\star}^2 \bigg|K_{2\tau} = q} \\
    &= \sum_{q=1}^{2\tau-1} p (1-p)^{2\tau-1-q} \E{ \|\hat{F}^{(i)}(\theta_{q/2})-\hat{F}^{(i)}(\theta_{2\tau/2}) \|_{\star}^2}\\
    &+ (1-p)^{2\tau - 1} %\E{\|F_i(\theta_{2\tau-1})\|_2^2}
    \E{\|\hat{F}^{(i)}(\theta_0)-\hat{F}^{(i)}(\theta_{2\tau/2})\|_{\star}^2}.
\end{align}
As seen in equation~\eqref{theeq}, the point of conditioning with respect to the sigma-field generated by $K_{2\tau}$ (see \autoref{def:kj}) is that we can write the expression for $R_{2\tau/2,i}$. We have used \autoref{rm:kj}.

Now, using the rearrangement inequality, 
\begin{align} \label{theeq2}
\E{ \|\hat{F}^{(i)}(\theta_{q/2})-\hat{F}^{(i)}(\theta_{2\tau/2}) \|_{\star}^2} &= \E{ \bigg\|\sum_{j=q}^{2\tau-1} \hat{F}^{(i)}(\theta_{j/2})-\hat{F}^{(i)}(\theta_{(j+1)/2}) \bigg\|_{\star}^2} \\ 
&\leq   \sum_{j=q}^{2\tau-1} (2\tau-q) \E{ \|\hat{F}^{(i)}(\theta_{j/2})-\hat{F}^{(i)}(\theta_{(j+1)/2}) \|_{\star}^2}.
\end{align}
Using equations~\eqref{theeq} and~\eqref{theeq2} we can now write 
\begin{align} \label{longbound}
    &\sum_{\tau = 0}^t \gamma_\tau^2 \E{\|R_{\tau}^{(i)}-\hat{F}^{(i)}(\theta_{\tau})\|_{\star}^2}\\
    &= \sum_{\tau = 0}^t \gamma_\tau^2 \sum_{q=1}^{2\tau-1} p (1-p)^{2\tau-1-q} \E{ \|\hat{F}^{(i)}(\theta_{q/2})-\hat{F}^{(i)}(\theta_{2\tau/2}) \|_{\star}^2}\\
    &+ \gamma_\tau^2 (1-p)^{2\tau - 1} 
    \E{\|\hat{F}^{(i)}(\theta_0)-\hat{F}^{(i)}(\theta_{2\tau/2})\|_{\star}^2} \\
    &\leq \sum_{\tau = 0}^t \gamma_\tau^2 \sum_{q=1}^{2\tau-1} p (1-p)^{2\tau-1-q} \sum_{j=q}^{2\tau-1} (2\tau-q) \E{ \|\hat{F}^{(i)}(\theta_{j/2})-\hat{F}^{(i)}(\theta_{(j+1)/2}) \|_{\star}^2}
    \\
    &+ \gamma_\tau^2 (1-p)^{2\tau - 1}\sum_{j=0}^{2\tau-1} 2\tau \E{ \|\hat{F}^{(i)}(\theta_{j/2})-\hat{F}^{(i)}(\theta_{(j+1)/2}) \|_{\star}^2}.
\end{align}
Given $j$ between 0 and $2t-1$ the right hand side of equation~\eqref{longbound} contains the term \linebreak $\E{ \|\hat{F}^{(i)}(\theta_{j/2})-\hat{F}^{(i)}(\theta_{(j+1)/2}) \|_2^2}$ multiplied by 
\begin{align} \label{factor}
    &\sum_{\tau=\ceil{(j+1)/2}}^t \gamma_\tau^2 \left(\sum_{r=1}^{j}(2\tau-r) p(1-p)^{2\tau-1-r} + 2\tau(1-p)^{2\tau-1} \right)\\ 
    &\leq 
    \gamma_{\floor{j/2}}^2 \sum_{\tau=\ceil{(j+1)/2}}^t \sum_{r=1}^{j}(2\tau-r) p(1-p)^{2\tau-1-r} + 2\tau(1-p)^{2\tau-1}
    \\
    &= \gamma_{\floor{j/2}}^2 \sum_{\tau=\ceil{(j+1)/2}}^t p \sum_{r'=0}^{j-1} (1-p)^{2\tau-1-j+r'} (2\tau-j+r') + 2\tau(1-p)^{2\tau-1}
    \\ 
    &\leq \gamma_{\floor{j/2}}^2 \sum_{\tau=\ceil{(j+1)/2}}^t p 
    \sum_{r'=2\tau-j}^{\infty} (1-p)^{r'-1} r' = (*).
\end{align}
Using \autoref{lemm:sum_deriv} twice:
\begin{align}
    (*) &= \gamma_{\floor{j/2}}^2 \sum_{\tau=\ceil{(j+1)/2}}^t p
    \frac{(2\tau-j) (1-p)^{2\tau-1-j} p+(1-p)^{2\tau-j}}{p^2} \\
    &= \gamma_{\floor{j/2}}^2 \sum_{\tau=\ceil{(j+1)/2}}^t 
    \frac{(2\tau-j) (1-p)^{2\tau-1-j} p+(1-p)^{2\tau-j}}{p} 
    \\ 
    &\leq \gamma_{\floor{j/2}}^2 \sum_{\tau=2\ceil{(j+1)/2}}^\infty (\tau -j)(1-p)^{\tau-1-j} + \frac{\gamma_{\floor{j/2}}^2}{p} \sum_{\tau=2\ceil{(j+1)/2}}^\infty (1-p)^{\tau-j} \\
    &=  \gamma_{\floor{j/2}}^2 \sum_{\tau=2\ceil{(j+1)/2}-j}^\infty \tau(1-p)^{\tau-1} + \frac{\gamma_{\floor{j/2}}^2}{p} \sum_{\tau=2\ceil{(j+1)/2}-j}^\infty (1-p)^{\tau} \\
    &=\gamma_{\floor{j/2}}^2 \frac{(2\ceil{(j+1)/2}-j) (1-p)^{2\ceil{(j+1)/2}-j-1} p+2(1-p)^{2\ceil{(j+1)/2}-j}}{p^2}.
\end{align}
By \autoref{longineq} we have
\begin{equation} \label{hdef}
    \frac{(2\ceil{(j+1)/2}-j) (1-p)^{2\ceil{(j+1)/2}-j-1} p+2(1-p)^{2\ceil{(j+1)/2}-j}}{p^2}.
    \leq h(p)
\end{equation}
Hence, from equation~\eqref{longbound} we get
\begin{equation} %\label{firstb} 
    \sum_{\tau = 0}^t \gamma_\tau^2 \E{\|R_{\tau}^{(i)}-\hat{F}^{(i)}(\theta_{\tau})\|_{\star}^2} \leq 
    \sum_{j = 0}^{2t-1} \gamma_{\floor{j/2}}^2 h(p) \E{ \|\hat{F}^{(i)}(\theta_{j/2})-\hat{F}^{(i)}(\theta_{(j+1)/2}) \|_{\star}^2}.
\end{equation}
Analogously to equation~\eqref{theeq}:
\begin{align} \label{theeq43}
    &\E{\|R_{\tau+1/2}^{(i)}-\hat{F}^{(i)}(\theta_{\tau+1/2}) \|_{\star}^2}\\
    &= \E{\|R_{(2\tau+1)/2}^{(i)}-\hat{F}^{(i)}(\theta_{(2\tau+1)/2}) \|_{\star}^2} \\
    &= \E{\E{\|R_{(2\tau+1)/2}^{(i)}-\hat{F}^{(i)}(\theta_{(2\tau+1)/2}) \|_{\star}^2 \bigg|K_{2\tau+1}}} \\
    &= \sum_{k=0}^{2\tau} P(K_{2\tau+1} = k) \E{ \|R_{(2\tau+1)/2}^{(i)}-\hat{F}^{(i)}(\theta_{(2\tau+1)/2}) \|_{\star}^2 \bigg|K_{2\tau+1} = k} \\
    &= \sum_{k=1}^{2\tau} p (1-p)^{2\tau-k} \E{ \|\hat{F}^{(i)}(\theta_{k/2})-\hat{F}^{(i)}(\theta_{(2\tau+1)/2}) \|_{\star}^2}\\
    &+ (1-p)^{2\tau} \E{ \|\hat{F}^{(i)}(\theta_0)-\hat{F}^{(i)}(\theta_{(2\tau+1)/2})\|_{\star}^2}.
\end{align}
Using the same reasoning we get an inequality that is analogous to~\eqref{firstb}:
\begin{equation*} %\label{firstb2} 
    \sum_{\tau = 0}^t \gamma_\tau^2 \E{\|R_{\tau+1/2}^{(i)}-\hat{F}^{(i)}(\theta_{\tau+1/2}) \|_{\star}^2} \leq 
    %\frac{1-p}{p}G^2 +
    \sum_{j = 0}^{2t} \gamma_{\floor{j/2}}^2 h(p) \E{ \|\hat{F}^{(i)}(\theta_{j/2})-\hat{F}^{(i)}(\theta_{(j+1)/2}) \|_{\star}^2}.\qedhere
\end{equation*}
\end{proof}

\begin{lemma} \label{lemma7}
Assume that for all $i$ between 1 and $n$, the gradients $\nabla_i \ell_i$ are $L$-Lipschitz.
Assume that for all $\tau$ between 0 and $t$, $\gamma_{\tau} \leq \gamma$. Let 
\begin{equation} \label{chidef}
    \chi(p,\gamma) = 1-36\frac{1-p}{p} n h(p) L^2 \gamma^2 .
\end{equation}
If $\gamma$ is small enough that $\chi(p,\gamma)$ is positive, then
\begin{align} \label{lemma7eq}
    &\E{\sum_{\tau=0}^t \gamma_\tau^2 \|\tilde{F}_{\tau+1}-F(\theta_{\tau+1/2})\|_{\star}^2 + \gamma_\tau^2 \|F(\theta_{\tau})-\tilde{F}_{\tau+1/2}\|_{\star}^2} \\ 
    &\leq 104n\sigma^2\sum_{\tau = 0}^{t} \gamma_\tau^2 + \frac{1-p}{p\chi(p,\gamma)}
    (12L^2+ 36 L^4 \gamma^2) n h(p) \sum_{\tau = 0}^{t} \gamma_\tau^2 \E{ \|\theta_{\tau}-\theta_{\tau+1/2}\|_{\star}^2}.
\end{align}
\end{lemma}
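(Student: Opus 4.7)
My strategy is to establish a self-referential inequality for the target quantity $T \triangleq \sum_{\tau=0}^t \gamma_\tau^2 \E{\|\tilde F_{\tau+1} - F(\theta_{\tau+1/2})\|_{\star}^2 + \|F(\theta_\tau) - \tilde F_{\tau+1/2}\|_{\star}^2}$, of the form $T \leq (\text{desired bound}) + c(p,\gamma)\, T$, which upon rearrangement produces the factor $\chi(p,\gamma)$ in the denominator. The first move is to apply Lemma~\ref{lemm:equalities} coordinate-wise to both expectations inside $T$ and sum over players using Lemma~\ref{dual}, giving $T = 4 n \sigma^2 \sum_\tau \gamma_\tau^2 + \tfrac{2(1-p)}{p} \sum_i \sum_\tau \gamma_\tau^2 \bigl[ \E{\|R_\tau^{(i)} - \hat F^{(i)}(\theta_\tau)\|_{\star}^2} + \E{\|R_{\tau+1/2}^{(i)} - \hat F^{(i)}(\theta_{\tau+1/2})\|_{\star}^2} \bigr]$. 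I then apply Lemma~\ref{lemm:equalities2} to each inner sum, bounding them by $h(p) \sum_{j=0}^{2t-1} \gamma_{\floor{j/2}}^2 \E{\|\hat F^{(i)}(\theta_{j/2}) - \hat F^{(i)}(\theta_{(j+1)/2})\|_{\star}^2}$.

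For each consecutive pair $(j, j{+}1)$, the noise in $\hat F$ is independent across calls, and each $\nabla_i \ell_i$ is $L$-Lipschitz by \autoref{ass:smooth}. Inserting $\pm F^{(i)}(\theta_{j/2})$ and $\pm F^{(i)}(\theta_{(j+1)/2})$ and using $\|a+b+c\|^2 \leq 3(\|a\|^2+\|b\|^2+\|c\|^2)$, summed over $i$, bounds the inner quantity by $3 n L^2 \E{\|\theta_{j/2} - \theta_{(j+1)/2}\|^2} + 6 n \sigma^2$. Splitting by the parity of $j$, the displacement sum decomposes as $\sum_j \gamma_{\floor{j/2}}^2 \E{\|\theta_{j/2} - \theta_{(j+1)/2}\|^2} = \sum_\tau \gamma_\tau^2 (P_\tau + Q_\tau)$, with $P_\tau \triangleq \E{\|\theta_\tau - \theta_{\tau+1/2}\|^2}$ (the quantity appearing in the target) and $Q_\tau \triangleq \E{\|\theta_{\tau+1/2} - \theta_{\tau+1}\|^2}$ (to be absorbed).

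The crux is to push $Q_\tau$ back into $T$. Non-expansiveness of the prox-mapping (Lemma~\ref{lem:lemma4}) applied to $\theta_{\tau+1/2} = P_{\theta_\tau}(\gamma_\tau \tilde F_{\tau+1/2})$ and $\theta_{\tau+1} = P_{\theta_\tau}(\gamma_\tau \tilde F_{\tau+1})$ gives $Q_\tau \leq \gamma_\tau^2 \E{\|\tilde F_{\tau+1/2} - \tilde F_{\tau+1}\|_{\star}^2}$. Inserting $\pm F(\theta_\tau)$ and $\pm F(\theta_{\tau+1/2})$ inside the norm, applying the three-term $(a+b+c)^2$ inequality together with the global bound $\|F(\theta) - F(\theta')\|_{\star}^2 \leq n L^2 \|\theta-\theta'\|^2$ (induced by \autoref{ass:smooth}), and using $\gamma_\tau \leq \gamma$, yields the key reabsorption $\sum_\tau \gamma_\tau^2 Q_\tau \leq 3 \gamma^2 T + 3 n L^2 \gamma^2 \sum_\tau \gamma_\tau^2 P_\tau$.

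Substituting back and collecting $T$-terms on the LHS produces $\bigl(1 - 36 \tfrac{1-p}{p} n h(p) L^2 \gamma^2\bigr) T \leq C_\sigma\, n\sigma^2 \sum_\tau \gamma_\tau^2 + \tfrac{1-p}{p} n h(p) (12 L^2 + 36 L^4 \gamma^2) \sum_\tau \gamma_\tau^2 P_\tau$, where the LHS prefactor is exactly $\chi(p,\gamma) > 0$; dividing gives the claim. The main obstacle is constant bookkeeping at this last step: matching the noise coefficient $104 n$ requires combining the direct $4n\sigma^2$ contribution from Lemma~\ref{lemm:equalities} with the $h(p)$-telescoped $6n\sigma^2$ from the consecutive-pair bounds, while simultaneously matching the $12L^2 + 36L^4\gamma^2$ structure of the displacement coefficient, whose $L^4\gamma^2$ piece arises precisely from the $Q_\tau$-to-$P_\tau$ reabsorption (the extra factor $L^2 \gamma^2$). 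The three-term triangle inequality with factor $3$ must be used consistently throughout, and one must verify that the cross-time gradient noise contributes only additively (bypassing the Cauchy–Schwarz blow-up used for the smoothness-induced terms in the proof of Lemma~\ref{lemm:equalities2}).
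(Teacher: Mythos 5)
Your proposal follows essentially the same route as the paper's proof: Lemma~\ref{dual} plus Lemma~\ref{lemm:equalities} to reduce to the $R$-table deviations, Lemma~\ref{lemm:equalities2} for the $h(p)$-weighted telescoping over consecutive iterates, a parity split into the extrapolation displacement $P_\tau$ and the update displacement $Q_\tau$, non-expansiveness of the prox-mapping plus the three-term inequality to reabsorb $Q_\tau$ into the target quantity, and the final rearrangement producing the $\chi(p,\gamma)$ denominator. The structure, the key lemmas, and the origin of the $12L^2+36L^4\gamma^2$ and $\chi$ terms all match the paper's argument, so this is correct modulo the constant bookkeeping you already flag.
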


\begin{proof}
We first want to bound the terms $\E{ \|F^{(i)}(\theta_{j/2})-F^{(i)}(\theta_{(j+1)/2}) \|_2^2}$. When $j$ is even we can make the change of variables $j/2 = \tau$ (just for simplicity in the notation) and use smoothness. We get
\begin{align} \label{small1}
    \E{ \|\hat{F}^{(i)}(\theta_{j/2})-\hat{F}^{(i)}(\theta_{(j+1)/2}) \|_{\star}^2} &= \E{ \|\hat{F}^{(i)}(\theta_{\tau})-\hat{F}^{(i)}(\theta_{\tau+1/2}) \|_{\star}^2} \\
    &\leq 3\E{ \|F^{(i)}(\theta_{\tau})-F^{(i)}(\theta_{\tau+1/2}) \|_{\star}^2}\\
    &+ 3\E{ \|\hat{F}^{(i)}(\theta_{\tau})-F^{(i)}(\theta_{\tau+1/2}) \|_{\star}^2} \\
    &+ 3\E{ \|\hat{F}^{(i)}(\theta_{\tau+1/2})-F^{(i)}(\theta_{\tau+1/2}) \|_{\star}^2}\\
    &\leq 3L^2 \E{ \|\theta_{\tau}-\theta_{\tau+1/2} \|_{\star}^2} +6\sigma^2.
\end{align}
When $j$ is odd, we can write $j/2 = \tau+1/2$. We use smoothness and the fact that the prox-mapping is 1-Lipschitz (Lemma \ref{lem:lemma4}):
\begin{align} \label{small2}
    \E{ \|\hat{F}^{(i)}(\theta_{j/2})-\hat{F}^{(i)}(\theta_{(j+1)/2}) \|_{\star}^2}
    &= \E{ \|\hat{F}^{(i)}(\theta_{\tau+1/2})-\hat{F}^{(i)}(\theta_{\tau+1}) \|_{\star}^2} \\
    &\leq 3\E{ \|F^{(i)}(\theta_{\tau+1/2})-F^{(i)}(\theta_{\tau+1}) \|_{\star}^2}\\
    &+ 3\E{ \|\hat{F}^{(i)}(\theta_{\tau+1/2})-F^{(i)}(\theta_{\tau+1/2}) \|_{\star}^2} \\
    &+3\E{ \|\hat{F}^{(i)}(\theta_{\tau+1})-F^{(i)}(\theta_{\tau+1}) \|_{\star}^2}\\
    &\leq 3L^2 \E{\|\theta_{\tau+1/2} - \theta_{\tau+1}\|_{\star}^2} + 6\sigma^2 \\
    &= 3L^2 \E{\|P_{\theta_{\tau}}(\gamma_{\tau} \tilde{F}_{\tau+1/2}) - P_{\theta_{\tau}}(\gamma_{\tau} \tilde{F}_{\tau+1})\|_{\star}^2} + 6\sigma^2 \\
    &\leq 3L^2 \gamma_{\tau}^2 \E{\|\tilde{F}_{\tau+1/2} -\tilde{F}_{\tau+1}\|_{\star}^2} + 6\sigma^2 \\ 
    &\leq  9 L^2 \gamma_{\tau}^2\left(\E{\|\tilde{F}_{\tau+1/2}-F(\theta_{\tau})\|_{\star}^2} \right. \\
     & \left.\qquad +\E{\|F(\theta_{\tau+1/2})-\tilde{F}_{\tau+1}\|_{\star}^2}\right.\\
     &\left. \qquad + \E{\|F(\theta_{\tau})-F(\theta_{\tau+1/2})\|_{\star}^2}\right) + 6\sigma^2.
\end{align}

Now, we use \autoref{dual} to break up the dual norms in the right-hand side of \eqref{lemma7eq}.
 \begin{align} 
     &\E{\sum_{\tau=0}^t \gamma_\tau^2\|\tilde{F}_{\tau+1}-F(\theta_{\tau+1/2})\|_{\star}^2 + \gamma_\tau^2\|F(\theta_{\tau})-\tilde{F}_{\tau+1/2}\|_{\star}^2} \\ 
     &= \E{\sum_{\tau=0}^t \sum_{i=1}^n \gamma_\tau^2 \|\tilde{F}_{\tau+1}^{(i)}-F^{(i)}(\theta_{\tau+1/2})\|_{\star}^2 + \gamma_\tau^2 \|F^{(i)}(\theta_{\tau})-\tilde{F}_{\tau+1/2}^{(i)}\|_{\star}^2},\label{initial1}
 \end{align}

Hence, from equation~\eqref{initial1} and \autoref{lemm:equalities} and~\ref{lemm:equalities2}: %and \eqref{small2}:
\begin{align}
    &\E{\sum_{\tau=0}^t \gamma_\tau^2 \|\tilde{F}_{\tau+1}-F(\theta_{\tau+1/2})\|_{\star}^2 + \gamma_\tau^2 \|F(\theta_{\tau})-\tilde{F}_{\tau+1/2}\|_{\star}^2} 
    \\ &\leq 4n\sigma^2\sum_{\tau = 0}^{t} \gamma_\tau^2 + \frac{2(1-p)}{p}\E{\sum_{\tau=0}^t \sum_{i=1}^n \gamma_t^2 \|R_\tau^{(i)} - \hat{F}^{(i)}(\theta_\tau) \|^2 + \|R_{\tau+1/2}^{(i)} - \hat{F}^{(i)}(\theta_\tau)\|^2}
    \\ &\leq 4n\sigma^2\sum_{\tau = 0}^{t} \gamma_\tau^2 + \frac{2(1-p)}{p}
    \sum_{i=1}^n \sum_{j = 0}^{2t} 2 \gamma_{\floor{j/2}}^2 h(p) \E{ \|F_i(\theta_{j/2})-F_i(\theta_{(j+1)/2}) \|_{\star}^2} = (**).
\end{align}
We split the last term in summands corresponding to even and odd $j$, we change variables from $j$ to $\tau$ and we apply equations~\eqref{small1} and~\eqref{small2}:
\begin{align}
    (**) &= 4n\sigma^2\sum_{\tau = 0}^{t} \gamma_\tau^2 +
    \frac{2(1-p)}{p}\sum_{i=1}^n \sum_{j = 0, \ j \text{ even}}^{2t} 2 \gamma_{\floor{j/2}}^2 h(p) \E{ \|F_i(\theta_{j/2})-F_i(\theta_{(j+1)/2}) \|_{\star}^2} \\ &+\frac{2(1-p)}{p} \sum_{i=1}^n \sum_{j = 0, \ j \text{ odd}}^{2t} 2 \gamma_{\floor{j/2}}^2 h(p) \E{ \|F_i(\theta_{j/2})-F_i(\theta_{(j+1)/2}) \|_{\star}^2} \\ 
    &= 4n\sigma^2\sum_{\tau = 0}^{t} \gamma_\tau^2 + \frac{2(1-p)}{p}\sum_{i=1}^n \sum_{\tau = 0}^{t} 2 \gamma_\tau^2 h(p) \E{ \|F_i(\theta_\tau)-F_i(\theta_{\tau+1/2}) \|_{\star}^2} \\
    &+ \frac{2(1-p)}{p} \sum_{i=1}^n \sum_{\tau = 0}^{t} 2 \gamma_\tau^2 h(p) \E{ \|F_i(\theta_{\tau+1/2})-F_i(\theta_{\tau+1}) \|_{\star}^2} \\
    &\leq 52n\sigma^2\sum_{\tau = 0}^{t} \gamma_\tau^2 +
    \frac{1-p}{p} \sum_{\tau = 0}^{t} 12n \gamma_\tau^2 h(p) L^2 \E{ \|\theta_{\tau}-\theta_{\tau+1/2} \|_{\star}^2}\\ &+\frac{1-p}{p} \sum_{\tau = 0}^{t} 36n h(p) L^2 \gamma_\tau^4 \left(\E{\|\tilde{F}_{\tau+1/2}-F(\theta_{\tau})\|_{\star}^2} +\E{\|F(\theta_{\tau+1/2})-\tilde{F}_{\tau+1}\|_{\star}^2} 
      \right) \\
    &  +\frac{1-p}{p} \sum_{\tau = 0}^{t} 36n h(p) L^4 \gamma_{\tau}^4 \E{ \|\theta_{\tau}-\theta_{\tau+1/2} \|_{\star}^2} = (***).
\end{align}
We use that $\gamma_\tau \leq \gamma$:
\begin{align}
    (***) &\leq
      52n\sigma^2\sum_{\tau = 0}^{t} \gamma_\tau^2 + \frac{1-p}{p}
    (12L^2+ 36 L^4 \gamma^2) n h(p) \sum_{\tau = 0}^{t} \gamma_\tau^2 \E{ \|\theta_{\tau}-\theta_{\tau+1/2}\|_{\star}^2}\\ 
    &+36 \frac{1-p}{p} n h(p) L^2 \gamma^2 \sum_{\tau = 0}^{t} \gamma_\tau^2 \left(\E{\|\tilde{F}_{\tau+1/2}-F(\theta_{\tau})\|_{\star}^2} +\E{\|F(\theta_{\tau+1/2})-\tilde{F}_{\tau+1}\|_{\star}^2} 
      \right). 
\end{align}
Rearranging and using $\chi(p,\gamma) > 0$ yields the desired result.
\end{proof}

\iffalse
\begin{theorem} \label{thm:VRmp}
Assume that for all $i$ between 1 and $n$, the gradients $\nabla_i \ell_i$ are $L$-Lipschitz. Let $(\hat{\theta}_t)_{t\in\mathbb{N}}$ be defined as in~\autoref{variance_reduction}. Choose $(\gamma_t)_{t\in\mathbb{N}}$ such that $\gamma_t \leq \gamma$, with $\gamma$ defined as 
\begin{equation} \label{gammadef}
    \gamma \eqdef \min\bigg\{\frac{p^{3/2}}{\sqrt{(1-p)(2-p)}}\frac{1}{12L \sqrt{ n}},\frac{1}{L}\sqrt{\frac{5}{27n+12}}\bigg\},
\end{equation}
where $p \eqdef b/n$.
Then, 
\begin{equation}
\mathrm{Err}_N(\hat{\theta}_t) \leq
\left(\sum_{\tau = 0}^t \gamma_\tau\right)^{-1} \left(
    2 \Omega + 104 n \sigma^2 \sum_{\tau=0}^t \gamma_\tau^2
    \right).
 \end{equation}
\end{theorem}
\fi

\begin{proof}[Proof of \autoref{thm:VRmp}]

We rewrite equation~\eqref{basiceq}:
%\begin{eqnarray*}
\begin{align}
     \label{eq:lemma1_application2}
          &\langle \gamma_{\tau}\tilde{F}_{\tau+1},\theta_{\tau+1/2}-u\rangle + D(u,\theta_{\tau+1})-D(u,\theta_{\tau})\\ 
          &\leq \frac{\gamma_{\tau}^2}{2}\|\tilde{F}_{\tau+1}-\tilde{F}_{\tau+1/2}\|_{\star}^2-\frac{1}{2}\|\theta_{\tau+1/2}-\theta_{\tau}\|^2\\
     &\leq \frac{3\gamma_{\tau}^2}{2}\|\tilde{F}_{\tau+1}-F(\theta_{\tau+1/2})\|_{\star}^2 +\frac{3\gamma_{\tau}^2}{2}\|F(\theta_{\tau})-\tilde{F}_{\tau+1/2}\|_{\star}^2 
      + \frac{3\gamma_{\tau}^2}{2}\|F(\theta_{\tau+1/2})-F(\theta_{\tau})\|_{\star}^2 \\
     &-\frac{1}{2}\|\theta_{\tau+1/2}-\theta_{\tau}\|^2 .
 %\end{eqnarray*}
\end{align}
 We rewrite equation~\eqref{eq:result_lemma23}. We have $\Delta_{\tau} = F(\theta_{\tau+1/2})-\tilde{F}_{\tau+1}$ and $y_{\tau+1} = P_{y_{\tau}}(\gamma_\tau \Delta_{\tau})$ with $y_0 = \theta_0$.
 \begin{align}
    \sum_{\tau=0}^t \langle \gamma_{\tau}\Delta_{\tau},y_{\tau}-u\rangle &\leq D(u,\theta_0) + \sum_{\tau=0}^t \frac{\gamma_{\tau}^2}{2}\| \Delta_{\tau}\|_{\star}^2 \\
    &= D(u,\theta_0) +  \sum_{\tau=0}^t \frac{\gamma_{\tau}^2}{2}\| F(\theta_{\tau+1/2})-\tilde{F}_{\tau+1}\|_{\star}^2. \label{eq:result_lemma2_23}
\end{align}
 Using equation~\eqref{eq:result_lemma2_23} and the analogous equation to~\eqref{eq:unbiased_dotprod_1stproof}, we reach the following inequality:

\begin{align} \label{longeq_32}
      &\E{\sup_{u \in Z} \sum_{\tau=0}^t  \langle \gamma_{\tau}F(\theta_{\tau+1/2}),\theta_{\tau+1/2}-u\rangle} \leq 
         \E{\sup_{u \in Z} 2D(u,\theta_{0})-D(u,\theta_{t+1}) - \sum_{\tau=0}^t\frac{1}{2}\|\theta_{\tau+1/2}-\theta_{\tau}\|_2^2}\\
         &+ \mathbb{E}\left[\sum_{\tau=0}^t 2\gamma_{\tau}^2\|\tilde{F}_{\tau+1}-F(\theta_{\tau+1/2})\|_{\star}^2%\right.\\
         %&\left.\qquad
         +\frac{3\gamma_{\tau}^2}{2}\|F(\theta_{\tau})-\tilde{F}_{\tau+1/2}\|_{\star}^2 %\right.\\ &\left.\qquad 
         + \frac{3\gamma_{\tau}^2}{2}\|F(\theta_{\tau+1/2})-F(\theta_{\tau})\|_{\star}^2\right]
 \end{align}

Taking the definition of $\chi(p,\gamma)$ in \eqref{chidef}, using the definition of $h(p)$ in \eqref{hofp} and rearranging, we obtain
\begin{equation} \label{eq:gamma_constraint1}
    \gamma \leq \frac{p^{3/2}}{\sqrt{(1-p)(2-p)}}\frac{1}{12L \sqrt{n}} \iff \chi(p,\gamma) \geq 3/4 > 0.
\end{equation}
Hence, the assumptions of \autoref{lemma7} are fulfilled.
Starting from the result in \eqref{longeq_32} and using \autoref{lemma7},
\begin{align} \label{longeq_323}
      &\E{\sup_{u \in Z} \sum_{\tau=0}^t  \langle \gamma_{\tau}F(\theta_{\tau+1/2}),\theta_{\tau+1/2}-u\rangle} \\
      &\leq 
         \E{\sup_{u \in Z} 2D(u,\theta_{0})-D(u,\theta_{t})} + 32 n \sigma^2 \sum_{\tau=0}^t \gamma_\tau^2 \\
         &+2 \frac{1-p}{p\chi(p,\gamma)}
         (12L^2+ 36 L^4 \gamma^2) n h(p) \sum_{\tau = 0}^{t} \gamma^2_\tau \E{ \|\theta_{\tau}-\theta_{\tau+1/2} \|_{\star}^2}
         \\
         &+ \frac{3 n L^2}{2} \sum_{\tau = 0}^{t} \gamma^2_\tau \E{ \|\theta_{\tau}-\theta_{\tau+1/2} \|_{\star}^2} - \frac{1}{2}\sum_{\tau = 0}^{t}\E{ \|\theta_{\tau}-\theta_{\tau+1/2} \|_{\star}^2} 
         \\ 
         &\leq 2 \Omega + 104 n \sigma^2 \sum_{\tau=0}^t \gamma_\tau^2\\
         &+ \left((24L^2+ 72 L^4 \gamma^2) n h(p) \gamma^2 \frac{1-p}{p\chi(p,\gamma)} + \frac{3n\gamma^2 L^2}{2} - \frac{1}{2}\right)\sum_{\tau = 0}^{t}\E{ \|\theta_{\tau}-\theta_{\tau+1/2} \|_{\star}^2}.
\end{align}
Recalling the definition of $h(p)$ in Equation~\eqref{hofp}, the conditions $\chi(p,\gamma) \geq 3/4$ and 
\begin{equation} \label{eq:gamma_constraint2} \gamma \leq \frac{1}{L}\sqrt{\frac{5}{27n+12}}, 
\end{equation}
imply
\begin{equation} \label{lastcond}
    (24L^2+ 72 L^4 \gamma^2) n h(p) \gamma^2 \frac{1-p}{p\chi(p,\gamma)} + \frac{3n\gamma^2 L^2}{2} - \frac{1}{2} \leq 0.
\end{equation}
We show this development:
\begin{align}
    &(24L^2+ 72 L^4 \gamma^2) n \frac{2-p}{p^2} \gamma^2 \frac{1-p}{p\chi(p,\gamma)} + \frac{3n\gamma^2 L^2}{2} - \frac{1}{2} \\ 
    &\stackrel{\chi \geq 3/4}{\leq} (24L^2+ 72 L^4 \gamma^2) n \frac{2-p}{p^2} \gamma^2 \frac{4(1-p)}{3p} + \frac{3n\gamma^2 L^2}{2} - \frac{1}{2}\\
    &= \frac{24+72L^2 \gamma^2}{27} (1-\chi(p,\gamma))
    %\frac{(1-p)(2-p)}{p^3} 36n L^2 \gamma^2 
    + \frac{3n\gamma^2 L^2}{2} - \frac{1}{2} \\ 
    &\leq
    \frac{2+6L^2\gamma^2}{9} + \frac{3n\gamma^2 L^2}{2} - \frac{1}{2} \\
    &= \gamma^2 \frac{(9n+4)L^2}{6} - \frac{5}{18}. 
\end{align}
Using Equation~\eqref{lastcond} on \eqref{longeq_323} yields
\begin{equation}
    \E{\sup_{u \in Z} \sum_{\tau=1}^t  \langle \gamma_{\tau}F(\theta_{\tau+1/2}),\theta_{\tau+1/2}-u\rangle} \leq 2 \Omega + 104 n \sigma^2 \sum_{\tau=0}^t \gamma_\tau^2.
\end{equation}
By \autoref{cneplemma}, we conclude 
\begin{equation} \label{eq:vrfirst}
\mathrm{Err}_N(\hat{\theta}_t) \leq
\left(\sum_{\tau = 0}^t \gamma_\tau\right)^{-1} \left(
    2 \Omega + 104 n \sigma^2 \sum_{\tau=0}^t \gamma_\tau^2
    \right)
 \end{equation}
 
 Now we apply \autoref{equal_terms} to equation~\eqref{eq:vrfirst} assuming constant stepsizes. That is, we set $\gamma_\tau = 1$, $A = 2\Omega$ and $B = 104 n \sigma^2$. Using the notation from \autoref{equal_terms}, we get that 
 \begin{equation} \alpha^{*} = \frac{1}{2}\sqrt{\frac{\Omega}{13 n\sigma^2 t}} \end{equation} and the value of the bound at $\alpha^{*}$ is
 \begin{equation} 8\sqrt{\frac{13 \Omega n\sigma^2}{t}}. \end{equation}
 However, $\gamma$ is also subject to the constraints in equations~\eqref{eq:gamma_constraint1} and \eqref{eq:gamma_constraint2}. Namely,
 \begin{equation} \label{eq:gammadef2}
    \gamma \eqdef \min\bigg\{\frac{p^{3/2}}{\sqrt{(1-p)(2-p)}}\frac{1}{12L \sqrt{ n}},\frac{1}{L}\sqrt{\frac{5}{27n+12}}, \frac{1}{2}\sqrt{\frac{\Omega}{13 n\sigma^2 t}}\bigg\},
\end{equation}
If the minimum in equation~\eqref{eq:gammadef2} is not achieved at $\alpha^{*}$ (the third term), it is easy to see that the first term of the bound in equation~\eqref{eq:vrfirst} is larger than the second one, which means that $4\Omega/(\gamma t)$ is a looser bound. We conclude   
 \begin{equation} \label{eq:vrresult}
     \E{ \mathrm{Err}_N(\hat{\theta}_t) } \leq \max \left\{\frac{4\Omega}{\gamma t}, 8\sqrt{\frac{13 \Omega n\sigma^2}{t}} \right\}.
 \end{equation}
 Substituting $\gamma$ for its expression and plugging $t(k) = k/2b$ on equation~\ref{eq:vrresult} we get
 \begin{equation}
     \E{ \mathrm{Err}_N(\hat{\theta}_{t(k)}) } \leq \max \left\{
     \frac{4\Omega}{\frac{\left(\frac{b}{n}\right)^{3/2}}{\sqrt{(1-\frac{b}{n})(2-\frac{b}{n})}}\frac{1}{12L \sqrt{ n}} \frac{k}{2b}}, \frac{4\Omega}{\frac{1}{L}\sqrt{\frac{5}{27n+12}} \frac{k}{2b}}, 8\sqrt{\frac{26\Omega n b \sigma^2}{k}} \right\}. 
 \end{equation}
 The result follows using $1-b/n < 1$ and $2-b/n < 2$.
 \end{proof}
 
 \iffalse
 \begin{corollary} \label{vrcor}
 Set $\gamma$ as in Equation~\eqref{gammadef}. For all $\tau$ between 1 and $t$, let 
 \begin{equation}
     \gamma_\tau = \min \left\{\gamma, \frac{1}{2}\sqrt{\frac{\Omega}{13 n\sigma^2 t}} \right\},
 \end{equation} 
 Then,
 \begin{equation} \label{vrresult}
     \E{ \mathrm{Err}_N(\hat{\theta}_t) } \leq \max \left\{\frac{4\Omega}{\gamma t}, 8\sqrt{\frac{13 \Omega n\sigma^2}{t}} \right\}.
 \end{equation}
 \end{corollary}
 
 %\subsection{Bound in terms of the number of gradient computations}
 %As in~\autoref{complcomp1} and~\autoref{complcomp2}, we write $p=b/n$, $t(c)=c/(2b)$ for Equation~\eqref{vrresult}:
 \begin{corollary} \label{vrcor2}
 Setting $\gamma_\tau$ as in \autoref{vrcor}, the expected convergence rate in terms of the number of gradient computations $k$ is
 \begin{align} 
     \E{ \mathrm{Err}_N(\hat{\theta}_{t(k)}) } %&\leq \max \left\{
     %\frac{4\Omega}{\frac{\left(\frac{b}{n}\right)^{3/2}}{\sqrt{(1-\frac{b}{n})(2-\frac{b}{n})}}\frac{1}{12L \sqrt{ n}} \frac{k}{2b}}, \frac{4\Omega}{\frac{1}{L}\sqrt{\frac{5}{27n+12}} \frac{k}{2b}}, 8\sqrt{\frac{2\Omega n b \sigma^2}{k}} \right\} \\
     &\leq \max \left\{
     \frac{96 \sqrt{2} \Omega L n^2}{\sqrt{b}k}, 8\Omega b L\sqrt{\frac{27n+12}{5}} \frac{1}{k}, 8\sqrt{\frac{26\Omega n b \sigma^2}{k}} \right\}.
 \end{align}
 \end{corollary}
 
 \begin{proof}
 Plugging $t(k) = k/2b$ into~\eqref{vrresult}, we get
 \begin{equation}
     \E{ \mathrm{Err}_N(\hat{\theta}_{t(k)}) } \leq \max \left\{
     \frac{4\Omega}{\frac{\left(\frac{b}{n}\right)^{3/2}}{\sqrt{(1-\frac{b}{n})(2-\frac{b}{n})}}\frac{1}{12L \sqrt{ n}} \frac{k}{2b}}, \frac{4\Omega}{\frac{1}{L}\sqrt{\frac{5}{27n+12}} \frac{k}{2b}}, 8\sqrt{\frac{26\Omega n b \sigma^2}{k}} \right\}. 
 \end{equation}
 If we bound $1-b/n < 1$ and $2-b/n < 2$ (for simplicity) we get the desired result.
 \end{proof}
 \fi

%!TEX root = article.tex

\clearpage

\section{Spectral convergence analysis for non-constrained 2-player games} \label{sec:spectral_conv}
We observed in the experimental section that player sampling tended to be empirically faster than full extra-gradient, and that cyclic sampling had a tendency to be better than random sampling.

To have more insight on this finding, let us study a simplified version of the random two-player quadratic games. Let $A \in \mathbb{R}^{2d\times 2d}$ be formed by stacking the matrices $A_i \in \mathbb{R}^{d \times 2d}$ for each $i\in [d]$. We assume that $A$ is invertible and has a positive semidefinite symmetric part. For $i\in \{1,2\}$, we define the loss of the $i$-th player $\ell_i$ as
\begin{equation}
    \ell_i(\theta^{i}, \theta^{-i}) = {\theta^{i}}^\top A_i \theta - \frac{1}{2} {\theta^{i}}^\top A_{ii} \theta^i, 
\end{equation}
where $A_{ii} \in \mathbb{R}^d$ and $\theta_i \in \mathbb{R}^{d_i}$. Contrary to the random quadratic games setting in~\autoref{sec:quadratic}, we do not enforce here any parameter constraints nor regularization. Therefore, this places us in the extra-gradient (Euclidean) setting. We restrict our attention to the non-noisy regime.

\subsection{Recursion operator for the different sampling schemes}

We study the ``algorithm operator'' $\Aa$ that appears in the recursion $\theta_{k + 4} = \Aa(\theta_{k})$ for the different sampling schemes. $k$ is the number of gradient computations. We consider steps of $4$ evaluation as this corresponds to a single iteration of full extra-gradient.

\paragraph{Full extrapolation and update.} We have $\nabla_i \ell_i(\theta) = A_i \theta$. Since $A$ is invertible, $\theta = 0$ is the only Nash equilibrium. 
The full extra-gradient updates with constant stepsize are
\begin{equation}\label{eq:full_extra_grad_updates}
    \begin{split}
       \begin{cases}
            \theta_{k + 2}^{\full} = \theta_{k}^{\full} - \gamma A \theta_{k}^{\full}, \\
            \theta_{k + 4}^{\full} = \theta_{k }^{\full} - \gamma A \theta_{k + 2}^{\full}.
        \end{cases} 
    \end{split}
\end{equation}
By introducing $\mathcal{A}_{\full}^{(\gamma)} := I - \gamma A + \gamma^2 A^2$, ~\eqref{eq:full_extra_grad_updates} is simply $\theta_{k + 4}^{\full} = \mathcal{A}_{\full}^{(\gamma)} \theta_k^{\full}.$

\paragraph{Cyclic sampling.} Defining the matrices $M_1,M_2 \in \mathbb{R}^{2d \times 2d}$
\begin{equation}
    M_1 = \begin{bmatrix}
        I_{d} & 0_{d \times d} \\
        0_{d\times d} & 0_{d \times d}
    \end{bmatrix}, \quad 
    M_2 = \begin{bmatrix}
        0_{d \times d} & 0_{d \times d} \\
        0_{d\times d} & I_{d}
    \end{bmatrix},
\end{equation}
the updates becomes
\begin{equation}
    \begin{split}
       \begin{cases}
            \theta_{k +1}^{\cyc} = \theta_{k}^{\cyc} - \gamma M_1 A \theta_{k}^{\cyc}, \\
            \theta_{k +2}^{\cyc} = \theta_{k }^{\cyc} - \gamma M_2 A \theta_{k +1}^{\cyc}, \\
            \theta_{k +3}^{\cyc} = \theta_{k+2}^{\cyc} - \gamma M_2 A \theta_{k+2}^{\cyc}, \\
            \theta_{k +4}^{\cyc} = \theta_{k + 2}^{\cyc} - \gamma M_1 A \theta_{k + 3}^{\cyc}. \label{eq:alt_extra}
       \end{cases}.
    \end{split}
\end{equation}
Remark that \eqref{eq:alt_extra} contains two iterations of \autoref{alg:doubly_stoch}; $\theta_{k +1}$ and $\theta_{k +3}$ are extrapolations and $\theta_{k +2}$ and $\theta_{k +4}$ are updates.
Defining
$\mathcal{A}_{ij}^{(\gamma)} := I - \gamma M_i A + \gamma^2 M_i A M_j A$
and $\mathcal{A}_{\text{\cyc}}^{(\gamma)} := \mathcal{A}_{12}^\gamma \mathcal{A}_{21}^{(\gamma)}$, we have $\theta_{k + 4}^{\text{\cyc}} = \mathcal{A}_{\text{\cyc}}^{(\gamma)} \theta_{k}^{\text{\cyc}}$.

\paragraph{Random sampling.} Extra-gradient with random subsampling ($b=1$) rewrites as
\begin{equation}
    \begin{split}
        \begin{cases}
             \theta_{k +1}^{\rdm} = \theta_{k}^{\rdm} - \gamma M_{S_{k+1}} A \theta_{k}^{\rdm}, \\
            \theta_{k +2}^{\rdm} = \theta_{k }^{\rdm} - \gamma M_{S_{k+2}} A \theta_{k +1}^{\rdm}, \\
            \theta_{k +3}^{\rdm} = \theta_{k+2}^{\rdm} - \gamma M_{S_{k+3}} A \theta_{k+2}^{\rdm}, \\
            \theta_{k +4}^{\rdm} = \theta_{k+2}^{\rdm} - \gamma M_{S_{k+3}} A \theta_{k +3}^{\rdm}.
        \end{cases}
    \end{split}.
\end{equation}
where $S_{k+1}, S_{k+2}, S_{k+3}, S_{k+4}$ take values 1 and 2 with equal probability and pairwise are independent. Note that we also enroll two iterations of sampled extra-gradient, as we consider a budget of 4 gradient evaluations. Let $\mathcal{F}_k = \sigma(S_{k'}: k' \leq k)$. For extra-gradient with random player sampling, we can write
\begin{align}
    \E{\theta_{k+4}^{\rdm}} &= \E{\mathcal{A}_{S_{k+1}S_{k+3}}^{(\gamma)} \mathcal{A}_{S_{k+2}S_{k+1}}^{(\gamma)} \theta_{k}^{\rdm}}\\
    &= \E{\E{\mathcal{A}_{S_{k+1}S_{k+3}}^{(\gamma)} \mathcal{A}_{S_{k+2}S_{k+1}}^{(\gamma)} \theta_{k}^{\rdm}\bigg| \mathcal{F}_k}} \\ &= \E{\E{\mathcal{A}_{S_{k+1}S_{k+3}}^{(\gamma)} \mathcal{A}_{S_{k+2}S_{k+1}}^{(\gamma)} \bigg| \mathcal{F}_k} \theta_{k}^{\rdm}}\\
    &= \E{\mathcal{A}_{S_{k+4}S_{k+3}}^{(\gamma)} \mathcal{A}_{S_{k+2}S_{k+1}}^{(\gamma)}} \E{\theta_{k}^{\rdm}}\\
    &=\frac{1}{16} \sum_{j_1, j_2, j_3, j_4 \in \{1,2\}} \mathcal{A}_{j_1 j_2}^{(\gamma)} \mathcal{A}_{j_3 j_4}^{(\gamma)} \E{\theta_{k}^{\rdm}} \\
    &= \frac{1}{16} \left(4I - 2 \gamma A + \gamma^2 A^2 \right)^2 \E{\theta_{k}^{\rdm}} \triangleq \mathcal{A}_{^{\rdm}}^{(\gamma)} \E{\theta_{k}^{\rdm}}
\end{align}

\subsection{Convergence behavior through spectral analysis}

The following well-known result proved by \citet{gelfand1941normierte} relates matrix norms with spectral radii.
\begin{theorem}[Gelfand's formula]
    Let $\|\cdot\|$ be a matrix norm on $\mathbb{R}^n$ and let $\rho(A)$ be the spectral radius of $A \in \mathbb{R}^n$ (the maximum absolute value of the eigenvalues of $A$). Then,
    \begin{equation}
        \lim_{t\to \infty} \|A^t\|^{1/t} = \rho(A).
    \end{equation}
\end{theorem}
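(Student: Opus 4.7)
The plan is to sandwich $\|A^t\|^{1/t}$ between $\rho(A)$ and itself by proving the two inequalities $\rho(A) \leq \liminf_{t\to\infty} \|A^t\|^{1/t}$ and $\limsup_{t\to\infty} \|A^t\|^{1/t} \leq \rho(A)$; combined with $\liminf \leq \limsup$, these force convergence to $\rho(A)$. A useful preliminary observation is that, because all matrix norms on $\RR^{n\times n}$ are equivalent, it suffices to prove the identity for one convenient norm (say the operator norm induced by $\|\cdot\|_2$): if $c\,\|B\|_{\mathrm{op}} \leq \|B\| \leq C\,\|B\|_{\mathrm{op}}$ for all $B$, then $(c\,\alpha_t)^{1/t}$ and $(C\,\alpha_t)^{1/t}$ share the same limit for any positive sequence $\alpha_t$.

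For the lower bound I would use eigenvalues directly. Viewing $A$ as acting on $\CC^n$, for any eigenpair $(\lambda,v)$ with $v \neq 0$ we have $A^t v = \lambda^t v$, so submultiplicativity gives $|\lambda|^t \,\|v\| \leq \|A^t\|\,\|v\|$, i.e.\ $|\lambda| \leq \|A^t\|^{1/t}$ for every $t$. Maximizing over the (finite) spectrum of $A$ yields the pointwise bound $\rho(A) \leq \|A^t\|^{1/t}$, which is stronger than the required $\liminf$ inequality.

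For the upper bound I would pass to the Jordan normal form $A = PJP^{-1}$ with $J = \mathrm{diag}(J_1,\ldots,J_r)$, each Jordan block $J_i = \lambda_i I + N_i$ with $N_i$ nilpotent and $|\lambda_i| \leq \rho(A)$. Expanding $J_i^t$ by the binomial theorem and exploiting $N_i^s = 0$ beyond the block size gives $\|J_i^t\| \leq q(t)\,|\lambda_i|^t$ for some polynomial $q$ independent of $t$. Summing over blocks and conjugating back produces $\|A^t\| \leq \kappa(P)\,q(t)\,\rho(A)^t$ where $\kappa(P) = \|P\|\,\|P^{-1}\|$; taking $t$-th roots and sending $t\to\infty$ gives $\limsup_t \|A^t\|^{1/t} \leq \rho(A)$, since $\bigl(\kappa(P)\,q(t)\bigr)^{1/t} \to 1$.

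The main obstacle is the upper bound: the lower bound is a one-line eigenvalue estimate, but the upper bound requires knowing that $\|A^t\|$ cannot grow asymptotically faster than $\rho(A)^t$ up to polynomial corrections, which is where the finite-dimensional structure really enters. An attractive alternative to the Jordan argument, which avoids explicit block computations, is the resolvent route: the map $z \mapsto (zI - A)^{-1}$ is holomorphic on $\{z\in\CC : |z| > \rho(A)\}$ and admits the Laurent expansion $\sum_{k\geq 0} z^{-k-1} A^k$ there, so the root test forces $\limsup_k \|A^k\|^{1/k} \leq |z|$ for every $|z| > \rho(A)$, and letting $|z| \downarrow \rho(A)$ concludes. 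The naive estimate $\|A^t\| \leq \|A\|^t$ gives only $\limsup \leq \|A\|$, which can be strictly larger than $\rho(A)$ (e.g.\ nontrivial nilpotents), confirming that some nontrivial structural input is unavoidable.
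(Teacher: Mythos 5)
The paper does not actually prove this statement: it invokes Gelfand's formula as a classical result, citing \citet{gelfand1941normierte}, and immediately applies it in Corollary~\ref{cor:spectral}. So there is no in-paper proof to compare against; your proposal supplies a self-contained argument where the authors rely on a reference. That argument is correct and is the standard textbook proof: the reduction to a single norm via equivalence of norms on the finite-dimensional space $\RR^{n\times n}$ is legitimate because the constants $c^{1/t}$, $C^{1/t}$ tend to $1$; the lower bound $\rho(A)\leq \|A^t\|^{1/t}$ from an eigenpair is valid for the induced $2$-norm you reduced to (viewing $A$ on $\CC^n$, as you note); and the Jordan-form upper bound $\|A^t\|\leq \kappa(P)\,q(t)\,\rho(A)^t$ with polynomial $q$ gives the matching $\limsup$. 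The only cosmetic gap is the case of Jordan blocks with $\lambda_i=0$ when $\rho(A)>0$ (and the wholly nilpotent case $\rho(A)=0$), where the bound $\|J_i^t\|\leq q(t)|\lambda_i|^t$ as written degenerates; this is handled by noting such blocks vanish for $t$ at least the block size, so the asymptotic estimate is unaffected. Your resolvent alternative is equally valid and arguably cleaner, since it avoids the Jordan decomposition entirely; either route would serve as a legitimate proof of the theorem the paper takes for granted.
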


In our case, we thus have the following results, that describes the expected rate of convergence of the last iterate sequence ${(\theta_t)}_t$ towards~$0$. It is governed by the spectral radii $\rho(\Aa^{(\eta)})$ whenever the later is strictly lower than $1$.

\begin{corollary} \label{cor:spectral}
The behavior of $\theta_t^{\full}$, $\theta_t^{\cyc}$ and $\theta_t^{\rdm}$ is related to the corresponding operators by the following expressions:
\begin{align}
    \lim_{t \to \infty} \left( \sup_{\theta_0^{\full} \in \mathbb{R}^{2d}} \frac{{\|\theta_t^{\full} \|}_2}{{\|\theta_0^{\full} \|}_2} \right)^{1/t} &= \rho\left(\mathcal{A}_{\full}^{(\gamma)}\right), \\
    \lim_{t\to \infty} \left( \sup_{\theta_0^{\cyc} \in \mathbb{R}^{2d}} \frac{{\|\theta_{t}^{\cyc} \|}_2}{{\|\theta_0^{\cyc} \|}_2} \right)^{1/t} &=
    \rho\left(\mathcal{A}_{\cyc}^{(\gamma)}\right), \\
    \lim_{t \to \infty} \left( \sup_{\theta_0^{\rdm} \in \mathbb{R}^{2d}} \frac{{\|\E{\theta_{t}^{\rdm}} \|}_2}{{\|\theta_0^{\rdm} \|}_2} \right)^{1/t} &=
    \rho\left(\mathcal{A}_{\rdm}^{(\gamma)}\right).
\end{align}
\end{corollary}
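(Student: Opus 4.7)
The plan is to recognize that for each of the three sampling schemes, iterating the one-block recursion gives $\theta_t = \Aa^t \theta_0$ (deterministically in the full and cyclic cases, on expectation in the random case), and then apply Gelfand's formula in the operator 2-norm. The three statements share essentially the same skeleton, so I would treat them in parallel.

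First, for the deterministic cases: reindexing $t$ to count applications of the 4-step block operator, the derivations preceding the corollary give $\theta_t^{\full} = \bigl(\Aa_{\full}^{(\gamma)}\bigr)^t \theta_0^{\full}$ and $\theta_t^{\cyc} = \bigl(\Aa_{\cyc}^{(\gamma)}\bigr)^t \theta_0^{\cyc}$. Hence
\begin{equation}
\sup_{\theta_0 \in \RR^{2d}} \frac{\|\theta_t^{\full}\|_2}{\|\theta_0^{\full}\|_2} = \bigl\|\bigl(\Aa_{\full}^{(\gamma)}\bigr)^t\bigr\|_{\mathrm{op}},
\end{equation}
where $\|\cdot\|_{\mathrm{op}}$ is the spectral operator norm. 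Taking the $t$-th root and sending $t \to \infty$, Gelfand's formula yields $\rho(\Aa_{\full}^{(\gamma)})$. The identity for cyclic sampling is obtained verbatim after replacing $\Aa_{\full}^{(\gamma)}$ by $\Aa_{\cyc}^{(\gamma)}$.

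For the random sampling case, I would iterate the already-derived one-block identity $\E{\theta_{k+4}^{\rdm}} = \Aa_{\rdm}^{(\gamma)} \E{\theta_{k}^{\rdm}}$ at the level of expectations. Using the tower property together with the pairwise independence of the $S_k$ across disjoint blocks of four gradient evaluations (so that the next block's random operator is independent of $\theta_k^{\rdm}$), induction on $t$ gives $\E{\theta_t^{\rdm}} = \bigl(\Aa_{\rdm}^{(\gamma)}\bigr)^t \theta_0^{\rdm}$, since $\theta_0^{\rdm}$ is deterministic. The same operator-norm identity and Gelfand's formula then deliver the third claim.

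There is no real obstacle: the only minor subtlety is justifying the iterated expectation step for the random scheme, which is a direct consequence of the independence structure spelled out in the paragraph preceding the corollary. Everything else is a textbook application of Gelfand's formula, together with the elementary fact that the operator norm equals the supremum ratio $\|A\x\|_2/\|\x\|_2$ over nonzero $\x$.
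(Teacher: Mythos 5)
Your proof is correct and follows essentially the same route as the paper's: identify the supremum ratio as the operator norm of $\bigl(\Aa^{(\gamma)}\bigr)^t$ and apply Gelfand's formula, treating the three schemes in parallel. You are slightly more explicit than the paper about the reindexing of $t$ and about justifying $\E{\theta_t^{\rdm}} = \bigl(\Aa_{\rdm}^{(\gamma)}\bigr)^t \theta_0^{\rdm}$ via the tower property, but the argument is the same.
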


\begin{proof}
The proof is analogous for the three cases. Using the definition of operator norm,
\begin{equation}
    \lim_{t \to \infty} \left( \sup_{\theta_0^{\full} \in \mathbb{R}^{2d}} \frac{\|\theta_t^{\full} \|}{\|\theta_0^{\full} \|} \right)^{1/t} = \lim_{t \to \infty} \left( \sup_{\theta_0^{\full} \in \mathbb{R}^{2d}} \frac{\left\|{\left(\mathcal{A}_{\full}^{(\gamma)}\right)}^{t} \theta_0^{\full} \right\|}{\|\theta_0^{\full} \|} \right)^{1/t} = \lim_{t \to \infty} \left\|{\left(\mathcal{A}_{\full}^{(\gamma)}\right)}^{t} \right\|^{1/t},% = \rho\left(\mathcal{A}_{\full}^{(\gamma)}\right)
\end{equation}
which is equal to $\rho\left(\mathcal{A}_{\full}^{(\gamma)}\right)$ by Gelfand's formula.
\end{proof}

\subsection{Empirical distributions of the spectral radii}

Comparing the cyclic, random and full sampling schemes thus requires to compare the values
\begin{equation}\label{eq:alg_spectrum}
    \Aa^\star_{\full} \triangleq \min_{\gamma \in \RR^+} \rho(\Aa_{\full}^{(\gamma)}),\quad 
    \Aa^\star_{\cyc} \triangleq \min_{\gamma \in \RR^+} \rho(\Aa_{\cyc}^{(\gamma)}), \quad
    \Aa^\star_{\rdm} \triangleq \min_{\gamma \in \RR^+} \rho(\Aa_{\rdm}^{(\gamma)}),
\end{equation}
for all matrix games with positive payoff matrix $A \in \RR^{2d \times 2d}$. This is not tractable in closed form. However, we may study the distribution of these values for random games.
\paragraph{Experiment.}We sample matrices $A$ in $\RR^{2 d \times 2d}$ (with $d = 3$) as the weighted sum of a random positive definite matrix $A_{\text{sym}}$ and of a random skew matrix $A_{\text{skew}}$. We refer to \autoref{app:experiments} for a detailed description of the matrix sampling method. We vary the weight $\alpha \in [0, 1]$ of the skew matrix and the lowest eigenvalue $\mu$ of the matrix $A_{\text{sym}}$. We sample $300$ different games and compute $\Aa^{(\eta)}$ on a grid of step sizes $\eta$, for the three different methods. We thus estimate the best algorithmic spectral radii defined in \eqref{eq:alg_spectrum}.

\paragraph{Results and interpretation.} The distributions of algorithm spectral radii are presented in \autoref{fig:radius}. We observe that the algorithm operator associated with sampling one among two players at each update is systematically more contracting than the standard extra-gradient algorithm operator, providing a further insight for the faster rates observed in \autoref{sec:quadratic}, \autoref{fig:quadratic_convergence}. Radius tend to be smaller for cyclic sampling than random sampling, in most problem geometry. This is especially true in well conditioned problem (high $\mu$), little-skew problems (skewness $\alpha < .5$) and completely skew problems $\alpha = 1$. The later gives insights to explain the good performance of cyclic player sampling for GANs (\autoref{sec:gans}), as those are described by skew games (zero-sum notwithstanding the discriminator penalty in WP-GAN).

On the other hand, we observe that radii are more spread using cyclic sampling for intermediary skew problerm ($\alpha = .75$), hinting that worst-case rates may be better for random sampling.

\begin{figure}
    \centering
    \includegraphics[width=\textwidth]{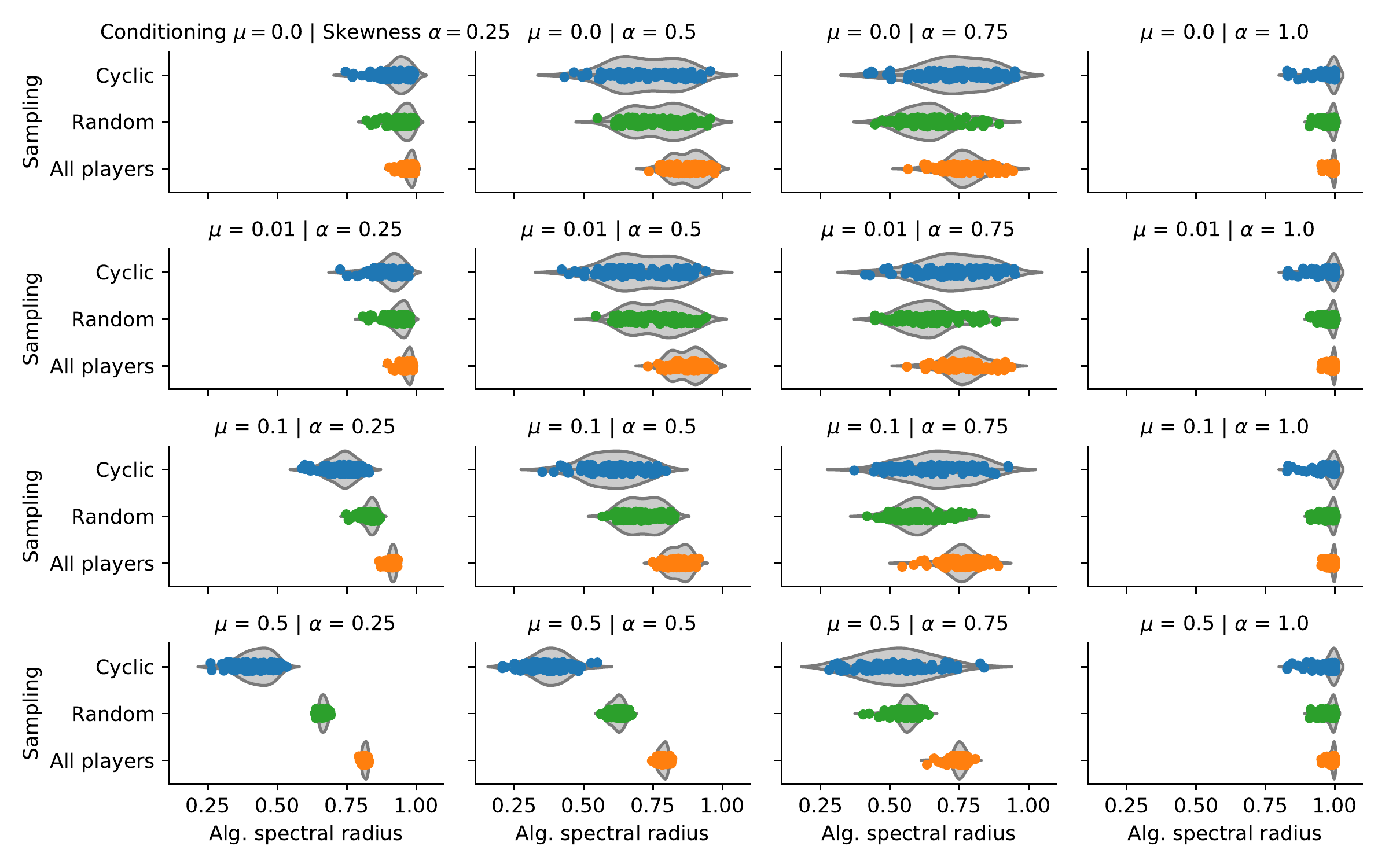}
    \caption{Spectral radii distribution of the algorithmic operator associated to doubly-stochastic and full extra-gradient, in the non-constrained bi-linear two-player game setting, for various conditioning and skewness. Random and cyclic sampling yields lower radius (hence faster rates) for most problem geometry. Cyclic sampling outperforms random sampling in most settings, especially for better conditioned problems.}\label{fig:radius}
\end{figure}

%\begin{equation}
%    \lim_{t\to \infty} \left( \sup_{\theta_0^{\text{alt}} \in \mathbb{R}^{2d}} \frac{\|\theta_{2t}^{\text{alt}} \|}{\|\theta_0^{\text{alt}} \|} \right)^{1/t} = \lim_{t\to \infty} \left( \sup_{\theta_0^{\text{alt}} \in \mathbb{R}^{2d}} \frac{\left\|{\left(\mathcal{A}_{\text{alt}}^{(\gamma)}\right)}^{t} \theta_0^{\text{alt}} \right\|}{\|\theta_0^{\text{full}} \|} \right)^{1/t} = \lim_{t\to \infty} \left\|{\left(\mathcal{A}_{\text{alt}}^{(\gamma)}\right)}^{t} \right\|^{1/t} = \rho\left(\mathcal{A}_{\text{alt}}^{(\gamma)}\right)
%\end{equation}

%\begin{align}
%    \lim_{t \to \infty} \left( \sup_{\theta_0^{\text{rand}} \in \mathbb{R}^{2d}} \frac{\|\E{\theta_{2t}^{\text{rand}}} \|}{\|\theta_0^{\text{rand}} \|} \right)^{1/t} &= \lim_{t \to \infty} \left( \sup_{\theta_0^{\text{rand}} \in \mathbb{R}^{2d}} \frac{\left\|{\left(\mathcal{A}_{\text{rand}}^{(\gamma)}\right)}^{t} \theta_0^{\text{rand}} \right\|}{\|\theta_0^{\text{rand}} \|} \right)^{1/t} \\ &= \lim_{t \to \infty} \left\|{\left(\mathcal{A}_{\text{rand}}^{(\gamma)}\right)}^{t} \right\|^{1/t} = \rho\left(\mathcal{A}_{\text{rand}}^{(\gamma)}\right)
%\end{align}

%!TEX root = article.tex

\clearpage
\section{Experimental results and details}\label{app:experiments}

We provide the necessary details for reproducing the  experiments of \autoref{sec:apps}.

\subsection{Quadratic games}

\paragraph{Generation of random matrices.}

We sample two random Gaussian matrix $G$ and $F$ in $\RR^{nd \times nd}$, where each coefficient $g_{ij}, f_{ij} \sim \Nn(0, 1)$ is sampled independently.
We form a symmetric matrix $A_{\text{sym}} = \frac{1}{2} (G + G^T)$, and a skew matrix $A_{\text{skew}} = \frac{1}{2} (F - F^T)$.
 To make $A_{\text{sym}}$ positive definite, we compute its lowest eigenvalue $\mu_0$, and update $A_{\text{sym}} \gets A_{\text{sym}} + (\mu - \mu_0) I_{nd \times nd}$, where $\mu$ regulates the conditioning of the problem and is set to $0.01$.
 We then form the final matrix $A = (1- \alpha) A_{\text{sym}} + \alpha A_{\text{skew}}$, where $\alpha$ is a parameter between $0$ and $1$, that regulates the skewness of the game.

 \begin{figure}[t]
    \centering
    \includegraphics[width=\textwidth]{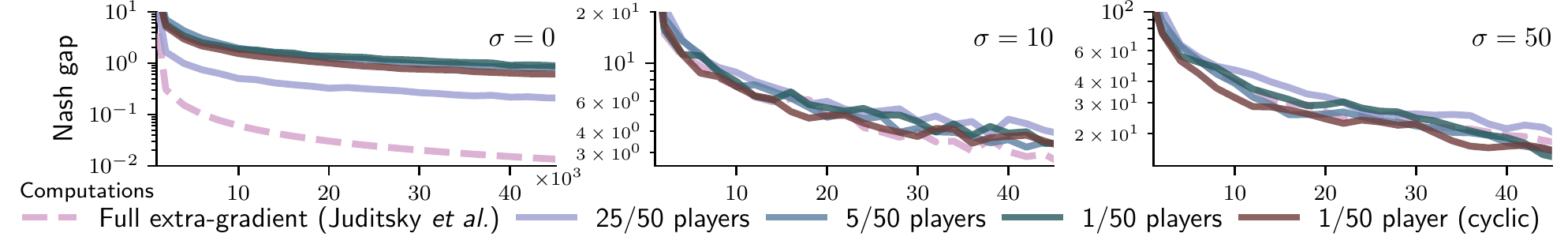}
    \caption{50-player completely skew smooth game with increasing noise (sampling with variance reduction). In the non-noisy setting, player sampling reduces convergence speed. On the other hand, it provides a speed-up in the high noise regime.}\label{fig:skew}
\end{figure}

\paragraph{Parameters for quadratic games.}\autoref{fig:quadratic_convergence} compare rates of convergence for doubly-stochastic extra-gradient and extra-gradient, for increasing problem complexity. Used parameters are reported in \autoref{table:quad_parameters}. Note that the conclusion reported in \autoref{sec:quadratic} regarding the impact of noise and the impact of cyclic sampling holds for all configurations we have tested; we designed increasingly complex experiments for concisely showing the efficiency and limitations of doubly-stochastic extra-gradient.

\begin{table}[b]
    \caption{Parameters used in \autoref{fig:quadratic_convergence} for increasing problem complexity. \label{table:quad_parameters}}
    \centering
    \begin{tabular}{lclccc}
        \toprule
        Figure & Players \# & Exp. & Skewness $\alpha$ & Noise $\sigma$ & Reg. $\lambda$ \\ 
        \midrule
        \autoref{fig:quadratic_convergence_5}    & 5 & Smooth, no-noise & $0.9$ & $0$ & $0$ \\
        &  & Smooth, noisy & $0.9$ & $1$ & 0. \\
        &  & Skew, non-smooth, noisy & $1.$ & $1$ & $2\cdot 10^2$ \\
        \midrule
        \autoref{fig:quadratic_convergence_50}    & 50 & Smooth, no-noise & $0.9$ & $0$ & $0$ \\
        &  & Non-smooth, noisy & $0.9$ & $1$ & $2\cdot 10^{-2}$ \\
        &  & Skew, non-smooth, noisy & $1.$ & $1$ & $2\cdot 10^{-2}$ \\
        \midrule
        \autoref{fig:quadratic_convergence_noise} &  50 & Smooth, skew, lowest-noise  & $0.95$ & $1$ & $0.$ \\
        &  &  & $0.95$ & $10$ & $0.$ \\
        &  & Smooth, skew, highest-noise & $0.95$ & $100$ & $0.$ \\
        \midrule
        \autoref{fig:skew} &  50 & Smooth, skew, no-noise  & $1$ & $0$ & $0.$ \\
        &  &  & $1$ & $10$ & $0.$ \\
        &  & Smooth, skew, highest-noise & $1$ & $50$ & $0$ \\
        \midrule
    \end{tabular}
\end{table}

\paragraph{Grids.}For each experiment, we sampled $5$ matrices $(A_i)_i$ with skewness parameter~$\alpha$. We performed a grid-search on learning rates, setting $\eta \in \{10^{-5}, \cdots, 1\}$, with $32$ logarithmically-spaced values, making sure that the best performing learning rate is always strictly in the tested range.

\paragraph{Limitations in skew non-noisy games.}As mentioned in the main section, player sampling can hinder performance in completely skew games ($\alpha = 1$) with non-noisy losses. Those problems are the hardest and slower to solve. They corresponds to \textit{fully adversarial} settings, where sub-game between each pair is zero-sum. We illustrate this finding in \autoref{fig:skew}, showing how the performance of player sampling improves with noise. We emphasize that the non-noisy setting is not relevant to machine learning or reinforcement learning problems.

\subsection{Generative adversarial networks}

\paragraph{Models and loss.} We use the Residual network architecture for generator and discriminator proposed by~\citet{gidel2018variational}. We use a WGAN-GP loss, with gradient penalty $\lambda = 10$. As advocated by \cite{gidel2018variational}, we use a 10 times lower stepsize for the generator. We train the generator and discriminator using the Adam algorithm~\citep{kingma_adam_2014}, and its straight-forward extension proposed by~\cite{gidel2018variational}.

\paragraph{Grids.}We perform $5 \cdot 10^5$ generator updates. We average each experiments with $5$ random seeds, and select the best performing generator learning rate $\eta \in \{2 \cdot 10^{-5},5 \cdot 10^{-5}, 8 \cdot 10^{-5}, 1 \cdot 10^{-4}, 2 \cdot 10^{-4}\}$, which turned out to be $5 \cdot 10^{-5}$ for both subsampled and non-subsampled extra-gradient.

% \subsection{Nash equilibrium problem for \autoref{fig:abstract}}

% To generate \autoref{fig:abstract}, we use the simple losses
% %
% \begin{align}
%     \ell_1(x, y) = x \log(1 + \exp(a_{11} x)) - x \log(1 + \exp( a_{12} y)), \\
%     \ell_2(x, y) = y \log(1 + \exp(a_{22} y)) - y \log(1 + \exp(a_{21} x)),
% \end{align}
% %
% where $A = \begin{pmatrix}
%     a_{11} & a_{12} \\ a_{22} & a_{21}
% \end{pmatrix}
% \in \RR^{2 \times 2}$ is a random positive matrix, and player $1$ and $2$ actuate $x$ and $y$, respectively.

\end{document}